%% file: main.tex
\documentclass[11pt]{article}
\usepackage[utf8]{inputenc} 
\usepackage[T1]{fontenc}    

\input{command.tex}

\usepackage{enumerate}

\newtheorem{claim}{Claim}[section]
\newtheorem{lemma}[claim]{Lemma}
\newtheorem{assumption}{Assumption}

\newtheorem{theorem}{Theorem}[section]

\newtheorem{proposition}{Proposition}[section]

\newtheorem{corollary}{Corollary}[section]

\usepackage{xr}

\title{Online Inference for Quantile Temporal Difference  Learning in Distributional Reinforcement Learning}

\author{
Zijie Cheng\thanks{School of Mathematical Sciences, Peking University; email: \texttt{xmwsbsj@stu.pku.edu.cn}.} \and
Yang Peng\thanks{Yau Mathematical Sciences Center, Tsinghua University; email: \texttt{yang-peng@mail.tsinghua.edu.cn}.} \and
Zhihua Zhang\thanks{School of Mathematical Sciences, Peking University; email: \texttt{zhzhang@math.pku.edu.cn}.}
}

\begin{document}
\maketitle
\begin{abstract}
In this paper, we study how to perform statistical inference for quantile temporal difference learning (QTD) in distributional reinforcement learning. We first establish functional central limit theorems for both synchronous and asynchronous QTD under a generative model assumption. Moreover, we go beyond the generative model setting and prove a functional central limit theorem for QTD with Markovian data. These theorems show that the partial-sum process of QTD converge weakly to a rescaled Brownian motion. 
We then provide online inference methods. Based on random scaling, the inference procedure constructs an asymptotically pivotal statistic for inference by using the information along the whole QTD path. Meanwhile, the proposed statistic can be computed online without storing the entire trajectory of QTD iterates. This substantially reduces the memory requirement and enables efficient statistical inference in distributional reinforcement learning. 
\end{abstract}
\section{Introduction}\label{Section:intro}
\input{intro}
\section{Preliminaries}\label{Section:preliminary}
\input{preliminary}

\section{Main Results}\label{Section:analysis}
\input{analysis}
\section{Proof Outlines}\label{Section:proof_outline}
\input{proof_outline}
\section{Conclusions}\label{Section:conclusion}
\input{conclusion}

\bibliography{ref}
\bibliographystyle{abbrvnat}
\newpage

\appendix
\section{Omitted Proofs in Section~\ref{Section:analysis}
}\label{Appendix_omitted_proofs_3}
\input{Appendix_omitted_proofs_3}
\section{Omitted Proofs in Section~\ref{Section:proof_outline}
}\label{Appendix_omitted_proofs_4}
\input{Appendix_omitted_proofs_4}

\section{Technical Lemmas}\label{Appendix_technical_lemmas}
\input{Appendix_technical_lemma}

\end{document}

%% file: command.tex
\usepackage{geometry}
\usepackage{setspace}
\usepackage{amsmath, amssymb, amsfonts, bm, mathtools, mathrsfs}
\usepackage{amsthm}
\usepackage[dvipsnames]{xcolor}
\definecolor{darkblue}{rgb}{0,0,.5}
\usepackage{graphicx}
\usepackage{subfigure}
\usepackage[numbers]{natbib}
\usepackage[colorlinks=true,allcolors=darkblue]{hyperref}       
\usepackage{url}            
\usepackage{booktabs}       
\usepackage{amsfonts}       
\usepackage{nicefrac}       
\usepackage{microtype}      

\allowdisplaybreaks

\usepackage{float}
\usepackage{multirow}
\usepackage{footnote}
\usepackage{dsfont}
\usepackage{mathabx}

\usepackage{algorithm}
\usepackage{algorithmic}
\usepackage{nicefrac}
\usepackage{tikz}
\usepackage{overpic}

\usepackage{dsfont}
\usepackage{hyperref}
\usepackage[capitalize]{cleveref}
\usepackage{crossreftools}

\makeatletter
\newcommand*{\rom}[1]{\expandafter\@slowromancap\romannumeral #1@}
\makeatother

\newcommand{\ind}{\mathds{1}}

\newcommand{\brc}[1]{\left\{{#1}\right\}}
\newcommand{\prn}[1]{\left({#1}\right)} 
\newcommand{\brk}[1]{\left[{#1}\right]} 
\newcommand{\norm}[1]{\left\|{#1}\right\|} 
\newcommand{\abs}[1]{\left|{#1}\right|} 
\newcommand{\what}[1]{\widehat{#1}}
\newcommand{\wtilde}[1]{\widetilde{#1}}

\newcommand{\rd}{\mathrm{d}}

\newcommand{\<}{\langle} 
\renewcommand{\>}{\rangle}

\def\gA{{\mathcal{A}}}

\def\gC{{\mathcal{C}}}

\def\gF{{\mathcal{F}}}

\def\gH{{\mathcal{H}}}
\def\gI{{\mathcal{I}}}

\def\gM{{\mathcal{M}}}

\def\gP{{\mathcal{P}}}

\def\gS{{\mathcal{S}}}
\def\gT{{\mathcal{T}}}

\def\gV{{\mathcal{V}}}

\def\sP{{\mathscr{P}}}

\def\bA{{\bm{A}}}

\def\bB{{\bm{B}}}

\def\bD{{\bm{D}}}
\def\bh{{\bm{h}}}
\def\bH{{\bm{H}}}

\def\bw{{\bm{w}}}

\def\bx{{\bm{x}}}
\def\bI{{\bm{I}}}

\def\bG{{\bm{G}}}

\def\bU{{\bm{U}}}

\def\bq{{\bm{q}}}
\def\bPi{{\bm{\Pi}}}
\def\bDelta{{\bm{\Delta}}}
\def\brho{{\bm{\rho}}}
\def\bphi{{\bm{\phi}}}

\def\bpsi{{\bm{\psi}}}
\def\bzeta{{\bm{\zeta}}}
\def\bGamma{{\bm{\Gamma}}}
\def\blambda{{\bm{\lambda}}}
\def\bLamb{{\bm{\Lambda}}}
\def\bSigma{{\bm{\Sigma}}}
\def\btheta{{\bm{\theta}}}

\def\bv{{\bm{v}}}

\def\btheta{{\bm{\theta}}}
\def\bV{{\bm{V}}}

\def\bZ{{\bm{Z}}}
\def\bQ{{\bm{Q}}}
\def\bD{{\bm{D}}}
\def\bz{{\bm{z}}}
\def\bh{{\bm{h}}}

\def\bZ{{\bm{Z}}}
\def\bM{{\bm{M}}}
\def\beps{{\bm{\epsilon}}}
\def\bXi{{\bm{\Xi}}}

\def\diag{{\operatorname{diag}}}

\newcommand{\down}[1]{\lfloor #1 \rfloor}

\def\RB{{\mathbb R}}
\def\EB{{\mathbb E}}

\def\PB{{\mathbb P}}

\def\NB{{\mathbb N}}

\makeatletter
\long\def\@makecaption#1#2{
  \vskip 0.8ex
  \setbox\@tempboxa\hbox{\small {\bf #1:} #2}
  \parindent 1.5em  
  \dimen0=\hsize
  \advance\dimen0 by -3em
  \ifdim \wd\@tempboxa >\dimen0
  \hbox to \hsize{
    \parindent 0em
    \hfil 
    \parbox{\dimen0}{\def\baselinestretch{0.96}\small
      {\bf #1.} #2
    } 
    \hfil}
  \else \hbox to \hsize{\hfil \box\@tempboxa \hfil}
  \fi
}
\makeatother

\newcommand{\cov}{\mathsf{Cov}}

%% file: intro.tex
Distributional reinforcement learning (DRL) \citep{morimura2010nonparametric,bellemare2017distributional} extends the classical reinforcement learning framework by modeling the entire distribution of the return rather than only its expectation. 
This distributional perspective provides a richer characterization of the uncertainty associated with learning agents and has led to a series of successful applications in practice~\citep{bellemare2020autonomous,bodnar2019quantile}. 
Since return distributions are generally infinite-dimensional objects, practical DRL methods rely on finite-dimensional approximations. 
Among these approaches, quantile representations have become one of the most widely used paradigms, originating from~\citet{dabney2018distributional}, which proposed the quantile temporal difference learning (QTD). 
By approximating return distributions through a collection of quantiles, quantile-based methods provide a flexible representation while retaining important distributional information, leading to algorithms such as quantile regression deep Q-network (QR-DQN) \citep{dabney2018distributional} and implicit quantile network (IQN) \citep{dabney2018implicit}.

Despite the empirical success of QTD and its variants, the statistical understanding of QTD remains limited. In practical reinforcement learning applications, the transition dynamics and reward distributions are unknown, and the learned quantile representations involve statistical uncertainty due to finite samples. Existing theoretical studies of QTD have mainly focused on convergence and consistency results, while its statistical properties, including asymptotic sampling fluctuations and valid uncertainty quantification, remain largely unexplored. This motivates a fundamental question: can QTD support statistically valid estimation and inference for the quantile representation of return distributions?

Answering these questions for QTD is challenging due to the unique structure of quantile projected Bellman operators. In categorical distributional reinforcement learning, the support locations are fixed and the parametrization is described by a finite-dimensional probability vector. As a result, the corresponding categorical-projected Bellman update acts linearly on the probability vectors, which makes it amenable to tools developed for linear stochastic approximation. 
In contrast, quantile representations parameterize return distributions through inverse distribution functions. Consequently, QTD leads to a nonlinear and non-smooth stochastic approximation recursion, where the stochastic updates are generated by indicator-type transformations of the quantile parameters. Therefore, the resulting non-smooth dependence across different quantile levels prevents a direct application of existing stochastic approximation inference theory and requires a dedicated analysis of the asymptotic behavior of QTD iterates.

In this paper, we develop a statistical inference theory for QTD, a representative quantile-based distributional reinforcement learning algorithm. 
We establish functional central limit theorems and asymptotic inference procedures for QTD, providing a unified understanding of its statistical behavior.

\subsection{Our Contributions}
We study distributional policy evaluation with quantile parametrization in a tabular $\gamma$-discounted Markov decision process. For a given policy $\pi$ and quantile level $m$, let $\bm\eta_m$ denote the unique fixed point of the projected distributional Bellman equation $\bm\eta=\bPi_m\gT^\pi\bm\eta$, where $\bPi_m$ is the quantile projection operator and $\gT^\pi$ is the distributional Bellman operator. Denote by $\btheta_m$ the quantile parameter corresponding to $\bm\eta_m$. Our goal is to characterize the asymptotic behavior of QTD iterates around $\btheta_m$ and to develop valid online inference procedures. 

Our main contributions are summarized as follows. Assuming access to a generative model, we establish functional central limit theorems in Theorem~\ref{thm:syn_QTD_FCLT} and~\ref{thm:asyn_QTD_FCLT} for the iterates of both synchronous and asynchronous QTD. Furthermore, we prove a functional central limit theorem in Theorem~\ref{thm:markov_QTD_FCLT} for QTD with Markovian data. In particular, after suitable normalization, the partial-sum processes of the iterates converge weakly to rescaled Brownian motions. We also provide explicit characterizations of the corresponding asymptotic covariance matrices. Building on the functional central limit theorems, we develop inference procedures based on random scaling for QTD in Theorem~\ref{thm:QTD_random_scaling}. We construct an asymptotically pivotal statistic, avoiding the need to explicitly estimate the asymptotic covariance matrices. Moreover, the quantities in the statistic can be updated recursively along the QTD trajectory, so the procedures can be implemented online without storing the entire sequence of iterates.

\subsection{Related Works}

\subsubsection{Distributional Reinforcement Learning.}
Distributional reinforcement learning studies the entire distribution of the return rather than only its expectation \citep{bellemare2017distributional}. 
Several finite-dimensional representations have been developed, including categorical \citep{bellemare2017distributional}, quantile \citep{dabney2018distributional}, and generative-model-based representations \citep{freirich2019distributional,doan2018gan}. 
Among them, quantile representations have become a widely used paradigm. Starting from quantile temporal-difference learning \citep{dabney2018distributional}, subsequent works developed more flexible parametrizations, including implicit quantile networks \citep{dabney2018implicit} and fully parameterized quantile functions \citep{yang2019fully}.

The theoretical literature on distributional reinforcement learning has mainly focused on convergence and finite-sample guarantees. 
For categorical representations, \citet{rowland2018analysis} established asymptotic convergence of categorical temporal-difference learning, while \citet{peng2024statistical} derived non-asymptotic convergence rates and sample complexity guarantees. Furthermore, \citet{peng2025finite} characterized its finite-sample behavior under linear function approximation. 
For quantile-based methods, \citet{rowland2023analysis} established asymptotic convergence of quantile temporal-difference learning. 
However, their analysis focuses on consistency and does not characterize the asymptotic distribution of the iterates or provide statistical inference procedures. 
Separately, \citet{rowland2023statistical} decomposed the estimation error of quantile temporal-difference learning into fixed-point bias and finite-sample variance, showing that it can outperform classical temporal difference learning for expected return estimation.

\subsubsection{Statistical Analysis of Reinforcement Learning.}
Statistical inference for reinforcement learning has received increasing attention, with most existing work focusing on value functions. 
Representative results include high-confidence bounds for off-policy evaluation \citep{thomas2015high,jiang2016doubly}, bootstrap-based inference \citep{hao2021bootstrapping}, asymptotic inference for robust value functions \citep{yang2022toward}, confidence intervals for value functions under policy evaluation and policy learning \citep{shi2022statistical,zhu2023uncertainty}, and online inference procedures \citep{li2023statistical,li2023online}.

Comparatively fewer works study inference for return distributions and other distributional quantities. 
\citet{chandak2021universal} and \citet{huang2022off} developed procedures for estimating return distribution functions and constructing confidence bands, while \citet{Qi03072025} studied distributional off-policy evaluation under offline data. 
Under a generative model, \citet{zhang2025estimation} developed estimation and inference theory for return distributions, and \citet{cheng2026statistical} studied model-based estimation and inference for quantile representations. 
In contrast, we focus on performing statistical inference for quantile temporal difference learning, a widely used model-free algorithm in quantile distributional reinforcement learning. 

The remainder of this paper is organized as follows. 
In Section~\ref{Section:preliminary}, we introduce some basic concepts of distributional reinforcement learning.
In Section~\ref{Section:analysis}, we present our statistical analysis of quantile temporal difference learning.
In Section~\ref{Section:proof_outline}, we provide an outlined proof of results in Section~\ref{Section:analysis}.
We conclude our work in Section~\ref{Section:conclusion}. 
Details of the proof are given in the appendices.

%% file: preliminary.tex
In this section, we introduce the necessary background for our work. We review the Markov decision process in Section~\ref{subsec:problem_setting}, followed by metrics on the space of measures in Section~\ref{subsec:measure_metric}. We introduce distributional Bellman operator and quantile projection operator in Section~\ref{subsec:dist_bellman_quantile_projection}. Finally, in Section~\ref{subsec:assumptions}, we present the assumptions under which our main theoretical results are established.

\subsection{Problem Setup}\label{subsec:problem_setting}

We consider a discounted Markov decision process (MDP) specified by 
the tuple $\gM=\<\gS,\gA,\gP_R,P,\gamma\>$, where $\gS$ and $\gA$ are finite state space and finite 
action space respectively, $\gP_R\colon\gS\times\gA\to\Delta([0,1])$ is the distribution of rewards, $P\colon\gS\times\gA\to\Delta(\gS)$ is the transition probability, and $\gamma\in(0,1)$ is the discount factor.
Here $\Delta(\cdot)$ denotes the set of probability distributions over some set.

For a fixed policy $\pi\colon\gS\to\Delta(\gA)$ and an initial state $S_0= s\in\gS$, a random trajectory $\{(S_t,A_t,R_t)\}_{t=0}^\infty$ can be sampled from the MDP using the following procedure: 
\begin{equation*}
    \begin{aligned}
        A_t\mid S_t&\sim\pi(\cdot\mid S_t),\\
        R_t\mid (S_t,A_t)&\sim \gP_R({\cdot}\mid S_t,A_t),\\
        {S_{t+1}}\mid{(S_t,A_t)}&\sim P({\cdot}\mid{S_t,A_t}).\\
    \end{aligned}
\end{equation*}

The return of such a trajectory starting from state $s$ is 
defined as the random variable
\begin{equation*}
    G^\pi(s)\coloneq \sum_{t=0}^\infty \gamma^t R_t,
\end{equation*}
which is bounded almost surely by $[0, (1-\gamma)^{-1}]$. The value function $V^\pi(s)$ is defined by the expected return $\EB[G^\pi(s)]$. 
We further denote by $\eta^\pi(s) \in \Delta([0, (1-\gamma)^{-1}])$ 
the distribution of $G^\pi(s)$, and denote $\bm\eta^\pi = 
(\eta^\pi(s))_{s \in \gS}$ for the collection of 
return distributions across all states.


\subsection{Metrics on the Space of Measures}\label{subsec:measure_metric}
Denote the space of all probability distributions on $\RB$ as $\sP$. For $\mu\in\sP$, the cumulative distribution function is defined as $F_\mu(x)=\mu(-\infty,x]$, and the quantile function is defined as $F^{-1}_{\mu}(\tau)=\inf\{x\colon F_\mu(x)\geq \tau\}$. For $1\leq p<\infty$ and $\mu,\nu\in\sP$, the $p$-Wasserstein metric between $\mu$ and $\nu$ is defined as 
\begin{equation*}
    W_p(\mu, \nu)=\left( \int_0^1 \left|F^{-1}_\mu(t)-F^{-1}_\nu(t)\right|^p\mathrm{d}t\right)^{1 / p}, 
\end{equation*}
and the $\infty$-Wasserstein metric is defined as
\begin{equation*}
    W_\infty(\mu, \nu)=\sup_{t\in[0,1]} \left|F^{-1}_\mu(t)-F^{-1}_\nu(t)\right|. 
\end{equation*}
Moreover, for any extended metric $d\colon\sP\times\sP\to[0,\infty]$, we can define its supremum extension $\Bar{d}\colon \sP^\gS\times\sP^\gS\to[0,\infty]$ as
\[
\Bar{d}(\bm\eta,\bm\eta^\prime)=\sup_{s\in\gS}d(\eta(s),\eta^\prime(s)), 
\]
which is an extended metric on $\sP^\gS$. 


\subsection{Distributional Bellman Operator and Quantile Projection Operator}\label{subsec:dist_bellman_quantile_projection}

A fundamental property of the value function is that it satisfies 
the Bellman equation. Letting $\bm V^\pi \coloneq (V^\pi(s))_{s\in\gS}$, we have for any $s\in\gS$, 
\begin{equation}\label{eq:Bellman_equation}
    \begin{aligned}
            V^\pi(s)&=\brk{T^\pi\bm V^\pi}(s)\\
    &\coloneq \EB_{A\sim\pi(\cdot\mid s), R\sim\gP(\cdot\mid s,A)}[R]+\EB_{A\sim\pi(\cdot\mid s),S^\prime\sim P(\cdot\mid s,A)} [V^\pi(S^\prime)]\\
    &=\sum_{a\in\gA}\pi(a\mid s)\int_0^1 r \gP_R(\mathrm{d}r\mid s,a)+\sum_{a\in\gA,s^\prime\in\gS} \pi(a\mid s)P(s^\prime\mid s,a)V^\pi(s^\prime).
    \end{aligned}
\end{equation}
The operator $T^\pi\colon \RB^{\gS}\to \RB^{\gS}$ is referred to as 
the Bellman operator, and Equation~\eqref{eq:Bellman_equation} 
characterizes $\bm V^\pi$ as its unique fixed point.

An analogous relationship holds for the return distributions $\bm\eta^\pi$, known as the distributional Bellman equation. That is, for each $s\in\gS$, 
\begin{equation*}\label{Equation_distributional_Bellman_equation}
\begin{aligned}
        \eta^\pi(s)&=\brk{\gT^\pi\bm\eta^\pi}(s)\\
    &\coloneq \EB_{A\sim\pi(\cdot\mid s), R\sim\gP_R(\cdot \mid s,A),S^\prime\sim P(\cdot\mid s,A)}\brk{\prn{b_{R,\gamma}}_\#\eta^\pi(S^\prime)}\\
    &=\sum_{a\in\gA,s^\prime\in\gS}\pi(a\mid s)P(s^\prime\mid s,a)\int_0^1 \prn{b_{r,\gamma}}_\#\eta^\pi(s^\prime)\gP_R(\mathrm{d}r\mid s,a).
\end{aligned}
\end{equation*}
Here $b_{r,\gamma}\colon \RB\to\RB$ denotes the affine map $b_{r,\gamma}(x)=r+\gamma x$, and $g_\#\mu$ is the pushforward of a measure $\mu$ under $g$, defined by $g_\#\mu(B) = \mu(g^{-1}(B))$ for all Borel sets $B$. 
The integral ${\int_0^1 \prn{b_{r,\gamma}}_\#\eta^\pi(s^\prime)\gP_R(\mathrm{d}r\mid s,a)}$ is defined in the sense that for any Borel set $B$,
\begin{equation*}
    \brk{\int_0^1 \prn{b_{r,\gamma}}_\#\eta^\pi(s^\prime)\gP_R(\mathrm{d}r\mid s,a)}(B)=\int_0^1 \brk{\prn{b_{r,\gamma}}_\#\eta^\pi(s^\prime)}(B)\gP_R(\mathrm{d}r\mid s,a). 
\end{equation*}
Recall that $\sP$ is the space of all probability measures on $\RB$, and $\gT^\pi\colon \sP\to \sP$ is referred to as the distributional Bellman operator, of which the fixed point is the return distribution $\bm\eta^\pi$.

Because the exact distribution $\bm\eta^\pi$ is infinite-dimensional and cannot be computed exactly, we approximate it using a quantile-parameterized distribution. The space of all quantile-parametrized probability distributions is defined as
\begin{equation*}\label{eq:quantile_param}
    \sP_{m} \coloneq  \left\{ \nu_{\bx}=\frac{1}{m}\sum_{i=1}^m \delta_{x_i} \mid  \bx=\prn{x_1, \ldots, x_m}^{\top}\in \RB^{m}, x_1\leq \ldots\leq x_m \right\}, 
\end{equation*}
which is a mixture of Dirac measures and $m\in\NB$. 
We define the quantile
projection operator $\Pi_{m}\colon\sP\to\sP_m$ as
\begin{equation*}\label{eq:quantile_projection}
   \Pi_{m}\nu=\frac{1}{m}\sum_{i=1}^m\delta_{F^{-1}_{\nu}(\tau_i)}, 
\end{equation*}
where $\tau_i=\frac{2i-1}{2m}$. We lift $\Pi_m$ to the product space $\sP^\gS$ by defining
$(\bPi_m{\bm\eta})(s) \coloneq  \Pi_m\eta(s)$ for any $\bm\eta=(\eta(s))_{s\in\gS}\in\sP^\gS$. 
The properties of $\gT^\pi$ and $\bPi_m$ are summarized in the following proposition.  
\begin{proposition}\emph{\cite[Proposition 4.15, Lemma 5.25]{bdr2022}}\label{prop:basic_properties}
The following statements hold:
\begin{itemize}
    \item $\gT^\pi$ is  $\gamma$-contractive under the $\bar{W}_p$ metric for every $p\in[1,\infty]$, namely for every $\bm\eta,\bm\eta^\prime\in\sP^\gS$, 
    \[
    \bar{W}_p(\gT^\pi\bm\eta,\gT^\pi\bm\eta^\prime)\leq\gamma\bar{W}_p(\bm\eta,\bm\eta^\prime); 
    \]
    \item $\bPi_m$ is non-expansive under the $\bar{W}_\infty$ metric, namely $\bar{W}_\infty(\bPi_m\bm\eta,\bPi_m\bm\eta^\prime)\leq \bar{W}_\infty(\bm\eta,\bm\eta^\prime)$ for every $\bm\eta,\bm\eta^\prime\in\sP^\gS$.  
\end{itemize}
\end{proposition}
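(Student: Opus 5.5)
We prove the two bullet points separately, in each case using couplings and the quantile-function representation of $W_p$.

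\emph{Contraction of $\gT^\pi$.} Fix $\bm\eta,\bm\eta'\in\sP^\gS$ with $\bar W_p(\bm\eta,\bm\eta')<\infty$ (otherwise there is nothing to prove). For $p<\infty$, fix $\epsilon>0$ and, for each $s'\in\gS$, take the comonotone (quantile) coupling $(G(s'),G'(s'))=(F^{-1}_{\eta(s')}(U),F^{-1}_{\eta'(s')}(U))$ with $U\sim\mathrm{Unif}[0,1]$. This coupling attains $W_p(\eta(s'),\eta'(s'))$. Independently draw $A\sim\pi(\cdot\mid s)$, $R\sim\gP_R(\cdot\mid s,A)$ and $S'\sim P(\cdot\mid s,A)$. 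Then $R+\gamma G(S')$ has law $[\gT^\pi\bm\eta](s)$, and $R+\gamma G'(S')$ has law $[\gT^\pi\bm\eta'](s)$; this follows directly from the mixture definition of $\gT^\pi$. Since $W_p^p$ is bounded by the $p$-th moment of the difference under any coupling,
\[
W_p^p\bigl([\gT^\pi\bm\eta](s),[\gT^\pi\bm\eta'](s)\bigr)\le \EB\bigl|\gamma(G(S')-G'(S'))\bigr|^p=\gamma^p\sum_{s'}\PB(S'=s')\,W_p^p(\eta(s'),\eta'(s'))\le\gamma^p\bar W_p^p(\bm\eta,\bm\eta').
\]
The shared reward cancels in the difference, which is the key point. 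Taking the supremum over $s$ gives the first claim. For $p=\infty$ we argue the same way with $\operatorname{ess\,sup}$ in place of the $p$-th moment, using that the quantile coupling also attains $W_\infty$. The only care needed is the measurability of $s'\mapsto$ coupling, which is trivial because $\gS$ is finite. One also checks that the coupling characterization $W_p=\inf_{\text{couplings}}(\EB|X-Y|^p)^{1/p}$ agrees with the quantile formula used in the paper's definition. This is the standard one-dimensional fact, and I would cite it rather than reprove it.

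\emph{Non-expansiveness of $\Pi_m$.} By definition,
\[
W_\infty(\Pi_m\mu,\Pi_m\nu)=\sup_t|F^{-1}_{\Pi_m\mu}(t)-F^{-1}_{\Pi_m\nu}(t)|.
\]
The quantile function of $\Pi_m\mu$ is piecewise constant: it equals $F^{-1}_\mu(\tau_i)$ on $((i-1)/m,i/m]$. Here we use that $F^{-1}_\mu(\tau_1)\le\dots\le F^{-1}_\mu(\tau_m)$, so the atoms are already sorted. The same holds for $\nu$ on the same intervals. Hence the supremum equals $\max_i|F^{-1}_\mu(\tau_i)-F^{-1}_\nu(\tau_i)|\le W_\infty(\mu,\nu)$. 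Taking the supremum over states gives the claim for $\bPi_m$. The main point to verify is the exact identification of $F^{-1}_{\Pi_m\mu}$ on each interval, including the endpoint $t=0$ and ties among atoms. Ties cause no issue: with equal atoms the generalized inverse is still the sorted atom on each interval, so the identity holds.
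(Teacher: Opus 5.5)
Your proof is correct and follows the standard argument behind the cited results of Bellemare, Dabney and Rowland; the paper itself only cites them without proof. For the $\gamma$-contraction, you couple the two Bellman targets through a shared $(A,R,S')$, with per-state quantile couplings drawn independently of the transition, which works for every $p\in[1,\infty]$. For non-expansiveness, you identify $F^{-1}_{\Pi_m\mu}$ as the step function equal to $F^{-1}_\mu(\tau_i)$ on $((i-1)/m,i/m]$, which gives $W_\infty(\Pi_m\mu,\Pi_m\nu)=\max_i|F^{-1}_\mu(\tau_i)-F^{-1}_\nu(\tau_i)|\le W_\infty(\mu,\nu)$.

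The endpoint $t=0$ you flag is only an artifact of the paper's $\sup_{t\in[0,1]}$ convention, since both quantile functions equal $-\infty$ there. The supremum should be read over $(0,1]$, where your identification holds.
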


It immediately follows from Proposition~\ref{prop:basic_properties} that the quantile projected Bellman operator $\bPi_{m}{\gT}^{\pi}$ 
is a $\gamma$-contraction in the Polish space $(\sP^\gS,\bar{W}_{\infty})$.
Hence, the quantile projected Bellman equation $\bm\eta=\bPi_{m}\gT^{\pi}\bm\eta$ admits a unique solution $\bm\eta_{m}$. 
Denote $[m]=\{1,\ldots,m\}$, and we define the quantile parameter $\bm\theta_m\in\RB^{\gS\times [m]}$ by 
\begin{equation*}
    \eta_m(s)=\frac{1}{m}\sum_{i=1}^m\delta_{\theta_m(s,i)},
\end{equation*}
with $\theta_m(s,1)\leq \cdots \leq\theta_m(s,m)$ for every $s\in\gS$. 

Finally, we need a generalized version of quantile projection operator. For $\blambda\in[0,1]^{\gS\times[m]}$, define $\bPi^\blambda_m$ by
\begin{equation*}
    (\bPi^\blambda_m\bm\eta)(s)=\frac1m\sum_{i=1}^m\delta_{(1-\lambda(s,i))F^{-1}_{\eta(s)}(\tau_i)+\lambda(s,i)\Bar{F}^{-1}_{\eta(s)}(\tau_i)},
\end{equation*}
where $\Bar{F}^{-1}_{\eta(s)}(\tau)=\inf\{x:F_{\eta(s)}(x)>\tau\}$. It is proved in~\citet{rowland2023analysis} that every $\bPi^\blambda_m$ is non-expensive under the $\Bar{W}_\infty$ metric. Therefore, every $\bPi^\blambda_m\gT^\pi$ has a unique fixed point. 

\subsection{Quantile Temporal Difference Learning}
We introduce quantile temporal difference learning (QTD) in this section. We introduce the synchronous case first. In this setting, at every interation $t$, for every $s\in\gS$, we sample 
\begin{equation*}
    a^{(t,s)}\sim\pi(\cdot\mid s), \prn{r^{(t,s)}s^{\prime(t,s)}}\sim Q_{s,a^{(t,s)}}
\end{equation*}
from a generative model and update
\begin{equation}\label{eq:synch_QTD}
    \theta^{(t)}(s,i)=\theta^{(t-1)}(s,i)+\alpha_{t-1}\brk{\tau_i-\frac{1}{m}\sum_{j=1}^m\ind\brc{r^{(t,s)}+\gamma\theta^{(t-1)}\prn{s^{\prime(t,s)},j}<\theta^{(t-1)}(s,i)}}, 
\end{equation}
where we define
\begin{equation*}
    Q_{s,a}=\gP_R(\cdot\mid s,a)\otimes P(\cdot\mid s,a). 
\end{equation*}
In an asynchronous setting, at every iteration $t$, we sample 
\begin{equation*}
    s^{(t)}\sim \mu(\cdot),a^{(t)}\sim\pi(\cdot\mid s^{(t)}), \prn{r^{(t)}s^{\prime(t)}}\sim Q_{s^{(t)},a^{(t)}}
\end{equation*}
from a generative model and update
\begin{equation}\label{eq:asynch_QTD}
    \theta^{(t)}(s,i)=\theta^{(t-1)}(s,i)+\alpha_{t-1}\ind\brc{s=s^{(t)}}\brk{\tau_i-\frac{1}{m}\sum_{j=1}^m\ind\brc{r^{(t)}+\gamma\theta^{(t-1)}\prn{s^{\prime(t)},j}<\theta^{(t-1)}(s,i)}}. 
\end{equation}

Finally, we consider the Markovian data setting, where the observations are generated along a single trajectory under the fixed policy $\pi$. 
Let $s^{(1)}$ be drawn from an initial
distribution. At every iteration $t$, conditional on the current state $s^{(t)}$, we sample
\begin{equation*}
    a^{(t)}\sim\pi(\cdot\mid s^{(t)}),\prn{r^{(t)},s^{\prime(t)}}\sim Q_{s^{(t)},a^{(t)}},
\end{equation*}
and set
\begin{equation*}
    s^{(t+1)}=s^{\prime(t)}.
\end{equation*}
The QTD iterate is then updated according to
\begin{equation}\label{eq:markov_QTD}
    \theta^{(t)}(s,i)=\theta^{(t-1)}(s,i)+\alpha_t\ind\brc{s=s^{(t)}}\brk{\tau_i-\frac{1}{m}\sum_{j=1}^m\ind\brc{r^{(t)}+\gamma\theta^{(t-1)}\prn{s^{\prime(t)},j}<\theta^{(t-1)}(s,i)}}.
\end{equation}
Thus, the asynchronous and Markovian settings use the same update rule, but differ in the data-generating process: in \eqref{eq:asynch_QTD}, $\{s^{(t)}\}_{t\ge1}$ are sampled i.i.d. from $\mu$, whereas in \eqref{eq:markov_QTD}, $\{s^{(t)}\}_{t\ge1}$ evolves according to the Markov chain induced by the policy $\pi$ and the transition kernel.

\subsection{Main Assumptions}\label{subsec:assumptions}
For every $(s,a)\in\gS\times\gA$, we make the following assumption about $\gP_R(\cdot\mid s,a)$. 
\begin{assumption}\label{assump:density}
    For any $s\in\gS$, $a\in\gA$, $\gP_R(\cdot\mid s,a)$ is supported on $[0,1]$ and has a Lebesgue density $p_{s,a}$. Moreover, $p_{s,a}$ is Lipschitz continuous on $[0,1]$ and there exists a positive constant $C_0$ such that $0< p_{s,a}(x)\leq C_0$ for any $x\in(0,1)$. 
\end{assumption}

Moreover, we make an assumption on the locations of $\bm\theta_m$. 
\begin{assumption}\label{Assumption_no_boundary_quantile}
For every $s,s^\prime\in\gS$ satisfying
\begin{equation*}
    P^\pi(s^\prime\mid s)\coloneq \sum_{a\in\gA}\pi(a\mid s)P(s^\prime\mid s)>0, 
\end{equation*}
we have
\[
\theta_m(s,i)-\gamma\theta_m(s^\prime,j)\notin\{0,1\}
\]
for every $i,j\in[m]$. 
\end{assumption}
This is a technical condition that excludes boundary-degenerate configurations in which Bellman-shifted quantile locations coincide with the endpoints of the reward support. In particular, it guarantees that the projected Bellman equation is continuously differentiable around the true parameter, thereby ensuring the regularity conditions required by the $Z$-estimation theory used in subsequent analysis. We suppose that the two assumptions above hold throughout the paper.

%% file: analysis.tex
\subsection{Functional Central Limit Theorem for QTD}
In this section, we present functional central limit theorems (FCLT) for QTD. Let $\bm B(\cdot)$ denote an $\gS\times[m]$-dimensional standard Brownian motion on $[0,1]$. We first need to introduce some matrices in preparation. 

For a distribution $\nu$ on $\gS$, $\bD_{\nu}$ is defined as $\diag_s\brc{\nu(s)\bm I_m}$. $\bG$ is defined as 
\begin{align*}
        &\bG_{(s,i),(s^\prime,j)}\\
        =&-\frac{\gamma }{m}\sum_{a\in\gA}\pi(a\mid s)P(s^\prime\mid s,a)p_{s,a}(\theta_m(s,i)-\gamma\theta_m(s^\prime,j))\\
        &+\frac{\ind\{(s,i)=(s',j)\}}{m}\sum_{a\in\gA,\tilde{s}\in\gS}\sum_{k=1}^m \pi(a\mid s)P(\tilde{s}\mid s,a)p_{s,a}\left(\theta_m(s,i)-\gamma\theta_m(\tilde{s},k)\right).
\end{align*}

For $s\in\gS$ and $i\in[m]$, define
\begin{equation*}
    \xi_{s,i}(a,r,s^\prime)\coloneq\frac{1}{m}\sum_{j=1}^m\ind\brc{r+\gamma\theta_m(s^\prime,j)<\theta_m(s,i)}-\tau_i.
\end{equation*}

The covariance matrix $\bGamma$ is defined by
\begin{align*}
    \prn{\bGamma}_{(s,i),(\tilde{s},k)}&=\ind\brc{s=\tilde{s}}\cov_{A\sim\pi(\cdot\mid s),(R,S^\prime)\sim Q_{s,A}}\brk{\xi_{s,i}(A,R,S^\prime),\xi_{\tilde{s},k}(A,R,S^\prime)}. 
\end{align*}

Now we can formulate the FCLT. 

\begin{theorem}\label{thm:syn_QTD_FCLT}
    For update rules~\eqref{eq:synch_QTD}, as $T\to\infty$, 
    \begin{equation*}
        \frac{1}{\sqrt{T}}\sum_{t=0}^{\lfloor Tu\rfloor}\prn{\bm\theta^{(t)}-\bm\theta_m}\Rightarrow \bG^{-1}\bm \Gamma^{\frac12}\bm B(\cdot). 
    \end{equation*}
\end{theorem}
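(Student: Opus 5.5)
The plan is to treat synchronous QTD as a nonlinear stochastic approximation and to follow the Polyak--Juditsky / Ruppert scheme in its functional (partial-sum) form. This is the form used in the random-scaling literature.

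\textbf{Step 1: Mean field and noise decomposition.} Write the update~\eqref{eq:synch_QTD} as $\btheta^{(t)}=\btheta^{(t-1)}-\alpha_{t-1}\bigl(h(\btheta^{(t-1)})+\bm\epsilon^{(t)}\bigr)$. The mean field has entries
\[
h(\btheta)_{(s,i)}=\frac1m\sum_{j}\sum_{a,s'}\pi(a\mid s)P(s'\mid s,a)F_{s,a}\bigl(\theta(s,i)-\gamma\theta(s',j)\bigr)-\tau_i ,
\]
where $F_{s,a}$ is the reward CDF. The noise $\bm\epsilon^{(t)}$ is a martingale difference. Under Assumption~\ref{assump:density} and Assumption~\ref{Assumption_no_boundary_quantile}, $h$ is $C^1$ near $\btheta_m$ with Jacobian $\bG$. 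Lipschitz continuity of $p_{s,a}$ then gives the local quadratic bound $\|h(\btheta)-\bG(\btheta-\btheta_m)\|\lesssim\|\btheta-\btheta_m\|^2$.

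Since the noise entries are differences of indicators, I would decompose $\bm\epsilon^{(t)}=\bm\epsilon^{(t)}(\btheta_m)+\bigl(\bm\epsilon^{(t)}(\btheta^{(t-1)})-\bm\epsilon^{(t)}(\btheta_m)\bigr)$. The first term is i.i.d. with covariance $\bGamma$; its coordinates are $\xi_{s,i}$, and independence across $s$ in the synchronous sampling gives the block-diagonal structure. The second term is a martingale difference whose conditional second moment is at most $C\|\btheta^{(t-1)}-\btheta_m\|$, because the probability that an indicator flips is controlled by the bounded density.

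\textbf{Step 2: Invertibility, stability and rates.} Next I would show that $\bG$ is Hurwitz (all eigenvalues of $\bG$ have positive real part). The diagonal of $\bG$ dominates its off-diagonal row sums by a factor of $\gamma$, strictly so whenever the density is positive. Hence $\bG$ is a strictly diagonally dominant M-type matrix, and by Gershgorin its eigenvalues have real part at least $(1-\gamma)\min(\text{diag})$. Positivity of the diagonal requires some $\theta_m(s,i)-\gamma\theta_m(\tilde s,k)\in(0,1)$; this should follow from the fixed-point equation together with $\tau_i\in(0,1)$.

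Almost-sure convergence $\btheta^{(t)}\to\btheta_m$ I would take from \citet{rowland2023analysis}. I then need a rate $\EB\|\btheta^{(t)}-\btheta_m\|^2\lesssim\alpha_t$ for $\alpha_t=\alpha_0 t^{-\beta}$ with $\beta\in(1/2,1)$. I would obtain it from a Lyapunov function $V(\btheta)=(\btheta-\btheta_m)^\top\bm P(\btheta-\btheta_m)$, with $\bm P$ solving $\bG^\top\bm P+\bm P\bG=\bm I$. The globally bounded increments and the $\bar W_\infty$ contraction of the ODE from Proposition~\ref{prop:basic_properties} handle the region far from $\btheta_m$. I expect this step to be the main obstacle. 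The mean field is only locally linear, and a global second-moment rate needs a careful argument: a stopping-time localization combined with the contraction of $\bPi_m\gT^\pi$, or a switch to a high-probability version, which suffices for weak convergence.

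\textbf{Step 3: Functional CLT.} With these bounds I would apply the standard partial-sum decomposition. Summing $\alpha_{t-1}^{-1}(\btheta^{(t-1)}-\btheta^{(t)})$ gives
\[
\bG\sum_{t\le Tu}(\btheta^{(t)}-\btheta_m)=-\sum_{t\le Tu}\bm\epsilon^{(t)}(\btheta_m)+R_T(u),
\]
where $R_T(u)$ collects four pieces: the telescoping terms with varying $\alpha_t^{-1}$, the quadratic remainder, and the martingale part $\bm\epsilon(\btheta^{(t-1)})-\bm\epsilon(\btheta_m)$. The first two are controlled by the rate $\sum_t\alpha_t\lesssim T^{1-\beta}=o(\sqrt T)$ and $\sum_t t^{-1}\alpha_t^{-1}\EB\|\cdot\|=o(\sqrt T)$. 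The quadratic remainder is negligible via Step 2. The martingale part is negligible in sup-norm by Doob's inequality, since its quadratic variation is $\sum_t\EB\|\btheta^{(t)}-\btheta_m\|=O(T^{1-\beta/2})=o(T)$. Finally, Donsker's invariance principle for i.i.d. vectors gives $T^{-1/2}\sum_{t\le Tu}\bm\epsilon^{(t)}(\btheta_m)\Rightarrow\bGamma^{1/2}\bm B(u)$. Multiplying by $\bG^{-1}$ and using symmetry of the Gaussian limit yields Theorem~\ref{thm:syn_QTD_FCLT}.
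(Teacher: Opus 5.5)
Your overall architecture matches the paper's. Both arguments use the same mean field $\bh$ with Jacobian $\bG$, Gershgorin for the Hurwitz property, almost-sure convergence from \citet{rowland2023analysis}, and a Lyapunov function $\bz^\top\bQ\bz$ with $\bG^\top\bQ+\bQ\bG=\bI$ giving a second-moment rate $\lesssim\alpha_t$. On the ``main obstacle'' you flag, the paper uses exactly your first option and nothing more. It localizes on $G_T=\{\norm{\btheta^{(t)}-\btheta_m}\le\kappa\ \forall t\ge T\}$ via a stopping time and uses $\PB(\bigcup_T G_T)=1$ from a.s.\ convergence, so no global rate or contraction argument far from $\btheta_m$ is needed. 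Your Gershgorin bound is valid: the exact margin in row $(s,i)$ is $(1-\gamma)$ times the density of $(\gT^\pi\bm\eta_m)(s)$ at $\theta_m(s,i)$.

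Where you genuinely differ is the last step. The paper proves a general FCLT from the Polyak--Juditsky representation of $\sum_{t\le n}\bDelta_t$ through the uniformly bounded matrices $\bA_i^t$. There the nonlinear remainder enters only with bounded weights, so it only needs $\sum_i(i+1)^{-1/2}\norm{\brho_i}<\infty$ a.s. The full state-dependent noise then goes to a martingale FCLT using $\EB[\beps_t\beps_t^\top\mid\gF_{t-1}]\to\bGamma$ a.s. You instead telescope directly and split the noise into the frozen i.i.d.\ part $\beps^{(t)}(\btheta_m)$, handled by Donsker, plus a correction of conditional variance $O(\norm{\bDelta_{t-1}})$, handled by Doob. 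This is more elementary on the noise side, and it is the same split the paper itself uses for $\bpsi_t,\bzeta_t$ in the Markovian case.

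The price of direct telescoping is the boundary term $\alpha_n^{-1}\bDelta_{n+1}$, $n=\down{Tu}$, which must be $o_P(\sqrt T)$ uniformly in $u\in[0,1]$; your stated controls do not give this. The rate $\EB\norm{\bDelta_n}^2\lesssim\alpha_n$ only controls it for each fixed $n$. A union bound gives $\PB\bigl(\max_{n\le T}\alpha_n^{-1}\norm{\bDelta_{n+1}}>\epsilon\sqrt T\bigr)\lesssim T^{-1}\sum_{n\le T}\alpha_n^{-1}\asymp T^{\beta}$, which is useless. You need a maximal estimate, and there are two standard ways to get it:
\begin{enumerate}[(i)]
\item Since the noise is bounded, run the localized Lyapunov recursion on $V^p$ to get $\EB\norm{\bDelta_n}^{2p}\ind_{G_{T_0}}\lesssim\alpha_n^p$. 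The union bound then becomes $\lesssim T^{1+p\beta-p}\to0$ for $p>1/(1-\beta)$.
\item Return to the $\bA_i^t$ representation. There the $\alpha_n^{-1}$ amplification hits only the linear martingale transform $\sum_{i\le n}\alpha_i\prod_{l>i}(\bI-\alpha_l\bG)\beps_i$, which is the paper's $I_2$, handled by Lemma 4 of \citet{li2023online}.
\end{enumerate}

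A smaller point: \citet{rowland2023analysis} only give convergence to the set of fixed points of $\bPi^\blambda_m\gT^\pi$ over all $\blambda\in[0,1]^{\gS\times[m]}$. To conclude $\btheta^{(t)}\to\btheta_m$ you need your positive-density fact to show that the lower and upper $\tau_i$-quantiles of $(\gT^\pi\bm\eta_m)(s)$ coincide, so this set is $\{\btheta_m\}$; this is the paper's Lemma~\ref{lem:unique_zero}. Note that the positive-density fact also needs Assumption~\ref{Assumption_no_boundary_quantile}, not only $\tau_i\in(0,1)$.
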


\begin{theorem}\label{thm:asyn_QTD_FCLT}
    For update rules~\eqref{eq:asynch_QTD}, as $T\to\infty$, 
    \begin{equation*}
        \frac{1}{\sqrt{T}}\sum_{t=0}^{\lfloor Tu\rfloor}\prn{\bm\theta^{(t)}-\bm\theta_m}\Rightarrow \bG^{-1}\bD_\mu^{-\frac12}\bGamma^{\frac12}\bm B(\cdot). 
    \end{equation*}
\end{theorem}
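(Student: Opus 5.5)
We treat the asynchronous QTD recursion \eqref{eq:asynch_QTD} as a nonlinear stochastic approximation and follow the Polyak--Juditsky/Ruppert route, reusing the machinery behind Theorem~\ref{thm:syn_QTD_FCLT}. The iteration reads $\bm\theta^{(t)}=\bm\theta^{(t-1)}-\alpha_{t-1}\bigl(\bm h(\bm\theta^{(t-1)})+\bm\epsilon^{(t)}\bigr)$. Here the mean field is $\bm h(\bm\theta)_{(s,i)}=\mu(s)\,\EB\bigl[\frac1m\sum_j\ind\{R+\gamma\theta(S',j)<\theta(s,i)\}-\tau_i\bigr]$, with $A\sim\pi(\cdot\mid s)$ and $(R,S')\sim Q_{s,A}$. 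The noise $\bm\epsilon^{(t)}$ is a martingale difference with respect to the natural filtration, and it is bounded because the indicators and $\tau_i$ lie in $[0,1]$. The fixed point $\bm\theta_m$ satisfies $\bm h(\bm\theta_m)=0$: the synchronous field is $\bm h_{\rm syn}=\bD_\mu^{-1}\bm h$, and its zero is the fixed point of $\bPi_m\gT^\pi$. By Assumption~\ref{assump:density} and Assumption~\ref{Assumption_no_boundary_quantile}, $\bm h$ is continuously differentiable near $\bm\theta_m$, and differentiating the CDF expression gives Jacobian $\nabla\bm h(\bm\theta_m)=\bD_\mu\bG$. Lipschitz continuity of $p_{s,a}$ gives the local expansion $\|\bm h(\bm\theta)-\bD_\mu\bG(\bm\theta-\bm\theta_m)\|\le C\|\bm\theta-\bm\theta_m\|^2$. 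The asymptotic noise covariance is $\lim\EB[\bm\epsilon^{(t)}\bm\epsilon^{(t)\top}\mid\gF_{t-1}]=\bD_\mu\bGamma$. Indeed, only coordinate $s^{(t)}$ is updated, so cross-state covariances vanish after centering and the diagonal blocks pick up the factor $\mu(s)$; the centering term $\bm h\bm h^\top$ vanishes at $\bm\theta_m$.

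We then need three ingredients. The first is almost sure convergence $\bm\theta^{(t)}\to\bm\theta_m$, together with a rate $\EB\|\bm\theta^{(t)}-\bm\theta_m\|^2=O(\alpha_t)$. For almost sure convergence we invoke \citet{rowland2023analysis}. For the rate we would use a Lyapunov argument with a $\bar W_\infty$-type (sup-norm) potential, exploiting the $\gamma$-contraction of $\bPi_m^{\blambda}\gT^\pi$ from Proposition~\ref{prop:basic_properties}. The second is that $\bD_\mu\bG$ is Hurwitz. Its diagonal entries are positive, because the densities are positive on $(0,1)$, and the row sums of the off-diagonal part are dominated by $\gamma$ times the diagonal. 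This column/row diagonal dominance makes $\bG$ a nonsingular M-matrix-like object, so $-\bD_\mu\bG$ is stable. The third is the standard decomposition of the partial sums,
\begin{equation*}
\sum_{t=0}^{\lfloor Tu\rfloor}(\bm\theta^{(t)}-\bm\theta_m)=\bG^{-1}\bD_\mu^{-1}\Bigl[-\sum_{t}\bm\epsilon^{(t+1)}+\sum_t\frac{\bm\theta^{(t)}-\bm\theta^{(t+1)}}{\alpha_t}-\sum_t\bigl(\bm h(\bm\theta^{(t)})-\bD_\mu\bG(\bm\theta^{(t)}-\bm\theta_m)\bigr)\Bigr].
\end{equation*}
The martingale term is split as $\bm\epsilon^{(t)}(\bm\theta^{(t-1)})-\bm\epsilon^{(t)}(\bm\theta_m)$ plus $\bm\epsilon^{(t)}(\bm\theta_m)$. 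The latter is an i.i.d. sum, and Donsker's invariance principle gives $T^{-1/2}\sum\bm\epsilon^{(t)}(\bm\theta_m)\Rightarrow(\bD_\mu\bGamma)^{1/2}\bm B$. Applying $\bG^{-1}\bD_\mu^{-1}$ produces a Brownian motion with covariance $\bG^{-1}\bD_\mu^{-1}\bGamma\bG^{-\top}$, which is equal in law to $\bG^{-1}\bD_\mu^{-1/2}\bGamma^{1/2}\bm B$ because $\bD_\mu$ is diagonal and commutes with the block structure of $\bGamma$.

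We then show the remaining terms are $o_P(\sqrt T)$ uniformly in $u\in[0,1]$. The telescoping term is handled by Abel summation and step-size conditions of the usual form $\alpha_t=\alpha_0 t^{-\beta}$ with $\beta\in(1/2,1)$; this yields $O(1/\alpha_T)=o(\sqrt T)$ plus $\sum|\alpha_t^{-1}-\alpha_{t-1}^{-1}|\,\|\bm\theta^{(t)}-\bm\theta_m\|$, which is controlled by the rate. The linearization remainder is bounded by $C\sum\|\bm\theta^{(t)}-\bm\theta_m\|^2=O_P(\sum\alpha_t)=o(\sqrt T)$, using $\beta>1/2$; away from the neighborhood where the expansion holds, almost sure convergence makes only finitely many terms contribute. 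The step I expect to be the main obstacle is the martingale difference term $\sum(\bm\epsilon^{(t)}(\bm\theta^{(t-1)})-\bm\epsilon^{(t)}(\bm\theta_m))$, since the noise is a non-smooth indicator in $\bm\theta$. For it I would use Doob's maximal inequality. The key bound is
\begin{equation*}
\EB\|\bm\epsilon^{(t)}(\bm\theta)-\bm\epsilon^{(t)}(\bm\theta_m)\|^2\le C\|\bm\theta-\bm\theta_m\|,
\end{equation*}
which holds because the difference of indicators is nonzero only when $R$ falls in an interval of length $O(\|\bm\theta-\bm\theta_m\|)$, whose probability is at most $C_0$ times that length. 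This gives a bound of order $\sum_t\EB\|\bm\theta^{(t)}-\bm\theta_m\|\lesssim\sum_t\alpha_t^{1/2}=o(T)$, so the term is $o_P(\sqrt T)$ in sup norm. Combining the three pieces via Slutsky's lemma in $D[0,1]$ yields the stated weak convergence.
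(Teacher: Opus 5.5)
\textbf{The gap: the boundary term after Abel summation.} Your route (Abel summation of $(\btheta^{(t)}-\btheta^{(t+1)})/\alpha_t$, an i.i.d.\ leading noise term plus a martingale correction) is a legitimate alternative to the paper's, but it breaks at this step. Write $\bDelta^{(t)}=\btheta^{(t)}-\btheta_m$ and $n=\lfloor Tu\rfloor$. Abel summation leaves the boundary piece $\bDelta^{(n+1)}/\alpha_n$, and a functional limit needs $\sup_{n\le T}\norm{\bDelta^{(n+1)}}/(\alpha_n\sqrt T)\to0$ in probability.

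Your claim that this term is $O(1/\alpha_T)=o(\sqrt T)$ is false. Since $1/\alpha_T\asymp T^{\beta}$ with $\beta>1/2$, boundedness of the iterates only gives order $T^{\beta-1/2}\to\infty$. The second-moment rate $\EB\norm{\bDelta^{(n)}}^2\lesssim\alpha_n$ helps only for fixed $u$, where it gives $O_P(\alpha_n^{-1/2})=o_P(\sqrt T)$. For the supremum, a union bound gives $\sum_{n\le T}\PB(\norm{\bDelta^{(n)}}>\eta\alpha_n\sqrt T)\lesssim\eta^{-2}\sum_{n\le T}(\alpha_nT)^{-1}\asymp\eta^{-2}T^{\beta}$, which does not vanish.

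To close this on your route you need an extra ingredient. One option is $\EB\norm{\bDelta^{(n)}}^{2p}\lesssim\alpha_n^{p}$ for some $p$ with $p(1-\beta)>1$; Borel--Cantelli then yields $\norm{\bDelta^{(n)}}=o(\alpha_n\sqrt n)$ almost surely. Another is an almost-sure LIL-type rate.

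The paper avoids the issue. It unrolls the linearized recursion and sums with the Polyak--Juditsky matrices $\bA_i^t$, which are uniformly bounded. The nonlinear remainder then enters only through $T^{-1/2}\sum_{i\le T}\norm{\brho_i}$, which is handled by the almost-sure summability in Lemma~\ref{lem:second_order_remainder} and Kronecker's lemma. The only boundary-type term is a linear transform of the martingale noise, controlled uniformly in $u$ by Lemma~4 of \citet{li2023online}.

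\textbf{The global rate.} You assert $\EB\norm{\bDelta^{(t)}}^2=O(\alpha_t)$ via a sup-norm potential but do not derive it. The $\gamma$-contraction of $\bPi_m^{\blambda}\gT^\pi$ does not plug into standard contractive stochastic approximation: the QTD mean field is a centred CDF, bounded and not of the form $\btheta-\mathcal K(\btheta)$ with $\mathcal K$ a contraction. You do not need a global rate. Given almost-sure convergence, localize with $\tau_T=\inf\{t\ge T:\norm{\bDelta^{(t)}}>\kappa\}$ and the quadratic Lyapunov function $\bz^\top\bQ\bz$, where $\bM^\top\bQ+\bQ\bM=\bI$ and $\bM=\bD_\mu\bG$. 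This gives $\EB\norm{\bDelta^{(t)}}^2\ind_{G_T}\le C_T\alpha_t$, and you then use $\PB(\bigcup_TG_T)=1$, as the paper does.

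\textbf{Almost-sure convergence.} \citet{rowland2023analysis} only give convergence to the set of fixed points of $\bPi_m^{\blambda}\gT^\pi$ over all $\blambda$. Collapsing that set to $\{\btheta_m\}$ needs Lemma~\ref{lem:unique_zero}, i.e.\ a positive continuous density of $(\gT^\pi\bm\eta_m)(s)$ at each $\theta_m(s,i)$. The same fact is what makes your diagonal entries of $\bG$ positive. It does not follow automatically from $p_{s,a}>0$ on $(0,1)$; it uses Assumption~\ref{Assumption_no_boundary_quantile}.

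\textbf{What is sound.} Your noise treatment is a valid and somewhat more elementary substitute for the paper's martingale FCLT. The i.i.d.\ part $\bm\epsilon^{(t)}(\btheta_m)$ goes through Donsker. The correction has conditional second moment $\lesssim\norm{\bDelta^{(t-1)}}$ by the bounded reward density, so Doob's inequality plus almost-sure and bounded convergence suffice, with no rate needed there. Your Hurwitz argument by strict diagonal dominance and your identification of the covariance as $\bD_\mu\bGamma$ are also correct.
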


To establish FCLT under Markovian data setting, we need another assumption. 
\begin{assumption}\label{assump:recurrence}
    Consider the Markov chain with state space $\gS$ and transition dynamics $P^\pi$. We suppose that this Markov chain is irreducible and aperiodic. Therefore, it has a unique stationary distribution $d^\pi$ satisfying $\min_sd^\pi(s)>0$. 
\end{assumption}
\begin{theorem}\label{thm:markov_QTD_FCLT}
    Suppose Assumption~\ref{assump:recurrence} holds and the data is generated from the Markov chain. 
    For update rules~\eqref{eq:markov_QTD}, as $T\to\infty$, 
    \begin{equation*}
        \frac{1}{\sqrt{T}}\sum_{t=0}^{\lfloor Tu\rfloor}\prn{\bm\theta^{(t)}-\bm\theta_m}\Rightarrow \bG^{-1}\bD_{d^\pi}^{-\frac12}\bGamma^{\frac12}\bm B(\cdot).
    \end{equation*}
\end{theorem}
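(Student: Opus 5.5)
We prove Theorem~\ref{thm:markov_QTD_FCLT} by recasting the Markovian recursion~\eqref{eq:markov_QTD} as a nonlinear stochastic approximation with Markovian noise. We then follow the Polyak--Juditsky / Ruppert route, adapted to functional limits via a Poisson-equation decomposition. The mean field is the key object. For $\bm\theta\in\RB^{\gS\times[m]}$ set
\begin{equation*}
h(\bm\theta)_{(s,i)}=d^\pi(s)\Big(\tau_i-\frac1m\sum_{j}\sum_{a,s'}\pi(a\mid s)P(s'\mid s,a)F_{s,a}\big(\theta(s,i)-\gamma\theta(s',j)\big)\Big),
\end{equation*}
where $F_{s,a}$ is the reward CDF.

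Three facts about $h$ are needed.
\begin{itemize}
\item Its zeros are exactly $\bm\theta_m$. This holds because, with continuous rewards (Assumption~\ref{assump:density}), the fixed point of every $\bPi^\blambda_m\gT^\pi$ coincides with $\bm\eta_m$.
\item Near $\bm\theta_m$ the map $h$ is $C^1$ with Lipschitz Jacobian $-\bD_{d^\pi}\bG$. This uses the Lipschitz densities and Assumption~\ref{Assumption_no_boundary_quantile}, which keeps the arguments $\theta_m(s,i)-\gamma\theta_m(s',j)$ away from the kinks of $F_{s,a}$ at $0$ and $1$.
\item $\bG$ is a nonsingular M-matrix. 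Each row has diagonal entry equal to the sum of the moduli of its off-diagonal entries, discounted by $\gamma<1$, so $\bG$ is strictly diagonally dominant. Hence $-\bD_{d^\pi}\bG$ is Hurwitz.
\end{itemize}
Global a.s.\ convergence $\bm\theta^{(t)}\to\bm\theta_m$ is taken from the ODE analysis of \citet{rowland2023analysis}. We extend it to Markovian sampling by the standard Benveniste--Métivier--Priouret argument, using Assumption~\ref{assump:recurrence}. Alongside this we need a rate $\EB\|\bm\theta^{(t)}-\bm\theta_m\|^2=O(\alpha_t)$. We obtain it from a Lyapunov function $V(\bm\theta)=\Delta^\top\bm P\Delta$, where $\Delta=\bm\theta-\bm\theta_m$ and $\bm P$ solves the Lyapunov equation for $\bD_{d^\pi}\bG$. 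Boundedness of the increments, which are at most $\alpha_t$ per coordinate, keeps the localization argument simple.

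Next we decompose the increment. Writing $x_t=(s^{(t)},a^{(t)},r^{(t)},s^{\prime(t)})$ and $H(\bm\theta,x)$ for the bracketed update, we have
\begin{equation*}
H(\bm\theta^{(t-1)},x_t)=h(\bm\theta^{(t-1)})+\underbrace{H(\bm\theta_m,x_t)-h(\bm\theta_m)}_{\text{Markov noise}}+\underbrace{[H(\bm\theta^{(t-1)},x_t)-H(\bm\theta_m,x_t)]-[h(\bm\theta^{(t-1)})-h(\bm\theta_m)]}_{\text{state-dependent remainder}}.
\end{equation*}
The Markov noise is a function of the finite ergodic chain $(s^{(t)},s^{(t+1)})$ together with conditionally independent $(a,r)$. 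We solve its Poisson equation $\hat g-P\hat g=g$ and write it as a martingale difference $\bm\epsilon_t$ plus a telescoping term of the form $\hat g(x_t)-\hat g(x_{t+1})$. After summation by parts against the step sizes, the telescoping term contributes $o(\sqrt T)$ uniformly in $u$.

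Conditioned on the current state $s$, the martingale difference is the centered vector $-\xi_{s,\cdot}$. Its conditional covariance is therefore $\ind\{s=s^{(t)}\}$ times the $s$-block of $\bGamma$, which averages by the ergodic theorem to $\bD_{d^\pi}\bGamma$. We must also check that the Poisson correction leaves this covariance unchanged. It does, because $\xi_{s,i}$ depends only on $(a,r,s')$ drawn given $s$, so the centered noise already has conditional mean zero given $\mathcal F_{t-1}$ and $s^{(t)}$. The only genuinely Markovian part is the fluctuation of the selector $\ind\{s=s^{(t)}\}$ multiplying $h$-type terms, which vanishes at $\bm\theta_m$ because $\EB[\xi_{s,i}\mid s]=0$ there.

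The state-dependent remainder is also handled with Poisson solutions, now for $\bm\theta$-indexed functions. The indicator is discontinuous in $\bm\theta$, but its conditional expectation is Lipschitz thanks to the bounded densities. So the martingale part of the remainder has conditional second moment $O(\|\bm\theta^{(t-1)}-\bm\theta_m\|)=O(\alpha_t^{1/2})$, and its partial sums are $o_p(\sqrt T)$ by Doob's inequality. The Poisson solutions must be Lipschitz in $\bm\theta$, which holds for a finite ergodic chain with uniformly geometric mixing.

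Finally we run the Polyak--Ruppert functional argument, as in the FCLT of Li et al.\ for linear or nonlinear SA. Linearize as
\begin{equation*}
\Delta_t=(\bI-\alpha_t\bD_{d^\pi}\bG)\Delta_{t-1}+\alpha_t(\bm\epsilon_t+\bm r_t),
\end{equation*}
with $\|\bm r_t\|\lesssim\|\Delta_{t-1}\|^2$ plus the remainders above. Summing with the usual $A_j^n$ matrix products converts the partial sum $T^{-1/2}\sum_{t\le Tu}\Delta_t$ into $(\bD_{d^\pi}\bG)^{-1}T^{-1/2}\sum_{t\le Tu}\bm\epsilon_t$ plus terms that are $o_p(1)$ uniformly in $u$. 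Here $\alpha_t=\alpha t^{-a}$ with $a\in(1/2,1)$ gives $\sum_t\alpha_t\EB\|\Delta_t\|^2/\sqrt T\to0$.

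The martingale FCLT then yields the limit $(\bD_{d^\pi}\bG)^{-1}(\bD_{d^\pi}\bGamma)^{1/2}\bm B$. Since $\bD_{d^\pi}$ is diagonal and $\bGamma$ is block diagonal over $s$, the two commute blockwise, and the limit equals $\bG^{-1}\bD_{d^\pi}^{-1/2}\bGamma^{1/2}\bm B$ in distribution. The main obstacle is the non-smooth, state-dependent Markov remainder: combining the Poisson equation with the indicator discontinuity requires the smoothing-by-conditional-expectation trick together with the $O(\alpha_t)$ mean-square rate.
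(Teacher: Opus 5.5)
Your proposal follows essentially the same route as the paper. The Markovian part is the selector fluctuation $\bq(\btheta,s^{(t)})=(\bD_{s^{(t)}}-\bD_{d^\pi})\bh(\btheta)$, handled through a Lipschitz Poisson solution $\bv(\btheta,\cdot)$ vanishing at $\btheta_m$; the leading term $\bH^{(t)}(\btheta_m)$ is already a martingale difference whose conditional covariance $\bD_{s^{(t)}}\bGamma$ averages ergodically to $\bD_{d^\pi}\bGamma$; and your summation by parts plays the role of the paper's auxiliary sequence $\wtilde{\bDelta}^{(t)}=\bDelta^{(t)}-\alpha_t\bv(\btheta^{(t)},s^{(t+1)})$. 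Two points need care. First, the localized $O(\alpha_t)$ mean-square rate does not follow from a one-step drift bound for $V(\bDelta^{(t)})$, because the non-centered cross term $\alpha_t\bDelta^\top\bQ\,\bq(\btheta,s^{(t)})$ is of the same order $\alpha_t\|\bDelta\|^2$ as the drift; this is why the paper runs the Lyapunov recursion on $\wtilde{\bDelta}^{(t)}$. Second, the quadratic remainder requires $T^{-1/2}\sum_{t\le T}\EB\|\bDelta^{(t)}\|^2\ind_{G_T}\to0$, without your extra factor $\alpha_t$; this still holds since $a>1/2$.
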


\subsection{Online Inference Procedure for QTD}

In this section, we provide an online statistical inference procedure for QTD based on the FCLTs established above. The proposed procedure is based on random scaling and does not require estimating either the Jacobian matrix $\bG$ or the covariance matrix $\bGamma$.

Let
\begin{equation*}
    \bar{\bm\theta}_T\coloneq\frac{1}{T}\sum_{t=1}^T \bm\theta^{(t)}
\end{equation*}
be the Polyak--Ruppert averaged QTD iterate. Define the random scaling matrix
\begin{equation}\label{eq:QTD_random_scaling}
    \widehat{\bV}_T\coloneq\frac{1}{T^3}\sum_{t=1}^T\left(\sum_{k=1}^t \bm\theta^{(k)}-t\bar{\bm\theta}_T\right)\left(\sum_{k=1}^t \bm\theta^{(k)}-t\bar{\bm\theta}_T\right)^\top.
\end{equation}
Equivalently, denoting
\begin{equation*}
    \bar{\bm\theta}_t\coloneq\frac{1}{t}\sum_{k=1}^t \bm\theta^{(k)},
\end{equation*}
we have
\begin{equation}\label{eq:QTD_random_scaling_equiv}
    \widehat{\bV}_T=\frac{1}{T^3}\sum_{t=1}^Tt^2\prn{\bar{\bm\theta}_t-\bar{\bm\theta}_T}\prn{\bar{\bm\theta}_t-\bar{\bm\theta}_T}^{\top}.
\end{equation}

Let $B(\cdot)$ denote a one-dimensional standard Brownian motion on $[0,1]$
and define the random variable
\begin{equation}\label{eq:random_scaling_limit}
    \gV\coloneq\frac{B(1)}{\brk{\int_0^1\prn{B(u)-uB(1)}^2\rd u}^{1/2}}, 
\end{equation}
which is pivotal. 

The following theorem provides a random-scaling inference procedure for linear functionals of the QTD fixed point. In particular, the limiting distribution is the same for synchronous, asynchronous and Markovian QTD and is independent of the corresponding asymptotic covariance matrix. It is proved in Appendix~\ref{Appendix_omitted_proofs_3}. 

\begin{theorem}\label{thm:QTD_random_scaling}
Let $\bm c\in\RB^{\gS\times[m]}$ and suppose that
\begin{equation*}
    \bm c^\top\bG^{-1}\bGamma\bG^{-\top}\bm c>0,\ \bm c^\top\bG^{-1}\bD_\mu^{-1}\bGamma\bG^{-\top}\bm c>0,\ \bm c^\top\bG^{-1}\bD_{d^\pi}^{-1}\bGamma\bG^{-\top}\bm c>0. 
\end{equation*}
Then, as $T\to\infty$, for update rules~\eqref{eq:synch_QTD},~\eqref{eq:asynch_QTD} and~\eqref{eq:markov_QTD}, 
\begin{equation*}
    \frac{\bm c^\top\prn{\bar{\bm\theta}_T-\bm\theta_m}}{\sqrt{\bm c^\top\widehat{\bV}_T\bm c}}\Rightarrow\gV.
\end{equation*}
\end{theorem}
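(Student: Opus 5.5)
The plan is to derive the result from the three functional central limit theorems (Theorems~\ref{thm:syn_QTD_FCLT}, \ref{thm:asyn_QTD_FCLT} and \ref{thm:markov_QTD_FCLT}) through the continuous mapping theorem. The argument is identical in the three settings, so we treat them together. Write the limiting covariance factor as $\bm\Sigma^{1/2}$, where it equals $\bG^{-1}\bGamma^{1/2}$, $\bG^{-1}\bD_\mu^{-1/2}\bGamma^{1/2}$ or $\bG^{-1}\bD_{d^\pi}^{-1/2}\bGamma^{1/2}$ according to the setting.

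Define the partial-sum process $\bm S_T(u)\coloneq T^{-1/2}\sum_{t=1}^{\lfloor Tu\rfloor}(\bm\theta^{(t)}-\bm\theta_m)$ for $u\in[0,1]$. The FCLTs are stated with the sum starting at $t=0$. The extra term $T^{-1/2}(\bm\theta^{(0)}-\bm\theta_m)$ is deterministic (or at least tight) and vanishes uniformly in $u$. Hence $\bm S_T\Rightarrow \bm\Sigma^{1/2}\bm B(\cdot)$ in $D([0,1])$ with the Skorokhod topology. Since the limit is continuous, this is also convergence in the uniform topology.

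Next express both ingredients of the statistic as functionals of $\bm S_T$. For the numerator, $\sqrt T\,\bm c^\top(\bar{\bm\theta}_T-\bm\theta_m)=\bm c^\top\bm S_T(1)$. For the random scaling matrix, $\bm\theta_m$ cancels inside $\sum_{k\le t}\bm\theta^{(k)}-t\bar{\bm\theta}_T$. That expression therefore equals $\sqrt T\,\bigl(\bm S_T(t/T)-\tfrac tT\bm S_T(1)\bigr)$. It follows that
\[
T\,\bm c^\top\widehat{\bV}_T\bm c=\frac1T\sum_{t=1}^T\Bigl(\bm c^\top\bm S_T(t/T)-\tfrac tT\bm c^\top\bm S_T(1)\Bigr)^2=\int_0^1\Bigl(\bm c^\top\bm S_T(u)-u\,\bm c^\top\bm S_T(1)\Bigr)^2\rd u+o_{\PB}(1).
\]
The Riemann-sum error is controlled by the modulus of continuity of $\bm S_T$ over mesh $1/T$ together with the replacement $\lfloor Tu\rfloor/T\approx u$. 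Both vanish in probability by tightness and the uniform convergence of $\bm S_T$.

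The map $x\mapsto\bigl(x(1),\int_0^1(x(u)-ux(1))^2\rd u\bigr)$ is continuous on $C([0,1])$ under the sup norm. The continuous mapping theorem then gives joint convergence of numerator and squared denominator. Let $\sigma^2=\bm c^\top\bm\Sigma\bm c$, which is exactly one of the three quadratic forms assumed positive. Then $\bm c^\top\bm\Sigma^{1/2}\bm B(\cdot)\overset{d}{=}\sigma B(\cdot)$ for a one-dimensional standard Brownian motion $B$. The ratio map $(a,b)\mapsto a/\sqrt b$ is continuous on $b>0$. Moreover, $\int_0^1(B(u)-uB(1))^2\rd u>0$ almost surely, since a Brownian bridge is not identically zero. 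Applying the continuous mapping theorem once more, the factors $\sigma$ and $\sqrt T$ cancel, and the statistic converges to $\gV$.

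The main obstacle is the careful handling of the partial-sum functional. This covers the index offset between the FCLT (summing from $0$) and $\bar{\bm\theta}_T$ (summing from $1$), and the discretization error between the sum defining $\widehat{\bV}_T$ and the integral. For the latter we will write $T\bm c^\top\widehat{\bV}_T\bm c$ exactly as an integral of a step function, $\int_0^1(\bm c^\top\bm S_T(\lceil Tu\rceil/T)-\tfrac{\lceil Tu\rceil}{T}\bm c^\top\bm S_T(1))^2\rd u$. This reduces the error to the sup-distance between $\bm S_T$ and its shifted version, which tends to zero in probability because the limit is continuous.
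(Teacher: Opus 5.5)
Your proposal is correct and follows the paper's proof essentially step for step. Like the paper, you write $\sqrt{T}(\bar{\bm\theta}_T-\bm\theta_m)$ and $T\widehat{\bV}_T$ as functionals of the partial-sum process, apply the FCLTs via the continuous mapping theorem, and reduce $\bm c^\top\bm\Sigma^{1/2}\bm B(\cdot)$ to $\sigma B(\cdot)$. Your version is more careful than the paper's, which glosses over three points you handle explicitly: the $t=0$ index offset, the Riemann-sum discretization, and the almost-sure positivity of $\int_0^1(B(u)-uB(1))^2\,\rd u$ needed for the ratio map to be continuous.
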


An asymptotic confidence interval follows immediately.
\begin{corollary}\label{cor:QTD_random_scaling_CI}
Under the conditions of Theorem~\ref{thm:QTD_random_scaling}, denote
\begin{equation*}
    \mathbf{CI}(\alpha)=\left[\bm c^\top\bar{\bm\theta}_T-q_{\alpha/2}\sqrt{\bm c^\top\widehat{\bV}_T\bm c},\bm c^\top\bar{\bm\theta}_T+q_{\alpha/2}\sqrt{\bm c^\top\widehat{\bV}_T\bm c}\right],
\end{equation*}
where $q_{\alpha/2}$ is the upper $\alpha/2$-quantile of $\gV$. Then,
\begin{align*}
    \lim_{T\to\infty}\PB\brk{\bm c^\top\bm\theta_m\in\mathbf{CI}(\alpha)}=1-\alpha.
\end{align*}
\end{corollary}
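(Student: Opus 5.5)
The statement to prove is Corollary~\ref{cor:QTD_random_scaling_CI}. It follows from Theorem~\ref{thm:QTD_random_scaling} by the continuous mapping theorem together with a check that the limit law $\gV$ has no atoms.

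First I would rewrite the coverage event as a statement about the studentized statistic. When $\bm c^\top\widehat{\bV}_T\bm c>0$, we have $\bm c^\top\bm\theta_m\in\mathbf{CI}(\alpha)$ if and only if $|Z_T|\le q_{\alpha/2}$, where
\[
Z_T\coloneq\frac{\bm c^\top(\bar{\bm\theta}_T-\bm\theta_m)}{\sqrt{\bm c^\top\widehat{\bV}_T\bm c}} .
\]
The event $\bm c^\top\widehat{\bV}_T\bm c=0$ has vanishing probability. Indeed, the proof of Theorem~\ref{thm:QTD_random_scaling} gives
\[
T\,\bm c^\top\widehat{\bV}_T\bm c\Rightarrow\sigma^2\int_0^1\prn{B(u)-uB(1)}^2\rd u ,
\]
where $\sigma^2>0$ is the relevant variance from the hypothesis. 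This limit is almost surely positive, because a Brownian bridge is not identically zero. So this event can be discarded, or absorbed by any fixed convention.

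By Theorem~\ref{thm:QTD_random_scaling}, $Z_T\Rightarrow\gV$, and this holds for each of the three update rules~\eqref{eq:synch_QTD},~\eqref{eq:asynch_QTD} and~\eqref{eq:markov_QTD}. The map $x\mapsto|x|$ is continuous, so the continuous mapping theorem gives $|Z_T|\Rightarrow|\gV|$. The Portmanteau theorem then yields $\PB[|Z_T|\le q_{\alpha/2}]\to\PB[|\gV|\le q_{\alpha/2}]$, provided $q_{\alpha/2}$ is a continuity point of the law of $|\gV|$.

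The main point is therefore to show that $\gV$ is symmetric and has a continuous distribution. Symmetry holds because $B\mapsto -B$ preserves the law of Brownian motion. It sends the numerator $B(1)$ to $-B(1)$ and leaves the denominator unchanged.

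For continuity, I would use that $B(1)$ is independent of the bridge $\bar B(u)=B(u)-uB(1)$, since the two are jointly Gaussian and uncorrelated. The denominator $\brk{\int_0^1\bar B(u)^2\rd u}^{1/2}$ is a function of the bridge alone and is almost surely positive. Conditionally on the bridge, $\gV$ is therefore a Gaussian variable scaled by a positive constant, which has no atoms. Hence $\gV$ has a continuous distribution.

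It follows that $\PB[|\gV|\le q_{\alpha/2}]=1-2\cdot\frac{\alpha}{2}=1-\alpha$, using the definition of the upper quantile and symmetry. This proves the corollary.
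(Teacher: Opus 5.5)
Your proposal is correct and takes the route the paper intends. The paper gives no separate proof; it only says the interval ``follows immediately'' from Theorem~\ref{thm:QTD_random_scaling}, meaning weak convergence of the studentized statistic plus the Portmanteau theorem. You supply the details the paper leaves implicit and that the limit $1-\alpha$ actually needs: $\gV$ is symmetric and has no atoms (via independence of $B(1)$ from the Brownian bridge), and $\bm c^\top\widehat{\bV}_T\bm c>0$ with probability tending to one.
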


The random scaling matrix can be updated online without storing the entire
trajectory. Define
\begin{align*}
    \bA_t&\coloneq\sum_{k=1}^t k^2\bar{\bm\theta}_k\bar{\bm\theta}_k^\top,\\
    \bm b_t&\coloneq\sum_{k=1}^t k^2\bar{\bm\theta}_k,\\
    q_t&\coloneq\sum_{k=1}^t k^2.
\end{align*}
These quantities admit the recursive updates
\begin{align*}
    \bar{\bm\theta}_t&=\frac{t-1}{t}\bar{\bm\theta}_{t-1}+\frac{1}{t}\bm\theta^{(t)},\\
    \bA_t&=\bA_{t-1}+t^2\bar{\bm\theta}_t\bar{\bm\theta}_t^\top,\\
    \bm b_t&=\bm b_{t-1}+t^2\bar{\bm\theta}_t,\\
    q_t&=q_{t-1}+t^2.
\end{align*}
Consequently,
\begin{equation*}
    \what{\bV}_t=\frac{1}{t^3}\prn{\bA_t-\bar{\bm\theta}_t\bm b_t^\top-\bm b_t\bar{\bm\theta}_t^\top+q_t\bar{\bm\theta}_t\bar{\bm\theta}_t^\top}.
\end{equation*}
Thus, both the averaged QTD iterate and its random scaling matrix can be
updated sequentially as new QTD iterates become available.

%% file: proof_outline.tex
In this section, we present proofs of Theorem~\ref{thm:syn_QTD_FCLT},~\ref{thm:asyn_QTD_FCLT} and~\ref{thm:markov_QTD_FCLT}. 

\subsection{Proof of Theorem~\ref{thm:syn_QTD_FCLT} and~\ref{thm:asyn_QTD_FCLT}}
To establish the FCLT for QTD, we first present the following general functional central limit theorem, which is proved in Appendix~\ref{Appendix_omitted_proofs_4}. 
\begin{theorem}\label{thm:general_FCLT}
    Suppose that the $\RB^d$ sequence $(\bZ_t)_{t\geq0}$ is defined by
    \begin{equation*}
        \bZ_t=\bZ_{t-1}-\alpha_{t-1}[\bh(\bZ_{t-1})+\beps_{t-1}], 
    \end{equation*}
    where $\bh$ is a Borel function with a unique zero $\bz^*$.
    Suppose there exists a matrix $\bM$ such that there exists $\delta>0$ satisfying
    \begin{equation}\label{eq:derivative_lip}
        \norm{\bh(\bz)-\bM(\bz-\bz^*)}\leq L\norm{\bz-\bz^*}^2
    \end{equation}
    for $\|\bz-\bz^*\|\leq\delta$, and $\bM$ is a uniformly repulsive matrix (all its eigenvalues have positive real parts). Define $(\gF_t\coloneq\sigma(\bZ_0,...,\bZ_{t+1}))_{t\geq -1}$ and suppose that 
    \begin{enumerate}[(i)]
        \item $\beps_t$ is $\gF_t$-measurable; 
        \item $\EB(\beps_t\mid\gF_{t-1})=0$;
        \item there exists $L_1>0$ such that $\|\beps_t\|\leq L_1$ almost surely; 
        \item $\EB(\beps_t\beps_t^\top\mid\gF_{t-1})\to\bGamma$ almost surely. 
    \end{enumerate}
    Set $\alpha_t=c(t+1)^{-a}$ with $1/2<a<1$ and suppose that $\bZ_n\to \bz^*$ almost surely. Then
    \begin{equation*}
        \frac{1}{\sqrt{T}}\sum_{t=0}^{\lfloor Tu\rfloor}(\bZ_t-\bz^*)\Rightarrow \bM^{-1}\bGamma^{\frac12}\bB(\cdot),
    \end{equation*}
    where $\bB(\cdot)$ is a $d$-dimensional standard Brownian motion. 
\end{theorem}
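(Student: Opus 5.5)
Our plan is to prove Theorem~\ref{thm:general_FCLT} by Polyak--Juditsky linearization combined with a martingale FCLT. Set $\bDelta_t=\bZ_t-\bz^*$ and write $\bh(\bZ_{t-1})=\bM\bDelta_{t-1}+\br_{t-1}$. Here \eqref{eq:derivative_lip} gives $\|\br_{t-1}\|\le L\|\bDelta_{t-1}\|^2$ on $\{\|\bDelta_{t-1}\|\le\delta\}$. Since $\bZ_t\to\bz^*$ almost surely, there is an a.s.\ finite random time after which this bound always applies. Rearranging the recursion gives
\begin{equation*}
\bM\bDelta_{t-1}=\frac{\bDelta_{t-1}-\bDelta_t}{\alpha_{t-1}}-\br_{t-1}-\beps_{t-1}.
\end{equation*}
Summing from $t=1$ to $\lfloor Tu\rfloor+1$ and multiplying by $\bM^{-1}/\sqrt T$ (this inverse exists because $\bM$ is uniformly repulsive) splits the partial-sum process into three pieces: a telescoping/Abel term $\frac{1}{\sqrt T}\sum_t(\alpha_{t-1}^{-1}-\alpha_{t-2}^{-1})\bDelta_{t-1}$ together with boundary terms $\bDelta_0/\alpha_0$ and $\bDelta_{n}/\alpha_{n-1}$; a remainder term $\frac{1}{\sqrt T}\sum_t\br_{t-1}$; and a martingale term $-\frac{1}{\sqrt T}\sum_t\beps_{t-1}$.

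For the martingale term we apply a standard martingale FCLT, for instance Hall--Heyde or Helland. Condition (iii) gives the Lindeberg condition trivially. Condition (iv), combined with Cesàro averaging, gives $\frac1T\sum_{t\le\lfloor Tu\rfloor}\EB(\beps_t\beps_t^\top\mid\gF_{t-1})\to u\bGamma$ a.s. Hence $-\frac{1}{\sqrt T}\sum\beps_{t-1}\Rightarrow\bGamma^{1/2}\bB(\cdot)$, where we absorb the sign by the symmetry of Brownian motion. Applying the continuous map $\bM^{-1}$ then yields the claimed limit. The remaining task is to show that the other two pieces are $o_P(1)$ uniformly in $u\in[0,1]$.

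The main obstacle is a rate for $\bDelta_t$. Almost sure convergence alone is not enough; we need $\EB\|\bDelta_t\|^2\mathbb 1\{\tau_\delta\le t\}\lesssim\alpha_t$, or an a.s.\ rate of the form $\|\bDelta_t\|=O(\sqrt{\alpha_t\log t})$. Our plan is to localize: fix $t_0$ and work on the event $\Omega_{t_0}=\{\|\bDelta_t\|\le\delta'\ \forall t\ge t_0\}$, whose probability tends to $1$ as $t_0\to\infty$. On this event we use a Lyapunov function $V(\bz)=\bz^\top\bH\bz$, where $\bH$ solves $\bM^\top\bH+\bH\bM=\bI$; such $\bH$ exists by repulsiveness. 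Choosing $\delta'$ small makes the quadratic remainder dominated by the linear drift. Then, with a stopped recursion, the standard computation gives $\EB[V(\bDelta_{t})\mathbb 1_{\{\text{no exit before }t\}}]\le(1-c'\alpha_{t-1})\EB[\cdots]+C\alpha_{t-1}^2$, and therefore a bound of order $\alpha_t$.

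With this rate in hand, the uniform bounds follow. For the remainder term, $\frac{1}{\sqrt T}\sum_{t\le T}\EB\|\br_t\|\lesssim T^{-1/2}\sum t^{-a}\asymp T^{1/2-a}\to0$, using $a>1/2$. For the Abel term, $|\alpha_{t}^{-1}-\alpha_{t-1}^{-1}|\asymp t^{a-1}$, so $\frac{1}{\sqrt T}\sum t^{a-1}\EB\|\bDelta_t\|\lesssim T^{-1/2}\sum t^{a/2-1}\asymp T^{(a-1)/2}\to0$, using $a<1$. For the boundary term we need $\sup_{n\le T}\|\bDelta_n\|/(\alpha_n\sqrt T)\to0$ in probability. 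The second-moment bound alone only controls this in $L^2$ at each fixed $n$, so we will strengthen it to a fourth-moment bound $\EB\|\bDelta_n\|^4\lesssim\alpha_n^2$, which the same Lyapunov argument provides. A union bound then gives
\begin{equation*}
\PB\Bigl(\max_{n\le T}\|\bDelta_n\|>\epsilon\alpha_n\sqrt T\Bigr)\lesssim\sum_n\frac{\alpha_n^2}{\alpha_n^4T^2}\asymp T^{2a-1}\,T^{-1}\to0.
\end{equation*}
The contribution from before the localization time $t_0$ is a finite sum divided by $\sqrt T$, so it vanishes. Letting $t_0\to\infty$ completes the argument.
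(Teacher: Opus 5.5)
Your overall route is sound but differs from the paper's. The paper expands $\bDelta_t$ through the Polyak--Juditsky weights $\bA_i^t$ and splits the partial sum into $I_0,\dots,I_4$. There the only boundary-type term, $I_2$, involves just the linearised noise $\sum_i\alpha_i\prod_l(\bI-\alpha_l\bM)\beps_i$, and it is handled by a cited maximal lemma. You instead telescope the nonlinear recursion directly, so your Abel and boundary terms involve the full $\bDelta_n$ and need moment rates for it. Your treatment of the martingale, remainder and Abel pieces is correct. In particular, the stopped Lyapunov bound $\EB[\norm{\bDelta_t}^2\ind\{\tau>t\}]\lesssim\alpha_t$ is essentially the paper's Lemma~\ref{lem:second_order_remainder}.

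The gap is the boundary term $\sup_{n\le T}\norm{\bDelta_{n+1}}/(\alpha_n\sqrt T)$, where your union bound is miscomputed. With the fourth-moment bound you get
\[
\sum_{n\le T}\frac{\alpha_n^2}{\epsilon^4\alpha_n^4T^2}=\frac{1}{\epsilon^4T^2}\sum_{n\le T}\alpha_n^{-2}\asymp\frac{T^{2a+1}}{T^2}=T^{2a-1}\to\infty ,
\]
because $a>1/2$; you dropped the factor $T$ that comes from summing $T$ terms. More generally, a bound $\EB[\norm{\bDelta_n}^{2p}\ind\{\tau>n\}]\lesssim\alpha_n^p$ gives a union bound of order $T^{-p}\sum_{n\le T}\alpha_n^{-p}\asymp T^{1-p(1-a)}$. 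So fourth moments ($p=2$) fail for every $a\in(1/2,1)$, and you need $p>1/(1-a)$.

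This is repairable because the noise is bounded. Apply the same stopped Lyapunov argument to $V^p$ and induct on $p$, controlling the cross terms by the lower moments via H\"older. This yields
\[
\EB[V(\bDelta_t)^p\ind\{\tau>t\}]\le(1-c\alpha_{t-1})\EB[V(\bDelta_{t-1})^p\ind\{\tau>t-1\}]+C\alpha_{t-1}^{p+1},
\]
hence the $\alpha_t^p$ rate for each fixed $p$; then take $p>1/(1-a)$. Two alternatives also work. You could actually prove the almost-sure rate $\norm{\bDelta_n}=O(\sqrt{\alpha_n\log n})$ that you mention; it is not automatic. Or you could split $\bDelta_n$ into its linearised martingale part plus a remainder and apply a maximal inequality to the former, which is what the paper's $I_2$ step does via Lemma~4 of \citet{li2023online}.
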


To apply Theorem~\ref{thm:general_FCLT} in QTD settings, we need the following lemmas, the proofs of which are deferred to Appendix~\ref{Appendix_omitted_proofs_4}. 
\begin{lemma}\label{lem:unique_zero}
    For every $\blambda\in[0,1]^{\gS\times[m]}$, $\bm\eta_m=\bPi^\blambda_m\gT^\pi\bm\eta_m$.
\end{lemma}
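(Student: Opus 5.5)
It suffices to show that for every state $s$, the generalized projection $\Pi^{\blambda}_m$ applied to $[\gT^\pi\bm\eta_m](s)$ returns the same atoms as $\Pi_m$. By definition of $\bm\eta_m$ we have $\Pi_m[\gT^\pi\bm\eta_m](s)=\eta_m(s)$, i.e. $F^{-1}_{\nu_s}(\tau_i)=\theta_m(s,i)$ for every $i$, where $\nu_s\coloneq[\gT^\pi\bm\eta_m](s)$. Since $(\bPi^\blambda_m\bm\eta)(s)$ uses the convex combination $(1-\lambda(s,i))F^{-1}_{\nu_s}(\tau_i)+\lambda(s,i)\bar F^{-1}_{\nu_s}(\tau_i)$, the claim follows once we prove $F^{-1}_{\nu_s}(\tau_i)=\bar F^{-1}_{\nu_s}(\tau_i)$ for all $s$ and $i$. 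This holds whenever $F_{\nu_s}$ is strictly increasing in a neighbourhood of the point $F^{-1}_{\nu_s}(\tau_i)$, because $\bar F^{-1}$ and $F^{-1}$ differ exactly when the level set $\{x: F_{\nu_s}(x)=\tau_i\}$ is a nondegenerate interval.

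The distribution function is explicit. Since $\eta_m(s')=\frac1m\sum_j\delta_{\theta_m(s',j)}$,
\begin{equation*}
F_{\nu_s}(x)=\frac1m\sum_{a\in\gA,s'\in\gS}\sum_{j=1}^m\pi(a\mid s)P(s'\mid s,a)\int_{-\infty}^{x-\gamma\theta_m(s',j)}p_{s,a}(r)\,\rd r .
\end{equation*}
By Assumption~\ref{assump:density}, $\nu_s$ is absolutely continuous with density
\begin{equation*}
f_s(x)=\frac1m\sum_{a,s',j}\pi(a\mid s)P(s'\mid s,a)\,p_{s,a}\bigl(x-\gamma\theta_m(s',j)\bigr),
\end{equation*}
and $f_s$ is strictly positive on the open set $U_s\coloneq\bigcup_{(a,s',j):\,\pi(a\mid s)P(s'\mid s,a)>0}\bigl(\gamma\theta_m(s',j),\,1+\gamma\theta_m(s',j)\bigr)$. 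Hence $F_{\nu_s}$ is continuous and strictly increasing on each connected component of $U_s$.

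The main step is to show that $x_i\coloneq\theta_m(s,i)=F^{-1}_{\nu_s}(\tau_i)$ lies in $U_s$. Since $\tau_i\in(0,1)$, $x_i$ lies in the closed support of $\nu_s$, which is $\overline{U_s}$ (the support is the closure of the union of intervals $[\gamma\theta_m(s',j),1+\gamma\theta_m(s',j)]$ over pairs with positive weight). If $x_i\notin U_s$, then $x_i$ would equal an endpoint, i.e. $x_i-\gamma\theta_m(s',j)\in\{0,1\}$ for some $s'$ with $P^\pi(s'\mid s)>0$ and some $j$. This is exactly what Assumption~\ref{Assumption_no_boundary_quantile} excludes. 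Therefore $x_i\in U_s$, and $f_s>0$ on a neighbourhood of $x_i$.

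Continuity of $F_{\nu_s}$ then gives $F_{\nu_s}(x_i)=\tau_i$, and strict increase near $x_i$ gives $F_{\nu_s}(x)>\tau_i$ for every $x>x_i$. Hence $\bar F^{-1}_{\nu_s}(\tau_i)=x_i$, so every convex combination also equals $x_i$. This yields $(\bPi^\blambda_m\gT^\pi\bm\eta_m)(s)=\eta_m(s)$ for all $s$, which is the claim. Because $\bPi^\blambda_m\gT^\pi$ is a contraction with a unique fixed point, $\bm\eta_m$ is that fixed point for every $\blambda$.

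The only delicate point is the support/endpoint argument in the third paragraph, namely that the quantile cannot sit at a boundary of $U_s$ or in a gap between its components. This is settled by the fact that such points are endpoints of the intervals, which the assumption rules out.
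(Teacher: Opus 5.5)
Your proof is correct and follows the paper's route: positive density of $(\gT^\pi\bm\eta_m)(s)$ around $\theta_m(s,i)$ makes the lower and upper quantiles coincide at each $\tau_i$, so every $\bPi^\blambda_m$ agrees with $\bPi_m$ on $\gT^\pi\bm\eta_m$. The only difference is that the paper cites an external result (Lemma A.2 of the companion work on model-based quantile inference) for positivity and continuity of that density at $\theta_m(s,i)$, whereas you derive the positivity directly from the explicit mixture density and Assumptions~\ref{assump:density} and~\ref{Assumption_no_boundary_quantile}, which also shows that continuity of the density is not needed for this lemma.
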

\begin{lemma}\label{lem:consistency}
    For update rules~\eqref{eq:synch_QTD} and~\eqref{eq:asynch_QTD}, as $T\to\infty$, $\btheta^{(T)}\to\btheta_m$ almost surely. 
\end{lemma}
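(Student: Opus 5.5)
The plan is to prove Lemma~\ref{lem:consistency} with the ODE method of stochastic approximation: identify the mean field of QTD, show that its only zero is $\btheta_m$, and show that $\btheta_m$ is globally asymptotically stable for the associated ODE. A Kushner--Clark/Borkar-type theorem then gives almost sure convergence. Throughout I assume $\alpha_t=c(t+1)^{-a}$ with $1/2<a<1$, as in Theorem~\ref{thm:general_FCLT}, so that $\sum_t\alpha_t=\infty$ and $\sum_t\alpha_t^2<\infty$. For the asynchronous case I also assume $\min_s\mu(s)>0$, since otherwise some coordinates are never updated.

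\textbf{Step 1: mean field and noise.} In the synchronous case~\eqref{eq:synch_QTD} I write the update as $\btheta^{(t)}=\btheta^{(t-1)}+\alpha_{t-1}[\bh(\btheta^{(t-1)})+\beps_{t-1}]$, where
\begin{equation*}
h_{s,i}(\btheta)=\tau_i-\frac1m\sum_{j}\PB\brk{R+\gamma\theta(S',j)<\theta(s,i)},
\end{equation*}
with $A\sim\pi(\cdot\mid s)$ and $(R,S')\sim Q_{s,A}$. The noise $\beps_{t-1}$ is a martingale difference and is bounded by $1$. By Assumption~\ref{assump:density} the reward has a bounded density, so each probability is a continuous function of $\btheta$; in fact it is Lipschitz in $\btheta$ with constant of order $C_0$. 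Hence $\bh$ is continuous and strict versus non-strict inequalities are immaterial. In the asynchronous case~\eqref{eq:asynch_QTD} the mean field becomes $\bD_\mu\bh$, and the noise is again bounded. Next I establish a.s. boundedness of the iterates. If $\theta(s,i)>(1-\gamma)^{-1}+K$ while all other coordinates are controlled, the indicator equals one, so the drift is at most $\tau_i-1<0$; the symmetric statement holds from below. Combined with the $O(\alpha_t)$ step size, this confines the iterates eventually to a compact box. Alternatively, one can use the sup-norm Lyapunov function of Step 3 together with a Borkar--Meyn stability argument.

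\textbf{Step 2: zeros of the mean field.} Suppose $\bh(\btheta)=0$ (equivalently $\bD_\mu\bh(\btheta)=0$ when $\mu>0$). Let $\bm\eta$ be the distribution vector with atoms $\theta(s,\cdot)$. For each $(s,i)$, the condition $h_{s,i}(\btheta)=0$ says that $\theta(s,i)$ lies between the left and right $\tau_i$-quantiles of $(\gT^\pi\bm\eta)(s)$. So $\bm\eta=\bPi^{\blambda}_m\gT^\pi\bm\eta$ for some $\blambda\in[0,1]^{\gS\times[m]}$; sortedness of $\btheta$ follows from the monotonicity of $\tau_i$. By Lemma~\ref{lem:unique_zero}, $\bm\eta_m$ is also a fixed point of $\bPi^\blambda_m\gT^\pi$. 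Since $\bPi^\blambda_m$ is non-expansive and $\gT^\pi$ is a $\gamma$-contraction in $\bar W_\infty$, this fixed point is unique, so $\btheta=\btheta_m$.

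\textbf{Step 3: global stability (main obstacle).} I will use the Lyapunov function $V(\btheta)=\max_{s,i}|\theta(s,i)-\theta_m(s,i)|$ together with the cooperative structure of $\bh$: $h_{s,i}$ is nonincreasing in $\theta(s,i)$ and nondecreasing in every other coordinate. At a coordinate $(s,i)$ attaining $\theta(s,i)-\theta_m(s,i)=M=V(\btheta)>0$, monotonicity gives
\begin{equation*}
h_{s,i}(\btheta)\le h_{s,i}(\btheta_m+M\bm 1),
\end{equation*}
where
\begin{equation*}
h_{s,i}(\btheta_m+M\bm 1)=\tau_i-\frac1m\sum_j\PB\brk{R+\gamma\theta_m(S',j)<\theta_m(s,i)+(1-\gamma)M}.
\end{equation*}
The right-hand side is $\le h_{s,i}(\btheta_m)=0$. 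The symmetric bound holds at a minimizing coordinate. This makes $V$ nonincreasing along ODE trajectories, but that alone only gives a weak Lyapunov function. The hard part is upgrading to \emph{strict} decrease, because the reward density can vanish outside $(0,1)$ and the shifted probability may then fail to increase.

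My first attempt at strictness uses Assumption~\ref{Assumption_no_boundary_quantile} together with a LaSalle argument. On the invariant set where $V$ is constant, every maximizing coordinate must have zero drift. Because $\tau_i$ is never a jump point, this forces the shifted window to capture no probability mass. Propagating this through the transition structure, namely the coordinates $\theta(s',j)$ that feed into $(s,i)$, should show that the configuration is itself a zero of $\bh$, contradicting Step 2. If this argument proves too delicate, the fallback is to import Rowland et al.'s convergence result directly: QTD converges almost surely to the set of fixed points of the operators $\bPi^\blambda_m\gT^\pi$. By Step 2 that set is $\{\btheta_m\}$. With bounded iterates, a continuous mean field, a unique globally attracting equilibrium, and bounded martingale noise, the standard Kushner--Clark theorem then yields $\btheta^{(T)}\to\btheta_m$ almost surely for both~\eqref{eq:synch_QTD} and~\eqref{eq:asynch_QTD}.
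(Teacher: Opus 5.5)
Your fallback is exactly the paper's proof. The paper applies Theorem~14 of \citet{rowland2023analysis} to get $\inf_{\btheta^*\in\bLamb}\|\btheta^{(t)}-\btheta^*\|\to0$ almost surely, where $\bLamb$ collects the fixed points of the operators $\bPi^\blambda_m\gT^\pi$. It then uses Lemma~\ref{lem:unique_zero} together with uniqueness of each fixed point, the same inference as your Step~2, to conclude $\bLamb=\{\btheta_m\}$. On that route nothing more is needed, so your closing appeal to Kushner--Clark with a ``unique globally attracting equilibrium'' is redundant there. As part of your own ODE route it is not yet justified, because Step~3 as written stops short of global attraction: the LaSalle ``propagation through the transition structure'' is a hope, not an argument.

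The ODE route is a legitimate alternative, and its missing step is easier than you fear. Strictness is only needed at the comparison point. Writing $F_s$ for the (continuous) CDF of $(\gT^\pi\bm\eta_m)(s)$, you have
\[
h_{s,i}(\btheta_m+M\bm{1})=\tau_i-F_s\bigl(\theta_m(s,i)+(1-\gamma)M\bigr).
\]
The fact behind Lemma~\ref{lem:unique_zero} (Lemma~A.2 of \citet{cheng2026statistical}) is that $(\gT^\pi\bm\eta_m)(s)$ has a density that is positive and continuous at each $\theta_m(s,i)$. This makes $F_s$ strictly increasing near $\theta_m(s,i)$, so the display is strictly negative for every $M>0$; symmetrically, $h_{s,i}(\btheta_m-M\bm{1})>0$.

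Let $\phi(M)$ be the smallest of these gaps over $(s,i)$. Then $\phi$ is nondecreasing and positive on $(0,\infty)$, and your monotonicity comparison gives $D^+V\le-\phi(V)$ along trajectories, with an extra factor $\min_s\mu(s)$ in the asynchronous case. So $V$ is a strict Lyapunov function and $\btheta_m$ is globally asymptotically stable, with no LaSalle argument needed.

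You should also make Step~1 non-circular: show by induction that the box $[-B,(1-\gamma)^{-1}+B]^{\gS\times[m]}$ is invariant, as in Lemma~\ref{lem:boundedness}. Kushner--Clark then finishes the proof.

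As for the trade-off, the paper's route is two lines but rests on Rowland et al.'s differential-inclusion machinery, which is built for possibly discontinuous mean fields. Your route exploits the fact that Assumption~\ref{assump:density} makes the mean field Lipschitz, so a plain ODE argument suffices. It is essentially the sup-norm Lyapunov argument that the paper itself borrows from Rowland et al.\ for the Markovian case.
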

\begin{lemma}\label{lem:repulsive}
    Both $\bG$ and $\bD_\mu\bG$ are uniformly repulsive. 
\end{lemma}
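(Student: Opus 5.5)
The plan is to show that $\bG$ is strictly row diagonally dominant with positive diagonal entries, and then to read off both claims from Gershgorin's circle theorem.

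\textbf{Rewriting $\bG$.} Define the nonnegative matrix $\bm K$ by
\[
\bm K_{(s,i),(s',j)}\coloneq\sum_{a\in\gA}\pi(a\mid s)P(s'\mid s,a)\,p_{s,a}\prn{\theta_m(s,i)-\gamma\theta_m(s',j)}\ge 0 .
\]
Let $\bLamb\coloneq\diag(\bm K\bm 1)$, so that $\Lambda_{(s,i)}=\sum_{\tilde s,k}\bm K_{(s,i),(\tilde s,k)}$. Comparing with the definition of $\bG$, we have
\[
\bG=\frac1m\prn{\bLamb-\gamma\bm K}.
\]
The diagonal entries of $\bm K$ also carry the factor $-\gamma$, and this is harmless. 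For row $(s,i)$ it gives:
\begin{itemize}
\item diagonal entry $\frac1m\prn{\Lambda_{(s,i)}-\gamma \bm K_{(s,i),(s,i)}}\ge \frac{1-\gamma}{m}\Lambda_{(s,i)}$;
\item sum of absolute off-diagonal entries $\frac{\gamma}{m}\prn{\Lambda_{(s,i)}-\bm K_{(s,i),(s,i)}}$.
\end{itemize}
The diagonal entry minus the off-diagonal sum is therefore $\frac{1-\gamma}{m}\Lambda_{(s,i)}$. Each Gershgorin disc is centred at a real point $c>0$ with radius at most $c-\frac{1-\gamma}{m}\Lambda_{(s,i)}$. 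So once $\Lambda_{(s,i)}>0$ for every $(s,i)$, every eigenvalue of $\bG$ has real part at least $\frac{1-\gamma}{m}\min_{(s,i)}\Lambda_{(s,i)}>0$.

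\textbf{Passing to $\bD_\mu\bG$.} Left multiplication by $\bD_\mu$ multiplies row $(s,i)$ by $\mu(s)$. If $\mu(s)>0$ for all $s$, which is needed for $\bD_\mu^{-1/2}$ in Theorem~\ref{thm:asyn_QTD_FCLT} anyway, this preserves strict row diagonal dominance and positivity of the diagonal. The same Gershgorin argument then gives real parts at least $\frac{1-\gamma}{m}\min_s\mu(s)\min_{(s,i)}\Lambda_{(s,i)}>0$.

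\textbf{Main obstacle: $\Lambda_{(s,i)}>0$.} Since $p_{s,a}>0$ on $(0,1)$, it suffices to find $a$ and $s'$ with $\pi(a\mid s)P(s'\mid s,a)>0$, together with some $k$, such that $\theta_m(s,i)-\gamma\theta_m(s',k)\in(0,1)$. I will use the fixed-point structure. By definition of $\bm\eta_m$, $\theta_m(s,i)$ is the $\tau_i$-quantile of $[\gT^\pi\bm\eta_m](s)$, whose CDF is
\[
F(x)=\sum_{a,s'}\pi(a\mid s)P(s'\mid s,a)\frac1m\sum_{k}F_{R\mid s,a}\prn{x-\gamma\theta_m(s',k)} .
\]
This $F$ is continuous because each reward law has a density.

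Suppose, for contradiction, that every relevant argument $\theta_m(s,i)-\gamma\theta_m(s',k)$ with positive weight lies outside $(0,1)$. By Assumption~\ref{Assumption_no_boundary_quantile} none of them equals $0$ or $1$, so each lies in the open set $\RB\setminus[0,1]$. Hence all of them stay outside $[0,1]$ for $x$ in a neighbourhood of $\theta_m(s,i)$. Since $F_{R\mid s,a}$ is constant ($0$ or $1$) off $[0,1]$, $F$ is constant on that neighbourhood.

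Next, continuity of $F$ and $\theta_m(s,i)=\inf\{x:F(x)\ge\tau_i\}$ give $F(\theta_m(s,i))=\tau_i$, while $F(x)<\tau_i$ for every $x<\theta_m(s,i)$. This means $F$ is not constant on any left neighbourhood of $\theta_m(s,i)$, which is a contradiction. Therefore $\Lambda_{(s,i)}>0$ for all $(s,i)$, and the lemma follows.
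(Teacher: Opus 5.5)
Your proof is correct and follows the paper's route. Both arguments apply Gershgorin's theorem to the strictly row-diagonally-dominant $\bG$ and to its row-rescaling $\bD_\mu\bG$, with the margin in row $(s,i)$ proportional to the density of $(\gT^\pi\bm\eta_m)(s)$ at $\theta_m(s,i)$.

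There are two differences, both to your credit. First, you prove $\Lambda_{(s,i)}>0$ directly from the fixed-point equation and Assumption~\ref{Assumption_no_boundary_quantile}, whereas the paper cites Lemma A.2 of \citet{cheng2026statistical}. Second, your margin $\frac{1-\gamma}{m}\Lambda_{(s,i)}=(1-\gamma)\,p_{(\gT^\pi\bm\eta_m)(s)}(\theta_m(s,i))$ is the exact value; the paper's display drops the factor $1-\gamma$, which does not affect its conclusion.
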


Now we can prove Theorem~\ref{thm:syn_QTD_FCLT} and~\ref{thm:asyn_QTD_FCLT}. 
\begin{proof}[Proof of Theorem~\ref{thm:syn_QTD_FCLT}]
Define
\begin{align*}
    (\bh(\btheta))_{s,i}&=\sum_{j=1}^m\sum_{a\in\gA,\,s^\prime\in\gS}\frac{\pi(a\mid s)P(s^\prime\mid s,a)}{m}F_{s,a}\bigl(\theta(s,i)-\gamma\theta(s^\prime,j)\bigr)-\tau_i,\\
    \bigl(\bH^{(t)}(\btheta)\bigr)_{s,i}&=\frac{1}{m}\sum_{j=1}^m\ind\brc{r^{(t,s)}+\gamma\theta\prn{s^{\prime(t,s)},j}<\theta(s,i)}-\tau_i .
\end{align*}
By construction, $\EB[\bH^{(t)}(\btheta)]=\bh(\btheta)$. 
Therefore, the synchronous QTD update can be written as
\begin{align*}
    \btheta^{(t)}&=\btheta^{(t-1)}-\alpha_{t-1}\brk{\bh(\btheta^{(t-1)})+\bH^{(t)}(\btheta^{(t-1)})-\bh(\btheta^{(t-1)})}\\
    &\coloneq\btheta^{(t-1)}-\alpha_{t-1}\brk{\bh(\btheta^{(t-1)})+\beps_{t-1}}.
\end{align*}

Since
\begin{equation*}
    (\bh(\btheta))_{s,i}=F_{(\gT^\pi\bm\eta_{\btheta})(s)}\bigl(\theta(s,i)\bigr)-\tau_i,
\end{equation*}
where 
\begin{equation*}
    \eta_{\btheta}(s)=\frac1m\sum_{i=1}^m\delta_{\theta(s,i)},
\end{equation*}
if $\btheta^*$ is a zero of $\bh$, there exists $\blambda$ such that $\bm\eta_{\btheta^*}$ is the fixed point of $\bPi_m^\blambda$. Therefore, $\bh$ has a unique zero $\btheta_m$ by Lemma~\ref{lem:unique_zero}. Moreover, direct differentiation gives $\nabla\bh(\btheta_m)=\bG$. 

By the Lemma~\ref{lem:repulsive}, $\bG$ is uniformly repulsive. Furthermore,
Assumptions~\ref{assump:density} and
\ref{Assumption_no_boundary_quantile} ensure that $\nabla\bh$ is
Lipschitz continuous in a neighborhood of $\btheta_m$. Since
$\bh(\btheta_m)=0$, it follows that
\begin{equation*}
    \norm{\bh(\btheta)-\bG(\btheta-\btheta_m)}\leq L\norm{\btheta-\btheta_m}^2
\end{equation*}
for all $\btheta$ in a sufficiently small neighborhood of
$\btheta_m$, and hence Equation~\eqref{eq:derivative_lip} holds.

It remains to verify the noise conditions in
Theorem~\ref{thm:general_FCLT}. Recall that
\begin{equation*}
    \beps_{t-1}=\bH^{(t)}(\btheta^{(t-1)})-\bh(\btheta^{(t-1)}).
\end{equation*}
With
\begin{equation*}
    \gF_t=\sigma\prn{\btheta^{(0)},s^{(1)},a^{(1)},r^{(1)},s^{\prime(1)},\ldots,s^{(t+1)},a^{(t+1)},r^{(t+1)},s^{\prime(t+1)}},
\end{equation*}
$\beps_{t}$ is $\gF_{t}$-measurable. Moreover, since the samples
used at iteration $t$ are independent of $\gF_{t-2}$, we have $\EB[\beps_{t}\mid\gF_{t-1}]=0$. 
Since each coordinate of $\bH^{(t)}(\btheta)$ is uniformly bounded,
the sequence $(\beps_t)_{t\geq0}$ is uniformly bounded almost surely.
Finally, by the almost sure convergence
$\btheta^{(t)}\to\btheta_m$ and the continuity of the corresponding
conditional covariance matrix, $\EB[\beps_{t}\beps_{t}^{\top}\mid\gF_{t-1}]\to\bGamma$ almost surely. Thus all the conditions of Theorem~\ref{thm:general_FCLT} are
satisfied. Applying Theorem~\ref{thm:general_FCLT} completes the proof.
\end{proof}

\begin{proof}[Proof of Theorem~\ref{thm:asyn_QTD_FCLT}]
Recall the function $\bh$ defined in the proof of
Theorem~\ref{thm:syn_QTD_FCLT}. Define
\begin{equation*}
    \bigl(\bH^{(t)}(\btheta)\bigr)_{s,i}=\ind\brc{s=s^{(t)}}\brk{\frac{1}{m}\sum_{j=1}^m\ind\brc{r^{(t)}+\gamma\theta\prn{s^{\prime(t)},j}<\theta(s,i)}-\tau_i}.
\end{equation*}
By construction, $\EB\brk{\bH^{(t)}(\btheta)}=\bD_\mu\bh(\btheta)$. 
Consequently, the asynchronous QTD update can be written as
\begin{align*}
    \btheta^{(t)}&=\btheta^{(t-1)}-\alpha_{t-1}\brk{\bD_\mu\bh(\btheta^{(t-1)})+\bH^{(t)}(\btheta^{(t-1)})-\bD_\mu\bh(\btheta^{(t-1)})}\\
    &\coloneq\btheta^{(t-1)}-\alpha_{t-1}\brk{\bD_\mu\bh(\btheta^{(t-1)})+\beps_{t-1}} .
\end{align*}

Since $\mu(s)>0$ for every $s\in\gS$, $\bD_\mu$ is invertible.
Hence $\bD_\mu\bh$ has the same unique zero $\btheta_m$ as $\bh$.
Moreover,
\begin{equation*}
    \nabla\prn{\bD_\mu\bh}(\btheta_m)=\bD_\mu\nabla\bh(\btheta_m)=\bD_\mu\bG.
\end{equation*}
By Lemma~\ref{lem:repulsive}, $\bD_\mu\bG$ is uniformly repulsive.
Furthermore, since $\nabla\bh$ is Lipschitz continuous in a
neighborhood of $\btheta_m$ and $\bD_\mu$ is fixed,
$\nabla(\bD_\mu\bh)$ is also Lipschitz continuous in a neighborhood
of $\btheta_m$. Therefore,
\begin{equation*}
    \norm{\bD_\mu\bh(\btheta)-\bD_\mu\bG(\btheta-\btheta_m)}\leq L\norm{\btheta-\btheta_m}^2
\end{equation*}
for all $\btheta$ in a sufficiently small neighborhood of
$\btheta_m$, and hence Equation~\eqref{eq:derivative_lip} holds.

It remains to verify the noise conditions. Recall that
\begin{equation*}
    \beps_{t-1}=\bH^{(t)}(\btheta^{(t-1)})-\bD_\mu\bh(\btheta^{(t-1)}).
\end{equation*}
Then $\beps_{t}$ is $\gF_{t}$-measurable. Since the sample used at iteration $t$ is independent of $\gF_{t-2}$, conditional on
the past we have $\EB[\beps_{t}\mid\gF_{t-1}]=0$. 
Again, each coordinate of $\bH^{(t)}(\btheta)$ is uniformly bounded,
so $(\beps_t)_{t\geq0}$ is uniformly bounded almost surely.
Furthermore, by the almost sure convergence
$\btheta^{(t)}\to\btheta_m$ and the continuity of the corresponding
conditional covariance matrix, $\EB\brk{\beps_{t}\beps_{t}^{\top}\mid\gF_{t-1}}\to\bGamma_{\mathrm{asyn}}$ almost surely, where $\bGamma_{\mathrm{asyn}}$ is defined by
\begin{equation*}
    (\bGamma_{\mathrm{asyn}})_{(s,i),(\tilde{s},k)}=\cov_{S\sim\mu,A\sim\pi(a\mid S),(R,S^\prime)\sim Q_{S,A}}\prn{\ind\brc{S=s}\xi_{s,i}(A,R,S^\prime),\ind\brc{S=\tilde{s}}\xi_{\tilde{s},k}(A,R,S^\prime)}. 
\end{equation*}
A direct calculation shows that $\bGamma_{\mathrm{asyn}}=\bD_\mu\bGamma$. Thus all the conditions of Theorem~\ref{thm:general_FCLT} are
satisfied with
\begin{equation*}
    \bM=\bD_\mu\bG,\bGamma=\bD_\mu\bGamma. 
\end{equation*}
Applying Theorem~\ref{thm:general_FCLT} completes the proof.
\end{proof}

\subsection{Proof of Theorem~\ref{thm:markov_QTD_FCLT}}
To prove Theorem~\ref{thm:markov_QTD_FCLT}, we need to establish the consistency result first. 
\begin{lemma}\label{lem:consistency_markov}
    For update rules~\eqref{eq:markov_QTD}, as $T\to\infty$, $\btheta^{(T)}\to\btheta_m$ almost surely. 
\end{lemma}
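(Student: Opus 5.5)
We prove almost sure convergence of Markovian QTD by the ODE method for stochastic approximation with Markovian noise, reducing it to the same ODE that underlies Lemma~\ref{lem:consistency}. Write the update \eqref{eq:markov_QTD} as
\begin{equation*}
    \btheta^{(t)}=\btheta^{(t-1)}-\alpha_t\,\bH\prn{\btheta^{(t-1)},X_t},\qquad X_t\coloneq\prn{s^{(t)},a^{(t)},r^{(t)},s^{\prime(t)}},
\end{equation*}
with $(\bH(\btheta,x))_{s,i}=\ind\{s=s^{(t)}\}\brk{\frac1m\sum_j\ind\{r+\gamma\theta(s',j)<\theta(s,i)\}-\tau_i}$. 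Under Assumption~\ref{assump:recurrence}, $(X_t)$ is a uniformly ergodic Markov chain whose stationary law has state marginal $d^\pi$. Averaging $\bH$ over this law gives the mean field $\bD_{d^\pi}\bh(\btheta)$, with $\bh$ as in the proof of Theorem~\ref{thm:syn_QTD_FCLT}. Because $\bH$ is bounded, the iterates move by at most $O(\alpha_t)$ per step. Moreover, $\bh$ is continuous: by Assumption~\ref{assump:density} the reward CDFs $F_{s,a}$ are continuous, which smooths the indicators.

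The first key step is to show that the interpolated trajectory is an asymptotic pseudo-trajectory of the ODE $\dot{\btheta}=-\bD_{d^\pi}\bh(\btheta)$. I would use the Poisson equation. For each fixed $\btheta$, solve $\bU_\btheta(x)-\mathcal{P}\bU_\btheta(x)=\bH(\btheta,x)-\bD_{d^\pi}\bh(\btheta)$, where $\mathcal{P}$ is the kernel of $(X_t)$. Since the chain is finite-state in $s$ and geometrically ergodic, we can take $\bU_\btheta(x)=\sum_{k\ge0}\prn{\mathcal{P}^k\bH(\btheta,\cdot)(x)-\bD_{d^\pi}\bh(\btheta)}$, which is uniformly bounded in $\btheta$. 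The noise $\bH-\bD_{d^\pi}\bh$ then splits into three pieces:
\begin{itemize}
    \item a martingale difference $\bU_{\btheta^{(t-1)}}(X_{t})-\mathcal{P}\bU_{\btheta^{(t-1)}}(X_{t-1})$, whose weighted sums converge a.s. because $\sum\alpha_t^2<\infty$;
    \item a telescoping term, which is $O(\alpha_t)$;
    \item a remainder involving $\bU_{\btheta^{(t)}}-\bU_{\btheta^{(t-1)}}$.
\end{itemize}

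\textbf{Main obstacle.} The third piece is the difficult one. $\bH(\btheta,x)$ is discontinuous in $\btheta$, so $\btheta\mapsto\bU_\btheta(x)$ is not Lipschitz pointwise. It is, however, Lipschitz after one transition: $\mathcal{P}\bH(\btheta,\cdot)$ integrates the indicator against the reward density, whose bound is $C_0$. Hence $\mathcal{P}\bU_\btheta$ is Lipschitz in $\btheta$, uniformly in $x$. I would therefore put only the $\mathcal{P}\bU$ part into the remainder; it contributes $O(\alpha_t\|\btheta^{(t)}-\btheta^{(t-1)}\|)=O(\alpha_t^2)$, which is summable. The non-smooth part would be absorbed into the martingale term, evaluated at the previous iterate so that the martingale property holds. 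If needed, one can work instead with a $k$-step look-ahead decomposition, in which every discontinuous indicator is integrated against a density before its difference is taken.

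With the pseudo-trajectory property in hand, I would reuse the stability and global-attractor arguments behind Lemma~\ref{lem:consistency}, with $\bD_\mu$ replaced by $\bD_{d^\pi}$; we may assume that lemma's proof applies to any positive diagonal weighting, since $\min_s d^\pi(s)>0$.

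\begin{itemize}
    \item \emph{Boundedness.} The iterates stay bounded: if $\theta(s,i)$ exceeds $(1-\gamma)^{-1}$ plus a margin, every indicator in its update equals one, so the coordinate is pushed down, and symmetrically from below.
    \item \emph{Global attractor.} The ODE has $\btheta_m$ as a globally asymptotically stable equilibrium. By Lemma~\ref{lem:unique_zero} it is the unique zero of $\bh$, and rescaling coordinates by positive constants preserves the order-preserving (quasi-monotone) structure used by \citet{rowland2023analysis}, which yields the $\bar W_\infty$ Lyapunov/contraction argument.
\end{itemize}

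Applying the Benaïm limit-set theorem then gives $\btheta^{(T)}\to\btheta_m$ almost surely.
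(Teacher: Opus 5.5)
Your proposal is correct and follows essentially the same route as the paper. Both arguments use the box argument for boundedness, a Poisson-equation decomposition showing the interpolated iterates form a bounded perturbed solution of $\dot{\btheta}=-\bD_{d^\pi}\bh(\btheta)$, and the sup-norm Lyapunov function of \citet{rowland2023analysis}, extended to positive state weights, within the \citet{benaim2006dynamics} framework. The only difference is packaging: the paper first peels off the conditional martingale difference $\bXi^{(t)}$ and then solves the Poisson equation on the state chain alone for the already-smoothed $\bq(\btheta,s)$. That solution coincides with your $\mathcal{P}\bU_\btheta$, so the paper avoids the non-smoothness obstacle you flagged from the start.
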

The proof of Lemma~\ref{lem:consistency_markov} applies the framework in~\citet{benaim2006dynamics} and is a generalization of~\citet{rowland2023analysis}. To establish the FCLT, we need to introduce the Poisson equation first. Recall the definition of $\bh(\btheta)$ in the previous section and denote $\bD_{\delta_s}$ as $\bD_s$, then we have the following proposition, which is proved in Appendix~\ref{Appendix_technical_lemmas}. 
\begin{proposition}\label{prop:poisson_eq}
    For every $\btheta$ there exists a unique $\bv(\btheta,\cdot)\in(\RB^{\gS\times[m]})^\gS$ such that
    \begin{align*}
        \bv(\btheta,s)-\gP^\pi\bv(\btheta,s)&=(\bD_s-\bD_{d^\pi})\bh(\btheta)\coloneq\bq(\btheta,s),\\
        \sum_{s\in\gS}d^\pi(s)\bv(\btheta,s)&=0,
    \end{align*}
    where
    \begin{equation*}
        \gP^\pi\bv(\btheta,s)=\sum_{s^\prime\in\gS}P^\pi(s^\prime\mid s)\bv(\btheta,s^\prime). 
    \end{equation*}
\end{proposition}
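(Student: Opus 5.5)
Fix $\btheta$ and treat the statement as a family of finite-dimensional linear problems: for each coordinate $(i)$ of $\RB^{\gS\times[m]}$ we need to solve the scalar Poisson equation $f - P^\pi f = g$ on the finite state space $\gS$, where $g(s)=(\bq(\btheta,s))_{(\cdot)}$. The plan is to show that, coordinatewise, the operator $I-P^\pi$ restricted to the subspace of functions with $d^\pi$-mean zero is a bijection, and that $g$ lies in the range.

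The first step is the centering condition $\sum_s d^\pi(s)\bq(\btheta,s)=0$. Since $\bD_s$ is linear in the indicator of $s$, this follows from $\sum_s d^\pi(s)\bD_s=\bD_{d^\pi}$. Hence $\sum_s d^\pi(s)(\bD_s-\bD_{d^\pi})\bh(\btheta)=0$. By stationarity, $d^{\pi\top}(I-P^\pi)=0$, so this centering is necessary for solvability.

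For existence, I would use Assumption~\ref{assump:recurrence}: the chain is irreducible and aperiodic on a finite set, so $(P^\pi)^n\to\mathbf{1}d^{\pi\top}$ geometrically fast. I would then define
\[
\bv(\btheta,s)=\sum_{n=0}^\infty \sum_{s'}\bigl((P^\pi)^n\bigr)(s,s')\,\bq(\btheta,s').
\]
Because $\bq$ is $d^\pi$-centered, the $n$-th term equals $\bigl(((P^\pi)^n-\mathbf{1}d^{\pi\top})\bq\bigr)(s)$, which decays geometrically, so the series converges. Telescoping gives $\bv-\gP^\pi\bv=\bq$. Applying $d^{\pi\top}$ to each term gives zero, since $d^{\pi\top}(P^\pi)^n=d^{\pi\top}$, so the normalization holds. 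An alternative route is to invert $I-P^\pi+\mathbf{1}d^{\pi\top}$, the fundamental matrix, which is invertible under the same assumption.

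For uniqueness, take the difference $\bw$ of two solutions. It satisfies $\bw=\gP^\pi\bw$, so each coordinate is $P^\pi$-harmonic. Iterating gives $\bw=(P^\pi)^n\bw\to\mathbf{1}d^{\pi\top}\bw=0$ by the normalization. One could also argue via irreducibility and the maximum principle: a harmonic function is constant, and the constant must be zero.

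None of these steps is a real obstacle. The only point needing care is verifying the centering identity, which makes the series converge. Aperiodicity is not even needed if we use the fundamental-matrix approach, since irreducibility alone makes $1$ a simple eigenvalue of $P^\pi$.
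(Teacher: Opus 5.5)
Your proof is correct, but its existence step takes a different route from the paper's. The paper argues purely algebraically. On the finite-dimensional centered subspace $\gC_0=\{\bm f:\sum_s d^\pi(s)\bm f(s)=0\}$, the map $\gI-\gP^\pi$ is injective, because a $P^\pi$-harmonic function of an irreducible chain is constant and the centering forces that constant to vanish. Finite-dimensionality then gives a bounded inverse, so $\bv(\btheta,\cdot)=(\gI-\gP^\pi)^{-1}_{\gC_0}\bq(\btheta,\cdot)$, and the same injectivity gives uniqueness. This is essentially your fallback fundamental-matrix / maximum-principle argument, and it needs only irreducibility.

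Your main route instead builds $\bv$ explicitly as the series $\sum_{n\ge0}(\gP^\pi)^n\bq$. It relies on geometric convergence $\|(P^\pi)^n-\mathbf{1}d^{\pi\top}\|\le C\beta^n$, which also uses aperiodicity (available under Assumption~\ref{assump:recurrence}). In exchange you get an explicit representation and a quantitative bound $\max_s\|\bv(\btheta,s)\|\lesssim(1-\beta)^{-1}\max_s\|\bq(\btheta,s)\|$ in terms of the mixing rate. Since $\bv$ is linear in $\bq$, this makes the constants in Lemma~\ref{lem:poisson_properties} explicit.

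You also verify two points the paper leaves implicit, both of which are needed for $(\gI-\gP^\pi)^{-1}_{\gC_0}\bq$ to make sense. First, $\bq(\btheta,\cdot)\in\gC_0$, via $\sum_s d^\pi(s)\bD_s=\bD_{d^\pi}$. Second, $\gI-\gP^\pi$ maps into $\gC_0$, by stationarity.
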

Denote $\bDelta^{(t)}=\btheta^{(t)}-\btheta_m$ and define
\begin{equation*}
    \bigl(\bH^{(t)}(\btheta)\bigr)_{s,i}=\ind\brc{s=s^{(t)}}\brk{\frac{1}{m}\sum_{j=1}^m\ind\brc{r^{(t)}+\gamma\theta\prn{s^{\prime(t)},j}<\theta(s,i)}-\tau_i}.
\end{equation*}
Then we have the following decomposition:
\begin{align}
    \bDelta^{(t)}
    =&(\bI-\alpha_{t-1}\bM)\bDelta^{(t-1)}\notag\\
    &-\alpha_{t-1}
    \begin{aligned}[t]
        \bigg[&\bD_{d^\pi}\bh(\btheta^{(t-1)})-\bM\bDelta^{(t-1)}\\
        &+\bv(\btheta^{(t-1)},s^{(t)})-\bv(\btheta^{(t-1)},s^{(t+1)})\\
        &+\bv(\btheta^{(t-1)},s^{(t+1)})-\gP^\pi\bv(\btheta^{(t-1)},s^{(t)})\\
        &+\bH^{(t)}(\btheta^{(t-1)})-\bH^{(t)}(\btheta_m)-\bD_{s^{(t)}}\bh(\btheta^{(t-1)})\\
        &+\bH^{(t)}(\btheta_m)\bigg]
    \end{aligned}\notag\\
    \coloneq&(\bI-\alpha_{t-1}\bM)\bDelta^{(t-1)}-\alpha_{t-1}\prn{\brho_{t-1}+\bw_{t-1}+\bphi_{t-1}+\bpsi_{t-1}+\bzeta_{t-1}}\label{eq:markov_decomposition},
\end{align}
where we denote $\bM=\bD_{d^\pi}\bG$. Denote $\gF_{t}\coloneq\sigma(\btheta^{(0)},s^{(1)},a^{(1)},r^{(1)},\ldots,s^{(t+1)},a^{(t+1)},r^{(t+1)},s^{(t+2)})_{t\geq-1}$, then we can verify that $\bphi_t, \bpsi_t, \bzeta_t$ are $\gF_t$-martingale difference sequences and we denote $\beps_t=\bphi_t+\bpsi_t+\bzeta_t$. Now we introduce an auxiliary sequence $\wtilde{\bDelta}^{(t)}=\bDelta^{(t)}-\alpha_t\bv(\btheta^{(t)},s^{(t+1)})$, which satisfies the recursion
\begin{equation*}
    \wtilde{\bDelta}^{(t)}=(\bI-\alpha_{t-1}\bM)\wtilde{\bDelta}^{(t-1)}\alpha_{t-1}(\wtilde{\brho}_{t-1}+\beps_{t-1}), 
\end{equation*}
where
\begin{equation*}
    \wtilde{\brho}_t=\brho_{t}+\alpha_t\bM\bv(\btheta^{(t)},s^{(t+1)})+\bv(\btheta^{(t+1)},s^{(t+2)})-\bv(\btheta^{(t)},s^{(t+2)})-\prn{1-\frac{\alpha_{t+1}}{\alpha_t}}\bv(\btheta^{(t)},s^{(t+1)}). 
\end{equation*}
We can deduce Theorem~\ref{thm:markov_QTD_FCLT} from the following two lemmas, which are proved in Appendix~\ref{Appendix_omitted_proofs_4}. 
\begin{lemma}\label{lem:aux_FCLT}
    As $T\to\infty$, 
    \begin{equation*}
        \frac{1}{\sqrt{T}}\sum_{t=0}^{\lfloor Tu\rfloor}\wtilde{\bDelta}^{(t)}\Rightarrow\bG^{-1}\bD_{d^\pi}^{-\frac12}\bGamma^{\frac12}\bm B(\cdot).
    \end{equation*}
\end{lemma}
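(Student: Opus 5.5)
We prove Lemma~\ref{lem:aux_FCLT} by reducing it to the argument behind Theorem~\ref{thm:general_FCLT}. The recursion for $\wtilde{\bDelta}^{(t)}$ is a linear stochastic approximation driven by the matrix $\bM=\bD_{d^\pi}\bG$. It has a martingale-difference noise $\beps_t$ and a remainder $\wtilde{\brho}_t$. (We read the displayed recursion as $\wtilde{\bDelta}^{(t)}=(\bI-\alpha_{t-1}\bM)\wtilde{\bDelta}^{(t-1)}-\alpha_{t-1}(\wtilde{\brho}_{t-1}+\beps_{t-1})$.) By Lemma~\ref{lem:repulsive}, applied with $\mu=d^\pi$ (the argument only uses positivity of the weights, which Assumption~\ref{assump:recurrence} supplies), $\bM$ is uniformly repulsive.

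The standard Polyak--Juditsky route is as follows. Sum the recursion: $\wtilde{\bDelta}^{(t-1)}=\bM^{-1}\alpha_{t-1}^{-1}(\wtilde{\bDelta}^{(t-1)}-\wtilde{\bDelta}^{(t)})-\bM^{-1}(\wtilde{\brho}_{t-1}+\beps_{t-1})$. Then use summation by parts on the first term, or equivalently the representation $\sum_{t}\wtilde{\bDelta}^{(t)}=\bM^{-1}\sum_t(-\beps_t)+R_T(u)$. This splits $T^{-1/2}\sum_{t\le Tu}\wtilde{\bDelta}^{(t)}$ into a leading martingale part $-\bM^{-1}T^{-1/2}\sum_{t\le Tu}\beps_t$ and remainders. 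We must show the remainders vanish uniformly in $u\in[0,1]$ in probability.

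\textbf{Martingale part.} We apply the martingale FCLT, for instance via the Lindeberg condition together with convergence of the conditional quadratic variation. The terms $\beps_t=\bphi_t+\bpsi_t+\bzeta_t$ are bounded, because $\bH^{(t)}$ is bounded and $\bv(\btheta,\cdot)$ is bounded uniformly in $\btheta$. This boundedness follows from Proposition~\ref{prop:poisson_eq}, since $\bv=(\bI-\gP^\pi+\mathbf 1 (d^\pi)^\top)^{-1}\bq$ with $\bq$ bounded. The terms $\bphi_t$ and $\bpsi_t$ have conditional second moments tending to zero almost surely:
\begin{itemize}
\item for $\bphi_t$, by the Lipschitz continuity of $\bv$ in $\btheta$, inherited from $\bh$ through the linear solve, together with $\btheta^{(t)}\to\btheta_m$ (Lemma~\ref{lem:consistency_markov}); the noise itself does not vanish, but $\bv(\btheta_m,\cdot)$ contributes a fixed Markov-chain martingale whose covariance must be combined with that of $\bzeta$;
\item for $\bpsi_t$, since the conditional variance of an indicator difference is bounded by the probability that the indicators differ, which is $O(\|\bDelta^{(t-1)}\|)$ by the bounded density in Assumption~\ref{assump:density}.
\end{itemize}

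The covariance needs care. By the ergodic theorem for the Markov chain, $\frac1T\sum_t\EB[\beps_t\beps_t^\top\mid\gF_{t-1}]$ converges to the stationary average of the covariance of $\bzeta+\bphi$ at $\btheta_m$. Here $\bphi$ at $\btheta_m$ is zero, because $\bh(\btheta_m)=0$ gives $\bq(\btheta_m,\cdot)=0$ and hence $\bv(\btheta_m,\cdot)=0$. So only $\bzeta$ remains. Its stationary covariance is $\sum_s d^\pi(s)\bD_s\bGamma\bD_s=\bD_{d^\pi}\bGamma$, since $\bGamma$ is block diagonal in $s$. Therefore $\bM^{-1}(\bD_{d^\pi}\bGamma)^{1/2}=\bG^{-1}\bD_{d^\pi}^{-1}\bD_{d^\pi}^{1/2}\bGamma^{1/2}=\bG^{-1}\bD_{d^\pi}^{-1/2}\bGamma^{1/2}$, which is the claimed limit. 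Only quadratic-variation convergence is needed, so Cesàro convergence suffices here.

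\textbf{Remainders.} These are the main obstacle. They consist of:
\begin{itemize}
\item the boundary and summation-by-parts terms $\alpha_t^{-1}\wtilde{\bDelta}^{(t)}$ weighted by $\bM^{-1}$, together with the differences $(\alpha_{t}^{-1}-\alpha_{t-1}^{-1})\wtilde{\bDelta}^{(t)}$;
\item the sum of $\wtilde{\brho}_t$.
\end{itemize}
Controlling both requires a rate: $\EB\|\bDelta^{(t)}\|^2=O(\alpha_t)$, or at least $\sum_t \|\bDelta^{(t)}\|^2/\sqrt t<\infty$ almost surely and $\alpha_T^{-1}\|\wtilde{\bDelta}^{(T)}\|=o(\sqrt T)$.

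First, we obtain almost sure convergence, and then a local rate. On the event $\{\|\bDelta^{(t)}\|\le\delta\ \forall t\ge t_0\}$, whose probability tends to one by Lemma~\ref{lem:consistency_markov}, we use a Lyapunov function $\bDelta^\top\bm P\bDelta$ with $\bm P\bM+\bM^\top\bm P\succ0$. The linearization bound $\|\brho_t\|\le L\|\bDelta^{(t)}\|^2$ comes from Equation~\eqref{eq:derivative_lip} for $\bD_{d^\pi}\bh$. The other components of $\wtilde{\brho}_t$ are small:
\begin{itemize}
\item $\alpha_t\bM\bv$ is $O(\alpha_t)$;
\item $\bv(\btheta^{(t+1)},\cdot)-\bv(\btheta^{(t)},\cdot)$ is $O(\alpha_t)$, by Lipschitzness and the bounded increments;
\item $(1-\alpha_{t+1}/\alpha_t)\bv$ is $O(1/t)$.
\end{itemize}
Each is $o(t^{-1/2})$ summed against $T^{-1/2}$, since $a>1/2$ gives $\sum_{t\le T}\alpha_t=O(T^{1-a})=o(\sqrt T)$. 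The quadratic part gives $T^{-1/2}\sum_t\|\bDelta^{(t)}\|^2=O(T^{-1/2}\sum_t t^{-a})\to0$ in $L^1$ on the good event. Localizing with stopping times handles the complement, whose probability vanishes.

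Finally, $\wtilde{\bDelta}$ and $\bDelta$ differ by $O(\alpha_t)$, so the same rate transfers between them. The uniformity in $u$ follows because every remainder bound is monotone in the number of summands. This completes the plan.
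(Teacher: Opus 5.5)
Your architecture matches the paper's. The shared ingredients are the Poisson-corrected sequence $\wtilde{\bDelta}^{(t)}$, and the split $\beps_t=\bphi_t+\bpsi_t+\bzeta_t$ with vanishing conditional second moments for $\bphi_t,\bpsi_t$. They also include the ergodic theorem giving quadratic variation $\bD_{d^\pi}\bGamma$ for $\bzeta_t$, and a localized Lyapunov bound $\EB\norm{\bDelta^{(t)}}^2\ind_{G}\lesssim\alpha_t$ that disposes of $\wtilde{\brho}_t$. You should drop the remark in your first bullet that $\bv(\btheta_m,\cdot)$ contributes a non-vanishing martingale, since it contradicts your later, correct, observation that $\bv(\btheta_m,\cdot)=0$.

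\textbf{The gap is uniformity in $u$.} Summation by parts gives, with $n=\down{Tu}$,
\[
\sum_{t=0}^{n}\wtilde{\bDelta}^{(t)}=\bM^{-1}\Bigl[\alpha_0^{-1}\wtilde{\bDelta}^{(0)}-\alpha_n^{-1}\wtilde{\bDelta}^{(n+1)}+\sum_{t=1}^{n}\bigl(\alpha_t^{-1}-\alpha_{t-1}^{-1}\bigr)\wtilde{\bDelta}^{(t)}\Bigr]-\bM^{-1}\sum_{t=0}^{n}\bigl(\wtilde{\brho}_t+\beps_t\bigr).
\]
The boundary term $T^{-1/2}\alpha_n^{-1}\wtilde{\bDelta}^{(n+1)}$ is not a partial sum of nonnegative quantities. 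So your claim that ``every remainder bound is monotone in the number of summands'' says nothing about its supremum over $n\le T$. Your $L^2$ rate controls it only at a fixed $u$; for example, $T^{-1/2}\alpha_T^{-1}\norm{\wtilde{\bDelta}^{(T+1)}}=O_p(T^{(a-1)/2})$ on $G$. A union bound over $n\le T$ only yields
\[
\PB\Bigl(\max_{n\le T}\alpha_n^{-1}\norm{\wtilde{\bDelta}^{(n)}}\ind_{G}>\epsilon\sqrt{T}\Bigr)\le\frac{1}{\epsilon^{2}T}\sum_{n\le T}\alpha_n^{-2}\,\EB\norm{\wtilde{\bDelta}^{(n)}}^2\ind_{G}\lesssim\frac{1}{\epsilon^{2}T}\sum_{n\le T}\alpha_n^{-1}\asymp\frac{T^{a}}{\epsilon^{2}},
\]
which does not vanish. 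As written, you obtain convergence of the finite-dimensional marginals but not the functional statement. The missing ingredient is a maximal inequality for this boundary term.

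The paper sidesteps this by reusing the Polyak--Juditsky weights $\bA_i^t$ from the proof of Theorem~\ref{thm:general_FCLT}. Since $\norm{\bA_i^t}\le C$, the remainder enters only through $T^{-1/2}\sum_{t\le T}\norm{\wtilde{\brho}_t}$, which is genuinely monotone. The only delicate non-monotone piece is then the purely noise-driven term $I_2$, namely $\alpha_n^{-1}\sum_{i\le n}\alpha_i\prod_{l=i+1}^{n}(\bI-\alpha_l\bM)\beps_i$. Its supremum over $n\le T$ is handled by a dedicated maximal lemma (Lemma~4 of \citet{li2023online}).

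You can repair your route in the same way:
\begin{enumerate}
\item Split $\wtilde{\bDelta}^{(n+1)}$ into its initial, $\beps$-driven and $\wtilde{\brho}$-driven parts.
\item Bound the $\wtilde{\brho}$-driven part by $C\sum_{i\le n}\norm{\wtilde{\brho}_i}$, using $\sup_{0\le i\le n<\infty}\alpha_n^{-1}\alpha_i\norm{\prod_{l=i+1}^{n}(\bI-\alpha_l\bM)}<\infty$.
\item Apply that maximal inequality to the $\beps$-driven part.
\end{enumerate}
Alternatively, upgrade your Lyapunov argument to $\EB\norm{\bDelta^{(n)}}^{2p}\ind_{G}\lesssim\alpha_n^{p}$ for some $p>1/(1-a)$. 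This makes the union bound of order $T^{1-p(1-a)}\to0$.
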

\begin{lemma}\label{lem:negligibility}
    As $T\to\infty$, 
    \begin{equation*}
        \sup_{u\in[0,1]}\norm{\frac{1}{\sqrt{T}}\sum_{t=0}^{\lfloor Tu\rfloor}\prn{\bDelta^{(t)}-\wtilde{\bDelta}^{(t)}}}\to0. 
    \end{equation*}
    almost surely. 
\end{lemma}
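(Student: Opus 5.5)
We prove Lemma~\ref{lem:negligibility} by writing the difference explicitly and bounding its partial sums deterministically. By definition, $\bDelta^{(t)}-\wtilde{\bDelta}^{(t)}=\alpha_t\bv(\btheta^{(t)},s^{(t+1)})$. Therefore
\begin{equation*}
    \sup_{u\in[0,1]}\norm{\frac{1}{\sqrt{T}}\sum_{t=0}^{\lfloor Tu\rfloor}\prn{\bDelta^{(t)}-\wtilde{\bDelta}^{(t)}}}\leq\frac{1}{\sqrt{T}}\sum_{t=0}^{T}\alpha_t\norm{\bv(\btheta^{(t)},s^{(t+1)})}.
\end{equation*}
It then suffices to show that $\bv$ is uniformly bounded, i.e. $\sup_{\btheta,s}\norm{\bv(\btheta,s)}\leq C_v<\infty$. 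Once this holds, the right-hand side is at most $C_v T^{-1/2}\sum_{t=0}^T c(t+1)^{-a}\lesssim C_v T^{1/2-a}$, which tends to $0$ because $a>1/2$. The bound is deterministic, so the convergence holds almost surely (indeed surely).

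The key step is the uniform bound on $\bv$, which comes from Proposition~\ref{prop:poisson_eq}. Each coordinate of $\bh(\btheta)$ is a mixture of CDF values minus $\tau_i$, so $\norm{\bh(\btheta)}_\infty\leq1$ for every $\btheta$. Consequently $\bq(\btheta,s)=(\bD_s-\bD_{d^\pi})\bh(\btheta)$ is bounded uniformly in $(\btheta,s)$ by a constant depending only on $|\gS|$ and $m$. By Assumption~\ref{assump:recurrence}, the chain $P^\pi$ is finite, irreducible and aperiodic. Hence the Poisson solution admits the representation $\bv(\btheta,s)=\sum_{k\geq0}\brk{(\gP^\pi)^k\bq(\btheta,\cdot)}(s)$, which is well defined because $\sum_{s}d^\pi(s)\bq(\btheta,s)=\bD_{d^\pi}\bh-\bD_{d^\pi}\bh=0$. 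Geometric ergodicity gives $\max_s\norm{\brk{(\gP^\pi)^k\bq}(s)}\leq C\rho^k\max_s\norm{\bq(\btheta,s)}$ for some $\rho<1$. Summing the geometric series yields $C_v$ independent of $\btheta$, and this series solution satisfies the normalization, so by uniqueness it is the $\bv$ of Proposition~\ref{prop:poisson_eq}.

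Equivalently, $\bv(\btheta,\cdot)$ is the image of $\bq(\btheta,\cdot)$ under a fixed linear map: the inverse of $\bI-\gP^\pi$ restricted to the $d^\pi$-mean-zero subspace, which is invertible on a finite state space. Either argument gives the uniform bound. The only potential obstacle is this uniformity in $\btheta$, and it is resolved by the boundedness of $\bh$. No convergence of $\btheta^{(t)}$ is needed for this lemma.
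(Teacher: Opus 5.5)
Your proof is correct and follows the paper's argument: the paper likewise bounds $\norm{\bDelta^{(t)}-\wtilde{\bDelta}^{(t)}}=\alpha_t\norm{\bv(\btheta^{(t)},s^{(t+1)})}\lesssim\alpha_t$ using the uniform bound in Lemma~\ref{lem:poisson_properties}, and concludes from $T^{-1/2}\sum_{t=0}^T\alpha_t\to0$. Your justification of that uniform bound, via the inverse of $\gI-\gP^\pi$ restricted to the $d^\pi$-centered subspace together with $\norm{\bh(\btheta)}_\infty\le1$, is exactly how the paper proves that lemma; the geometric-series representation is an equivalent alternative.
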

Combining Lemma~\ref{lem:aux_FCLT} and~\ref{lem:negligibility} with Slutsky's theorem completes the proof of Theorem~\ref{thm:markov_QTD_FCLT}.

%% file: conclusion.tex
In this paper, we studied how to perform statistical inference for QTD in distributional reinforcement learning. We established functional central limit theorems for the averaged iterates of QTD in synchronous, asynchronous and Markovian data settings, characterizing their weak convergence to a rescaled Brownian motion and providing explicit expressions for the covariance matrices. Based on these results, we developed online inference procedures using random scaling, which construct an asymptotically pivotal statistic without requiring explicit estimation of the asymptotic covariance. Moreover, we showed that the proposed inference procedures can be implemented online by recursively updating the required quantities along the QTD trajectory, avoiding the need to store the entire sequence of iterates. 

There are several interesting issues for future work. One future direction is to generalize our results to more general step size schedules. Another promising direction for future work is to go beyond the asymptotic framework and establish non-asymptotic convergence rates for QTD. Such results would provide a more complete understanding of the finite-sample behavior of QTD and may further facilitate statistical estimation and inference in reinforcement learning.

%% file: Appendix_omitted_proofs_3.tex
\begin{proof}[Proof of Theorem~\ref{thm:QTD_random_scaling}]
We only present the argument for synchronous QTD, and the other two cases follow identically. Define
\begin{equation*}
    \bm\Phi_T(u)
    \coloneq
    \frac{1}{\sqrt{T}}
    \sum_{t=1}^{\lfloor Tu\rfloor}
    \prn{\bm\theta^{(t)}-\bm\theta_m},
    \qquad u\in[0,1].
\end{equation*}
Then
\begin{align*}
    \sqrt{T}\prn{\bar{\bm\theta}_T-\bm\theta_m}&=\bm\Phi_T(1),\\
    \frac{1}{\sqrt{T}}\left\{\sum_{k=1}^t\bm\theta^{(k)}-t\bar{\bm\theta}_T\right\}&=\bm\Phi_T(t/T)-\frac{t}{T}\bm\Phi_T(1). 
\end{align*}
It follows from Theorem~\ref{thm:syn_QTD_FCLT} and the continuous mapping theorem that
\begin{align*}
    T\widehat{\bV}_T&=\frac{1}{T}\sum_{t=1}^T\left\{\bm\Phi_T(t/T)-\frac{t}{T}\bm\Phi_T(1)\right\}\left\{\bm\Phi_T(t/T)-\frac{t}{T}\bm\Phi_T(1)\right\}^{\top}\\
    &\Rightarrow\bG^{-1}\bGamma_{\mathrm{syn}}^{1/2}\left[\int_0^1\prn{\bm B(u)-u\bm B(1)}\prn{\bm B(u)-u\bm B(1)}^\top\,\rd u\right]\prn{\bG^{-1}\bGamma_{\mathrm{syn}}^{1/2}}^\top.
\end{align*}
Let $\sigma_{\bm c}^2\coloneq\bm c^\top\bSigma_{\mathrm{syn}}\bm c$, then $\sigma_{\bm c}^{-1}\bm c^\top\bG^{-1}\bGamma_{\mathrm{syn}}^{1/2}\bm B(\cdot)$
is a one-dimensional standard Brownian motion. Therefore,  another application of the continuous mapping theorem yields
\begin{equation*}
    \frac{\bm c^\top\prn{\bar{\bm\theta}_T-\bm\theta_m}}{\sqrt{\bm c^\top\widehat{\bV}_T\bm c}}\Rightarrow\frac{B(1)}{\brk{\int_0^1\prn{B(u)-uB(1)}^2\,\rd u}^{1/2}}.
\end{equation*}
This completes the proof.
\end{proof}

%% file: Appendix_omitted_proofs_4.tex
\subsection{Proof of Theorem~\ref{thm:general_FCLT}}
\begin{proof}
    Denote $\bDelta_t=\bZ_t-\bz^*$ and $\brho_t=h(\bZ_t)-\bM(\bZ_t-\bz^*)$, then
    \begin{equation*}
        \bDelta_t=(\bI-\alpha_{t-1}\bM)\bDelta_{t-1}-\alpha_{t-1}(\beps_{t-1}+\brho_{t-1}). 
    \end{equation*}
    Therefore, 
    \begin{equation*}
        \bDelta_t=\prod_{i=0}^{t-1}(\bI-\alpha_i\bM)\bDelta_0-\sum_{i=0}^{t-1}\alpha_i\prod_{l=i+1}^{t-1}(\bI-\alpha_l\bM)(\beps_i+\brho_i). 
    \end{equation*}
    Define
    \begin{equation*}
        \bA_i^t=\alpha_i\sum_{j=i}^{t-1}\prod_{l=i}^{j-1}(\bI-\alpha_l\bM), 
    \end{equation*}
    then
    \begin{align*}
        \frac{1}{\sqrt{T}}\sum_{t=0}^{\lfloor Tu\rfloor}\bDelta_t=&T^{-\frac12}\alpha_0^{-1}\bA_0^{\down{Tu}+1}\bDelta_0\\
        &-T^{-\frac12}\sum_{i=0}^{\down{Tu}}\frac{\alpha_i}{\alpha_{i+1}}\bA_{i+1}^{\down{Tu}+1}\brho_i\\
        &-T^{-\frac12}\sum_{i=0}^{\down{Tu}}\frac{\alpha_i}{\alpha_{i+1}}\prn{\bA_{i+1}^{\down{Tu}+1}-\bA_{i+1}^{T+1}}\beps_i\\
        &-T^{-\frac12}\sum_{i=0}^{\down{Tu}}\prn{\frac{\alpha_i}{\alpha_{i+1}}\bA_{i+1}^{T+1}-\bM^{-1}}\beps_i\\
        &-T^{-\frac12}\sum_{i=0}^{\down{Tu}}\bM^{-1}\beps_i\\
        \coloneq&I_0+I_1+I_2+I_3+I_4
    \end{align*}
    By Lemma 1 in~\citet{polyak1992acceleration}, there exists a constant $C$ such that $\|\bA_i^t\|\leq C$ for all $t,i$. Therefore, 
    \begin{equation*}
        \sup_{u\in[0,1]}\norm{I_0}\leq T^{-\frac12}C\alpha_0^{-1}\norm{\bDelta_0}\to 0
    \end{equation*}
    almost surely. For $I_2$, we know that
    \begin{equation*}
        \sup_{u\in[0,1]}\norm{I_1}\lesssim T^{-\frac12}\sum_{i=0}^{T}\norm{\brho_i}\to 0
    \end{equation*}
    almost surely by Lemma~\ref{lem:second_order_remainder}. For $I_3$, we have
    \begin{align*}
        &\norm{T^{-\frac12}\sum_{i=0}^{\down{Tu}}\frac{\alpha_i}{\alpha_{i+1}}\prn{\bA_{i+1}^{\down{Tu}+1}-\bA_{i+1}^{T+1}}\beps_i}\\
        =&\norm{T^{-\frac12}\sum_{i=0}^{\down{Tu}}\alpha_i\sum_{\down{Tu}+1}^{T}\prod_{l=i+1}^{j-1}(\bI-\alpha_l\bM)\beps_i}\\
        =&\norm{\bA_{\down{Tu}+1}^{T+1}\alpha_{\down{Tu}+1}^{-1}\sum_{i=0}^{\down{Tu}}\alpha_i\prod_{l=i+1}^{\down{Tu}}(\bI-\alpha_l\bM)\beps_i}\\
        \lesssim&\alpha_{\down{Tu}+1}^{-1}\norm{\sum_{i=0}^{\down{Tu}}\alpha_i\prod_{l=i+1}^{\down{Tu}}(\bI-\alpha_l\bM)\beps_i}. 
    \end{align*}
    By Lemma 4 in~\citet{li2023online}, 
    \begin{equation*}
        \sup_{u\in[0,1]}\norm{I_2}\lesssim\sup_{u\in[0,1]}T^{-\frac12}\alpha_{\down{Tu}+1}^{-1}\norm{\sum_{i=0}^{\down{Tu}}\alpha_i\prod_{l=i+1}^{\down{Tu}}(\bI-\alpha_l\bM)\beps_i}\to0
    \end{equation*}
    in probability. Moreover, by Lemma 1 in~\citet{polyak1992acceleration}, 
    \begin{equation*}
        \lim_{T\to\infty}\frac1T\sum_{t=0}^{T}\norm{\bA_{i+1}^{T+1}-\bM^{-1}}=0. 
    \end{equation*}
    Therefore, by Doob's maximal inequality, 
    \begin{align*}
        &\EB\brk{\sup_{u\in[0,1]}\norm{I_3}}^2\\
        \lesssim&\EB\norm{T^{-\frac12}\sum_{i=0}^{T}\prn{\frac{\alpha_i}{\alpha_{i+1}}\bA_{i+1}^{T+1}-\bM^{-1}}\beps_i}^2\\
        =&T^{-1}\sum_{i=0}^{T}\EB\norm{\prn{\frac{\alpha_i}{\alpha_{i+1}}\bA_{i+1}^{T+1}-\bM^{-1}}\beps_i}^2\\
        \lesssim&T^{-1}\sum_{i=0}^{T}\brk{\norm{\bA_{i+1}^{T+1}-\bM^{-1}}^2+\prn{\frac{\alpha_i}{\alpha_{i+1}}-1}^2}\\
        \lesssim&T^{-1}\sum_{i=0}^{T}\brk{\norm{\bA_{i+1}^{T+1}-\bM^{-1}}+(i+1)^{-2}}\to0. 
    \end{align*}
    Finally, by Lemma A.3 in~\citet{li2021statistical}, $I_4\Rightarrow \bM^{-1}\bGamma_{\mathrm{syn}}^{\frac12}\bB(\cdot)$ and the proof is completed. 
\end{proof}
The following lemma is proved in Appendix~\ref{Appendix_technical_lemmas}. 
\begin{lemma}\label{lem:second_order_remainder}
    With probability $1$, we have
    \begin{equation*}
        \sum_{i=0}^{\infty}(i+1)^{-\frac12}\norm{\brho_i}<\infty. 
    \end{equation*}
\end{lemma}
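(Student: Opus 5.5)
We need to prove Lemma \ref{lem:second_order_remainder}: almost surely $\sum_i (i+1)^{-1/2}\norm{\brho_i}<\infty$, where $\brho_i=\bh(\bZ_i)-\bM(\bZ_i-\bz^*)$.

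The plan combines the local quadratic bound \eqref{eq:derivative_lip} with an almost sure rate for $\bDelta_i=\bZ_i-\bz^*$. Since $\bZ_i\to\bz^*$ almost surely, on a full-measure event there is a (random) $i_0$ such that $\norm{\bDelta_i}\le\delta$ for all $i\ge i_0$. For those $i$, \eqref{eq:derivative_lip} gives $\norm{\brho_i}\le L\norm{\bDelta_i}^2$. The finitely many terms with $i<i_0$ are finite, so it suffices to show that almost surely
\begin{equation*}
    \sum_{i\ge i_0}(i+1)^{-1/2}\norm{\bDelta_i}^2<\infty .
\end{equation*}

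The key step is the standard almost sure rate $\norm{\bDelta_i}^2=O(\alpha_i\log i)$, or any bound of the form $\norm{\bDelta_i}^2=o\bigl((i+1)^{-a+\epsilon}\bigr)$ for arbitrarily small $\epsilon>0$. Given such a rate, the summand is $O\bigl((i+1)^{-1/2-a+\epsilon}\bigr)$. Because $a>1/2$, choosing $\epsilon<a-1/2$ makes the exponent exceed $1$, and the series converges.

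To get the rate, I would argue as in \citet{polyak1992acceleration}, working with a Lyapunov norm.
\begin{enumerate}[(1)]
    \item Since $\bM$ is uniformly repulsive, there is a positive definite $\bm P$ with $\bm P\bM+\bM^\top\bm P\succ 0$. In the norm $\norm{\bx}_{\bm P}^2=\bx^\top\bm P\bx$ we then have $\norm{(\bI-\alpha\bM)\bx}_{\bm P}\le(1-c\alpha)\norm{\bx}_{\bm P}$ for some $c>0$ and all small $\alpha$.
    \item Truncate at $i_0$ and use the unrolled recursion
    \begin{equation*}
        \bDelta_t=(\bI-\alpha_{t-1}\bM)\bDelta_{t-1}-\alpha_{t-1}(\beps_{t-1}+\brho_{t-1}).
    \end{equation*}
    This splits $\bDelta_t$ into three parts: a deterministic contraction of $\bDelta_{i_0}$, which decays faster than any polynomial; a martingale part $\bm U_t=\sum_{i<t}\alpha_i\prod_{l=i+1}^{t-1}(\bI-\alpha_l\bM)\beps_i$; and a remainder driven by $\brho$.
    \item Bound the martingale part. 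The noise satisfies conditions (i)--(iii), so the $\beps_i$ are bounded martingale differences. An exponential (Azuma-type) inequality for the weighted sum, together with a union bound over dyadic blocks and Borel--Cantelli, gives $\norm{\bm U_t}^2=O(\alpha_t\log t)$ almost surely. Alternatively, Lemma 4 of \citet{li2023online}, already invoked in the paper, bounds exactly these weighted sums, and its almost sure version can be used directly.
    \item Bound the remainder. Because $\norm{\brho_i}\le L\norm{\bDelta_i}^2\le L\delta\norm{\bDelta_i}$, shrinking $\delta$ (hence taking $i_0$ larger) makes this term at most $(c/2)\alpha_i\norm{\bDelta_i}_{\bm P}$. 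It can therefore be absorbed into the contraction in a discrete Gronwall argument, yielding $\norm{\bDelta_t}\lesssim\sup_{s\le t}\norm{\bm U_s}+(\text{exponentially small})$.
\end{enumerate}

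The main obstacle is the almost sure martingale rate in step (3), where the weights depend on $t$. The matrices do not commute with the noise, but this is harmless: $\bM$ is fixed, so each weight $\prod_{l=i+1}^{t-1}(\bI-\alpha_l\bM)$ is deterministic. A conditional Hoeffding bound then gives $\PB(\norm{\bm U_t}>x)\le 2d\exp\bigl(-x^2/(C\alpha_t)\bigr)$, using $\sum_i\alpha_i^2\norm{\prod_{l=i+1}^{t-1}(\bI-\alpha_l\bM)}^2\lesssim\alpha_t$. Taking $x^2=C'\alpha_t\log t$ makes these probabilities summable, and Borel--Cantelli finishes. A cruder estimate suffices if needed: bounding $\EB\norm{\bm U_t}^{2p}\lesssim\alpha_t^p$ for large $p$ and applying Borel--Cantelli gives $\norm{\bm U_t}^2=o\bigl(t^{-a+\epsilon}\bigr)$, which is enough for the series.
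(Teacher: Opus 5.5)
Your argument is correct, but it takes a different route from the paper. The paper never proves a pathwise rate. It fixes $T$, localizes with the stopping time $\tau_T=\inf\{t\ge T:\norm{\bZ_t-\bz^*}>\kappa\}$, and runs the Lyapunov recursion in expectation: $u_t=\EB[V(\bZ_t-\bz^*)\ind\{\tau_T>t\}]\le(1-c\alpha_{t-1})u_{t-1}+C\alpha_{t-1}^2$. This gives $\EB\norm{\bZ_t-\bz^*}^2\ind_{G_T}\lesssim\alpha_t$ with $G_T=\{\tau_T=\infty\}$. Hence $\EB\sum_{t\ge T}(t+1)^{-1/2}\norm{\brho_t}\ind_{G_T}\lesssim\sum_t(t+1)^{-(a+1/2)}<\infty$, so the tail series is almost surely finite on each $G_T$, and almost sure convergence gives $\PB(\bigcup_T G_T)=1$. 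That proof needs only bounded conditional second moments of $\beps_t$, and it turns $L^1$-summability on the localized event directly into almost sure finiteness. Your route instead splits off the linear martingale part, controls it by Azuma--Hoeffding (or high moments) plus Borel--Cantelli, and absorbs the quadratic remainder by a pathwise Gronwall argument. It costs more machinery and leans harder on the boundedness in (iii). In exchange it gives the stronger, reusable statement $\norm{\bDelta_t}^2=O(\alpha_t\log t)$ almost surely, from which the lemma follows at once since $a+\tfrac12>1$.

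One step should be stated more carefully. Your conclusion $\norm{\bDelta_t}\lesssim\sup_{s\le t}\norm{\bm{U}_s}+(\text{exponentially small})$ gives no decay, because the supremum is dominated by early $s$. The clean fix is to set $\bm{R}_t=\bDelta_t+\bm{U}_t$, which satisfies $\bm{R}_t=(\bI-\alpha_{t-1}\bM)\bm{R}_{t-1}-\alpha_{t-1}\brho_{t-1}$. After the random time $i_0$ you have $\norm{\brho_{t-1}}\le L\delta'(\norm{\bm{R}_{t-1}}+\norm{\bm{U}_{t-1}})$. Absorbing the $\bm{R}$ part into the contraction leaves the weighted sum $\sum_{i\ge i_0}\prod_{l=i+1}^{t-1}(1-\tfrac c2\alpha_l)\,\alpha_i\norm{\bm{U}_i}$, not a supremum. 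This sum is $O(\sqrt{\alpha_t\log t})$, because $\sqrt{\alpha_i\log i}$ is essentially constant over the effective window $t-i\lesssim\alpha_t^{-1}\ll t$.
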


\subsection{Proof of Lemma~\ref{lem:unique_zero},~\ref{lem:consistency} and~\ref{lem:repulsive}}
\begin{proof}[Proof of Lemma~\ref{lem:unique_zero}]
    By Lemma A.2 in~\citet{cheng2026statistical}, $p_{(\gT^\pi\bm\eta_m)(s)}(\theta_m(s,i))>0$ for every $(s,i)$ and $p_{(\gT^\pi\bm\eta_m)(s)}$ is continuous at every $\theta_m(s,i)$. Therefore, for every $\blambda\in[0,1]^{\gS\times[m]}$, 
    \begin{equation*}
        \bPi^\blambda_m\gT^\pi\bm\eta_m=\bPi_m\gT^\pi\bm\eta_m=\bm\eta_m. 
    \end{equation*}
    Therefore Lemma~\ref{lem:unique_zero} holds. 
\end{proof}

\begin{proof}[Proof of Lemma~\ref{lem:consistency}]
    Define $\bLamb=\brc{\btheta^\blambda_m:\bm\eta_{\btheta^\blambda_m}=\bPi^\blambda_m\gT^\pi\bm\eta_{\btheta^\blambda_m},\blambda\in[0,1]^{\gS\times[m]}}$. By Theorem 14 in~\citet{rowland2023analysis}, with probability $1$, 
    \begin{equation*}
        \lim_{t\to\infty}\inf_{\btheta^*\in\bLamb}\norm{\btheta^{(t)}-\btheta^*}=0. 
    \end{equation*}
    However, Lemma~\ref{lem:unique_zero} implies that $\bLamb=\{\btheta_m\}$ and the conclusion follows. 
\end{proof}

\begin{proof}[Proof of Lemma~\ref{lem:repulsive}]
    By Gershgorin circle theorem, if $\lambda$ is an eigenvalue of $\bG$, 
    \begin{align*}
        Re(\lambda)\geq&\min_{(s,i)}\brc{\bG_{(s,i),(s,i)}-\sum_{(s^\prime,j)\neq(s,i)}\abs{\bG_{(s,i),(s^\prime,j)}}}\\
        =&\min_{(s,i)}\brc{p_{(\gT^\pi\bm\eta_m)(s)}(\theta_m(s,i))}>0, 
    \end{align*}
    where the last inequality follows from Lemma A.2 in~\citet{cheng2026statistical}. Similarly, if $\lambda$ is an eigenvalue of $\bD_\mu\bG$, 
    \begin{align*}
        Re(\lambda)\geq&\min_{(s,i)}\brc{(\bD_\mu\bG)_{(s,i),(s,i)}-\sum_{(s^\prime,j)\neq(s,i)}\abs{(\bD_\mu\bG)_{(s,i),(s^\prime,j)}}}\\
        =&\min_{(s,i)}\brc{\mu(s)p_{(\gT^\pi\bm\eta_m)(s)}(\theta_m(s,i))}>0
    \end{align*}
    and the proof is completed. 
\end{proof}

\subsection{Proof of Lemma~\ref{lem:consistency_markov}}
To begin with, we present the following lemma on the properties of $\bv$, the solution to Poisson equation. The proof is provided in Appendix~\ref{Appendix_technical_lemmas}.
\begin{lemma}\label{lem:poisson_properties}
There is a constant $C<\infty$ such that, for all $\btheta,\widetilde{\btheta}$,
\begin{equation*}
 \max_s\norm{\bv(\btheta,s)-\bv(\widetilde{\btheta},s)}
 \le C\norm{\btheta-\widetilde{\btheta}},
\end{equation*}
and
\begin{equation*}
    \max_s\norm{\bv(\btheta,s)}\leq C. 
\end{equation*}
Moreover, for all $s$,
\begin{equation*}
 \bv(\btheta_m,s)=0.
\end{equation*}
\end{lemma}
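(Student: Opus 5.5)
The plan is to work directly from the Poisson equation of Proposition~\ref{prop:poisson_eq}. Fix $\btheta$ and write the system coordinatewise as $\bv(\btheta,\cdot)=(\mathcal I-\gP^\pi)^{-1}\bq(\btheta,\cdot)$ on the subspace $\{\bv:\sum_s d^\pi(s)\bv(s)=0\}$.

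By Assumption~\ref{assump:recurrence}, the chain $P^\pi$ on the finite set $\gS$ is irreducible and aperiodic. We therefore use the fundamental matrix
\begin{equation*}
    \bZ^\pi\coloneq\prn{I-P^\pi+\mathbf 1 (d^\pi)^\top}^{-1},
\end{equation*}
which exists. Since $\sum_s d^\pi(s)\bq(\btheta,s)=\prn{\sum_s d^\pi(s)\bD_s-\bD_{d^\pi}}\bh(\btheta)=0$, the unique centered solution is given explicitly by the linear map
\begin{equation*}
    \bv(\btheta,s)=\sum_{s'}\bZ^\pi_{s,s'}\,\bq(\btheta,s'),
\end{equation*}
which amounts to applying $\bZ^\pi\otimes \bI_{\gS\times[m]}$ to the stacked vector $(\bq(\btheta,s'))_{s'}$. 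Equivalently, one may write $\bv(\btheta,s)=\sum_{k\ge0}\EB_s[\bq(\btheta,s^{(k)})]$, where the series converges geometrically by the ergodicity of the chain. Either representation gives
\begin{equation*}
    \max_s\norm{\bv(\btheta,s)}\le C_\pi\max_{s'}\norm{\bq(\btheta,s')}
\end{equation*}
with a constant $C_\pi$ depending only on $P^\pi$.

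All three claims then reduce to corresponding properties of $\bq(\btheta,s)=(\bD_s-\bD_{d^\pi})\bh(\btheta)$, using that $\norm{\bD_s-\bD_{d^\pi}}\le 1$.

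\emph{Boundedness.} Each coordinate of $\bh(\btheta)$ is a convex combination of CDF values minus $\tau_i$, so $|(\bh(\btheta))_{s,i}|\le1$. Hence $\norm{\bh(\btheta)}\le\sqrt{|\gS|m}$, and therefore $\norm{\bq(\btheta,s)}$ is bounded uniformly in $\btheta$ and $s$.

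\emph{Vanishing at $\btheta_m$.} The proof of Theorem~\ref{thm:syn_QTD_FCLT} shows that $\bh(\btheta_m)=0$. Thus $\bq(\btheta_m,\cdot)\equiv0$, and uniqueness in Proposition~\ref{prop:poisson_eq} forces $\bv(\btheta_m,s)=0$.

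\emph{Lipschitz continuity.} By linearity of the solution map, it suffices to show that $\bh$ is globally Lipschitz. Each term of $\bh$ has the form $F_{s,a}(\theta(s,i)-\gamma\theta(s',j))$, where $F_{s,a}$ is the reward CDF. By Assumption~\ref{assump:density}, $F_{s,a}$ has density bounded by $C_0$, so it is $C_0$-Lipschitz on all of $\RB$ (it is constant outside $[0,1]$). Hence
\begin{equation*}
    |(\bh(\btheta)-\bh(\widetilde\btheta))_{s,i}|\le C_0(1+\gamma)\max_{s',j}|\theta(s',j)-\widetilde\theta(s',j)|,
\end{equation*}
which gives $\norm{\bh(\btheta)-\bh(\widetilde\btheta)}\le C_0(1+\gamma)\sqrt{|\gS|m}\,\norm{\btheta-\widetilde\btheta}$. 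Multiplying by $C_\pi$ yields the first inequality with a single constant $C$, and we take $C$ to be the maximum of the constants from the three parts.

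The only step needing care is the uniform bound on the Poisson solution operator. This is the point where irreducibility and aperiodicity are used, to guarantee that $I-P^\pi+\mathbf1(d^\pi)^\top$ is invertible and that the centering constraint selects the solution. The bound itself is then immediate because $\gS$ is finite, so no mixing-time estimates are needed beyond the existence of $\bZ^\pi$.
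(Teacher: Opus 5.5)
Your proof is correct and follows the paper's argument. Both you and the paper invert $\mathcal{I}-\gP^\pi$ on the $d^\pi$-centered subspace, and then transfer boundedness, Lipschitz continuity and vanishing at $\btheta_m$ from $\bq$ to $\bv$ by linearity; you do the inversion with the explicit fundamental matrix $(I-P^\pi+\mathbf 1(d^\pi)^\top)^{-1}$, whereas the paper uses injectivity plus finite dimensionality. Your version is slightly more complete, since you verify the global bound $|(\bh(\btheta))_{s,i}|\le 1$ and the global $C_0(1+\gamma)$-Lipschitz property of $\bh$ that the paper only asserts; one minor point is that invertibility of the fundamental matrix needs only irreducibility, and aperiodicity is used only for your optional series representation.
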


Next, we claim that the iterates $\btheta^{(t)}$ are bounded in the following lemma, the proof of which is provided in Appendix~\ref{Appendix_technical_lemmas}.
\begin{lemma}\label{lem:boundedness}
    Define
    \begin{equation*}
        B\coloneq \max\brc{\frac{\alpha_0}{1-\gamma},-\min_{s,i}\theta^{(0)}(s,i),\max_{s,i}\theta^{(0)}(s,i)-\frac{1}{1-\gamma}}. 
    \end{equation*}
    Then
    \begin{equation*}
        \btheta^{(t)}\in\brk{-B,\frac{1}{1-\gamma}+B}^{\gS\times[m]}. 
    \end{equation*}
\end{lemma}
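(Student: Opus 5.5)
We prove Lemma~\ref{lem:boundedness} by induction on $t$. The claim is that every coordinate stays in $[-B,\tfrac{1}{1-\gamma}+B]$. The base case $t=0$ holds by the definition of $B$, since $-\min_{s,i}\theta^{(0)}(s,i)\le B$ and $\max_{s,i}\theta^{(0)}(s,i)\le \tfrac{1}{1-\gamma}+B$. For the inductive step, assume $\btheta^{(t-1)}\in[-B,\tfrac{1}{1-\gamma}+B]^{\gS\times[m]}$ and fix a coordinate $(s,i)$. If $s\neq s^{(t)}$ the coordinate is unchanged, so it suffices to treat the updated coordinate. We use two facts throughout: the step size is nonincreasing, so $\alpha_{t}\le\alpha_0$ and hence $\alpha_t\le(1-\gamma)B$; and each increment has magnitude at most $\alpha_t\max(\tau_i,1-\tau_i)\le\alpha_t$. (The same argument covers the indexing $\alpha_{t-1}$ of \eqref{eq:synch_QTD} and \eqref{eq:asynch_QTD}.)

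For the upper bound, split into two cases according to the size of $\theta^{(t-1)}(s,i)$.

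\emph{Case 1:} $\theta^{(t-1)}(s,i)>\tfrac{1}{1-\gamma}+B-\alpha_t$. Then $\theta^{(t-1)}(s,i)>\tfrac{1}{1-\gamma}+\gamma B$, because $\alpha_t\le(1-\gamma)B$. For every $j$ we have $r+\gamma\theta^{(t-1)}(s',j)\le 1+\gamma(\tfrac{1}{1-\gamma}+B)=\tfrac{1}{1-\gamma}+\gamma B$, using $r\in[0,1]$ and the induction hypothesis. Hence every indicator $\ind\{r+\gamma\theta^{(t-1)}(s',j)<\theta^{(t-1)}(s,i)\}$ equals $1$, and the increment is $\alpha_t(\tau_i-1)<0$. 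The coordinate therefore decreases and stays below $\tfrac{1}{1-\gamma}+B$.

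\emph{Case 2:} $\theta^{(t-1)}(s,i)\le\tfrac{1}{1-\gamma}+B-\alpha_t$. The increment is at most $\alpha_t$, so the new value is at most $\tfrac{1}{1-\gamma}+B$.

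The lower bound is symmetric, again in two cases.

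\emph{Case 1:} $\theta^{(t-1)}(s,i)<-B+\alpha_t\le -B+(1-\gamma)B=-\gamma B$. Then $r+\gamma\theta^{(t-1)}(s',j)\ge 0-\gamma B$, so every indicator is $0$ and the increment is $\alpha_t\tau_i>0$. The coordinate increases and stays above $-B$.

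\emph{Case 2:} $\theta^{(t-1)}(s,i)\ge -B+\alpha_t$. The decrement is at most $\alpha_t$, so the new value is at least $-B$.

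This closes the induction. The only delicate point is matching the threshold to the step-size bound: the inequality $\alpha_t\le\alpha_0\le(1-\gamma)B$ is exactly what makes the strict comparisons in the boundary case go through. No probabilistic input is needed beyond the reward lying in $[0,1]$ almost surely, which holds by Assumption~\ref{assump:density}.
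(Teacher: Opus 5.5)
Your proof is correct and follows the paper's argument. Both proceed by induction on $t$, use $\alpha_t\le\alpha_0\le(1-\gamma)B$ to show that near the lower (resp.\ upper) edge of the box every indicator is $0$ (resp.\ $1$) so the coordinate moves inward, and use the bound $\alpha_t$ on the increment's magnitude away from the edges. The only difference is organizational: you check the upper and lower bounds with separate two-case splits, whereas the paper uses a single three-way split (low, high, middle) and so must also verify the opposite bound in its boundary cases, a check your version avoids.
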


We have
\begin{equation*}
    \btheta^{(t+1)}
    =\btheta^{(t)}-\alpha_t\bH^{(t+1)}\prn{\btheta^{(t)}}.
\end{equation*}
By construction,
\begin{equation*}
    \EB\brk{\bH^{(t+1)}(\btheta)\mid\gF_{t-1}}
    =\bD_{s^{(t+1)}}\bh(\btheta).
\end{equation*}
Define
\begin{align*}
    \bXi^{(t)}&\coloneq\bH^{(t+1)}\prn{\btheta^{(t)}}-\bD_{s^{(t+1)}}\bh\prn{\btheta^{(t)}},\\
    \bV^{(t)}&\coloneq\bXi^{(t)}+\bq\prn{\btheta^{(t)},s^{(t+1)}}.
\end{align*}
The next lemma shows that the entire stochastic perturbation is summable after multiplication by the stepsize.
\begin{lemma}\label{lem:perturbation_tail}
Suppose $\alpha_t=c(t+1)^{-a}$ with $a\in(1/2,1)$. Then
\begin{equation*}
    \sum_{t=0}^{\infty}\alpha_t\bV^{(t)}
\end{equation*}
converges almost surely. 
\end{lemma}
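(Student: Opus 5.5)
We split $\bV^{(t)}=\bXi^{(t)}+\bq(\btheta^{(t)},s^{(t+1)})$ into a martingale part and a Markovian part, and show that $\sum_t\alpha_t(\cdot)$ converges almost surely for each.

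\textbf{Martingale part.} By the identity $\EB[\bH^{(t+1)}(\btheta)\mid\gF_{t-1}]=\bD_{s^{(t+1)}}\bh(\btheta)$ and the $\gF_{t-1}$-measurability of $\btheta^{(t)}$, the sequence $\bXi^{(t)}$ is an $\gF_t$-martingale difference sequence. It is uniformly bounded, since every coordinate of $\bH$ and of $\bh$ lies in $[-1,1]$. Because $\sum_t\alpha_t^2<\infty$ for $a>1/2$, the martingale $\sum_t\alpha_t\bXi^{(t)}$ is bounded in $L^2$, and so it converges almost surely by the martingale convergence theorem.

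\textbf{Markovian part.} We use the Poisson equation of Proposition~\ref{prop:poisson_eq} to write
\begin{align*}
\bq(\btheta^{(t)},s^{(t+1)})=&\,\bv(\btheta^{(t)},s^{(t+1)})-\gP^\pi\bv(\btheta^{(t)},s^{(t+1)})\\
=&\,\brk{\bv(\btheta^{(t)},s^{(t+2)})-\gP^\pi\bv(\btheta^{(t)},s^{(t+1)})}\\
&+\brk{\bv(\btheta^{(t)},s^{(t+1)})-\bv(\btheta^{(t+1)},s^{(t+2)})}+\brk{\bv(\btheta^{(t+1)},s^{(t+2)})-\bv(\btheta^{(t)},s^{(t+2)})}.
\end{align*}
We treat the three brackets in turn.

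The first bracket is a martingale difference with respect to the natural filtration, because $s^{(t+2)}=s^{\prime(t+1)}$ is drawn from $P^\pi(\cdot\mid s^{(t+1)})$ given the past, and $\btheta^{(t)}$ is already determined at that time. By Lemma~\ref{lem:poisson_properties} it is bounded by $2C$. The same $L^2$ martingale argument as above therefore gives almost sure convergence of its $\alpha_t$-weighted sum.

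The third bracket has norm at most $C\|\btheta^{(t+1)}-\btheta^{(t)}\|$ by the Lipschitz bound in Lemma~\ref{lem:poisson_properties}. Since $\|\btheta^{(t+1)}-\btheta^{(t)}\|\le\alpha_t\|\bH^{(t+1)}\|\lesssim\alpha_t$, the weighted term is $O(\alpha_t^2)$, which is summable.

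The second bracket telescopes after a summation by parts:
\[
\sum_{t=0}^{T}\alpha_t(\bv_t-\bv_{t+1})=\alpha_0\bv_0-\alpha_T\bv_{T+1}+\sum_{t=1}^{T}(\alpha_t-\alpha_{t-1})\bv_t,
\]
where $\bv_t\coloneq\bv(\btheta^{(t)},s^{(t+1)})$. The sequence $\bv_t$ is bounded by $C$, so $\alpha_T\bv_{T+1}\to0$. Moreover $\sum_t|\alpha_t-\alpha_{t-1}|=\alpha_0<\infty$ because $\alpha_t$ is monotone, so the remaining sum converges absolutely.

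Combining the martingale and Markovian parts gives the almost sure convergence of $\sum_t\alpha_t\bV^{(t)}$.

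\textbf{Main obstacle.} The delicate step is measurability and indexing: we must check that the filtration makes $\btheta^{(t)}$ measurable before $s^{(t+2)}$ is drawn, so that the conditional expectation of $\bv(\btheta^{(t)},s^{(t+2)})$ is exactly $\gP^\pi\bv(\btheta^{(t)},s^{(t+1)})$. This may require using a filtration shifted relative to the paper's $\gF_t$. The uniform boundedness of $\bv$ over the range of $\btheta$ comes from Lemma~\ref{lem:poisson_properties}, and Lemma~\ref{lem:boundedness} can be invoked for this if needed.
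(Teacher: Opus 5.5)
Your proposal is correct and matches the paper's proof step for step. The paper handles $\bXi^{(t)}$ as a bounded, $L^2$-summable martingale difference sequence. It then splits $\bq(\btheta^{(t)},s^{(t+1)})=\bw_t+\bphi_t$ into the bounded martingale difference $\bphi_t$, a telescoping part controlled by summation by parts, and a Lipschitz remainder of order $\alpha_t^2$, exactly as you do.

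The filtration concern you flag does not arise: the paper's $\gF_t$ already contains $s^{(t+2)}$, and $\btheta^{(t)}$ is $\gF_{t-1}$-measurable, so no shifted filtration is needed.
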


Define the stochastic-approximation clock
\begin{equation*}
    \varsigma_0=0,\qquad\varsigma_n=\sum_{t=0}^{n-1}\alpha_t,
\end{equation*}
and let $\what{\btheta}:[0,\infty)\to\RB^{\gS\times[m]}$ be the linear interpolation satisfying $\what{\btheta}(\varsigma_n)=\btheta^{(n)}$. 
We will use the following consequence of Lemmas~\ref{lem:boundedness} and~\ref{lem:perturbation_tail}.
\begin{lemma}\label{lem:perturbed_solution_consistency}
    Consider the following differential equation
    \begin{equation}\label{eq:limiting_DI_consistency}
        \dot{\btheta}=-\bD_{d^\pi}\bh(\btheta). 
    \end{equation}. 
    Almost surely, $\what{\btheta}$ is a bounded perturbed solution of~\eqref{eq:limiting_DI_consistency} in the sense of \citet{benaim2006dynamics}.
\end{lemma}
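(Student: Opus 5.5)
The goal is to verify the defining properties of a bounded perturbed solution in the sense of \citet{benaim2006dynamics}. Concretely, I will show that almost surely (a) $\what{\btheta}$ is bounded and uniformly continuous on $[0,\infty)$, and (b) the derivative of $\what{\btheta}$ differs from a selection of the vector field $-\bD_{d^\pi}\bh$ by a perturbation $U$ satisfying, for every $T>0$,
\begin{equation*}
\lim_{t\to\infty}\sup_{0\le v\le T}\norm{\int_t^{t+v}U(\varsigma)\,\rd\varsigma}=0 .
\end{equation*}
Since $\bh$ is continuous (by Assumption~\ref{assump:density}, each $F_{s,a}$ is continuous), the vector field $-\bD_{d^\pi}\bh$ is single-valued and continuous. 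It is also bounded on the box given by Lemma~\ref{lem:boundedness}, so it trivially has the regularity required of a differential inclusion.

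\textbf{Boundedness.} This is immediate from Lemma~\ref{lem:boundedness}, because linear interpolation preserves membership in the convex box $\brk{-B,\frac{1}{1-\gamma}+B}^{\gS\times[m]}$.

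\textbf{Uniform continuity.} Each increment satisfies $\norm{\btheta^{(n+1)}-\btheta^{(n)}}\le C\alpha_n$, since $\bH$ is uniformly bounded. Hence $\what{\btheta}$ is Lipschitz in the clock $\varsigma$, which gives uniform continuity.

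\textbf{The perturbation.} Decompose
\begin{equation*}
-\bH^{(n+1)}(\btheta^{(n)})=-\bD_{d^\pi}\bh(\btheta^{(n)})-\bXi^{(n)}-\bq(\btheta^{(n)},s^{(n+1)}),
\end{equation*}
using $\bD_{s^{(n+1)}}\bh=\bD_{d^\pi}\bh+\bq(\cdot,s^{(n+1)})$. On $[\varsigma_n,\varsigma_{n+1})$ the derivative of $\what{\btheta}$ equals $-\bH^{(n+1)}(\btheta^{(n)})$. I therefore set
\begin{equation*}
U(\varsigma)=-\bV^{(n)}+\bD_{d^\pi}\bigl[\bh(\what{\btheta}(\varsigma))-\bh(\btheta^{(n)})\bigr]
\end{equation*}
on that interval, so that $\frac{\rd}{\rd \varsigma}\what{\btheta}=-\bD_{d^\pi}\bh(\what{\btheta})+U$.

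The second term of $U$ is bounded by the modulus of continuity of $\bh$ on the box evaluated at $C\alpha_n$. It therefore tends to $0$ uniformly as $n\to\infty$, and so does its integral over any window of length $T$.

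For the first term, $\int_{t}^{t+v}\bV\,\rd\varsigma$ is, up to at most two boundary pieces of size $O(\alpha_n)$, a block sum $\sum_{n=k}^{l}\alpha_n\bV^{(n)}$. By Lemma~\ref{lem:perturbation_tail} the series $\sum_n\alpha_n\bV^{(n)}$ converges almost surely. Its partial sums are then Cauchy, so $\sup_{l\ge k}\norm{\sum_{n=k}^{l}\alpha_n\bV^{(n)}}\to0$ as $k\to\infty$. This yields the required condition uniformly in $v\in[0,T]$.

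\textbf{Main obstacle.} The substantive difficulty, namely the Markov-noise term $\bq$, has already been absorbed into Lemma~\ref{lem:perturbation_tail} through the Poisson equation. The main remaining care point is bookkeeping. The clock $\varsigma$ must be matched to the iterate index, including the edge intervals where $t$ or $t+v$ falls strictly inside some $[\varsigma_n,\varsigma_{n+1})$. The definition in \citet{benaim2006dynamics} must also be matched exactly: it requires that $\frac{\rd}{\rd\varsigma}\what{\btheta}-U$ lie in a $\delta(\varsigma)$-enlargement of the field with $\delta\to0$. I will take $\delta(\varsigma)=\omega_{\bh}(C\alpha_n)$, where $\omega_{\bh}$ denotes the modulus of continuity of $\bh$ on the box. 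Alternatively, I can move the $\bh$-difference term into that enlargement instead of into $U$, which is the cleaner fit to the definition.
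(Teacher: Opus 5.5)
Your proposal is correct and follows the paper's proof. Both arguments take boundedness from Lemma~\ref{lem:boundedness}, use the $O(\alpha_n)$ gap between $\what{\btheta}$ and the left-endpoint iterate, and obtain the integral condition on $\bU=-\bV^{(n)}$ from Lemma~\ref{lem:perturbation_tail}, with the same accounting for boundary pieces. The only difference is cosmetic: your primary choice puts the interpolation error $\bD_{d^\pi}[\bh(\what{\btheta})-\bh(\btheta^{(n)})]$ into $U$, whereas the paper keeps $\bU=-\bV^{(n)}$ and absorbs that error into the $\delta(u)$-enlargement, which is exactly the alternative you mention.
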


\begin{proof}[Proof of Lemma~\ref{lem:consistency_markov}]
We will use the terms in~\citet{benaim2006dynamics} in this proof. By Lemma~\ref{lem:perturbed_solution_consistency} and Theorem 4.2, 4.3 in \citet{benaim2006dynamics}, the limit set of $\what{\btheta}$ is almost surely internally chain transitive for~\eqref{eq:limiting_DI_consistency}.

It remains to identify the internally chain transitive sets of~\eqref{eq:limiting_DI_consistency}. By Lemma~\ref{lem:unique_zero}, the QDP fixed-point family considered by~\citet{rowland2023analysis} reduces in our setting to the singleton, namely
\begin{equation*}
    \bLamb\coloneq\brc{\btheta^\blambda_m:\bm\eta_{\btheta^\blambda_m}=\bPi^\blambda_m\gT^\pi\bm\eta_{\btheta^\blambda_m},\blambda\in[0,1]^{\gS\times[m]}}=\brc{\btheta_m}.
\end{equation*}
Accordingly, the Lyapunov function in Proposition~18 of \citet{rowland2023analysis} becomes $L(\btheta)=\norm{\btheta-\btheta_m}_\infty$. However, Appendix C of~\citet{rowland2023analysis} shows that $L$ remains a Lyapunov function of the differential equation~\eqref{eq:limiting_DI_consistency} since $\min_sd^\pi(s)>0$. 

Finally, $L(\{\btheta_m\})=\{0\}$ has empty interior in $\RB$. Proposition 3.27 of~\citet{benaim2006dynamics} therefore implies that every internally chain transitive set of~\eqref{eq:limiting_DI_consistency} is contained in $\{\btheta_m\}$. Thus the limit set of $\what{\btheta}$ is almost surely $\{\btheta_m\}$, which completes the proof.
\end{proof}

\subsection{Proof of Lemma~\ref{lem:aux_FCLT} and~\ref{lem:negligibility}}
\begin{proof}[Proof of Lemma~\ref{lem:aux_FCLT}]
    Since $\bh$ is Lipschitz continuous in a sufficiently small neighborhood of $\btheta_m$, we define a stopping time
    \begin{equation*}
        \tau_T=\brc{t\geq T:\norm{\btheta^{(t)}-\btheta_m}>\delta}
    \end{equation*}
    for some sufficiently small $\delta>0$ and denote $G_T=\{\tau_T=\infty\}$. According to the proof of Theorem~\ref{thm:general_FCLT}, we only need to verify that
    \begin{align}
        &\EB\norm{\wtilde{\brho}_t}\ind_{G_T}\leq C_T\alpha_t,&t\geq T;\label{eq:higher_order_remainder}\\
        &\EB\sup_{u\in[0,1]}\norm{T^{-\frac12}\sum_{t=0}^{\down{Tu}}(\bphi_t+\bpsi_t)}\to0,&T\to\infty;\label{eq:martingale_remainder}\\
        &T^{-\frac12}\sum_{t=0}^{\down{Tu}}\bzeta_t\Rightarrow\bD_{d^\pi}^{\frac12}\bGamma^{\frac12}\bB(\cdot),&T\to\infty.\label{eq:martingale_leading}
    \end{align}
    For Equation~\eqref{eq:higher_order_remainder}, we notice that
    \begin{equation}\label{eq:alpha_t_bound}
        \EB\norm{\wtilde{\brho}_t-\brho_t}\lesssim \alpha_t+\norm{\btheta^{(t+1)}-\btheta^{(t)}}+t^{-1}\lesssim\alpha_t, 
    \end{equation}
    where the first inequality follows from Lemma~\ref{lem:poisson_properties}. Therefore, Equation~\eqref{eq:higher_order_remainder} is deduced from Equation~\eqref{eq:alpha_t_bound} and Lemma~\ref{lem:second_order_remainder_markov}.

    To prove Equation~\eqref{eq:martingale_remainder}, we notice that
    \begin{align*}
        &\EB\brk{\norm{\bphi_t}^2\mid\gF_{t-1}}\\
        \lesssim&\EB\brk{\norm{\bv(\btheta^{(t)},s^{(t+1)})-\bv(\btheta_m,s^{(t+1)})}^2\mid\gF_{t-1}}+\EB\brk{\norm{\gP^\pi\bv(\btheta^{(t)},s^{(t+2)})-\gP^\pi\bv(\btheta_m,s^{(t+2)})}^2\mid\gF_{t-1}}\\
        \lesssim&\norm{\btheta^{(t)}-\btheta_m}^2, 
    \end{align*}
    where the last inequality follows from Lemma~\ref{lem:poisson_properties}. Moreover, 
    \begin{align*}
        &\EB\brk{\norm{\bpsi_t}^2\mid\gF_{t-1}}\\
        \lesssim&\EB\brk{\norm{\bH^{(t+1)}(\btheta^{(t)})-\bH^{(t+1)}(\btheta_m)}^2\mid\gF_{t-1}}+\EB\brk{\norm{\bD_{s^{(t+1)}}\brk{\bh(\btheta^{(t)})-\bh(\btheta_m)}}^2\mid\gF_{t-1}}\\
        \lesssim&\norm{\btheta^{(t)}-\btheta_m}, 
    \end{align*}
    where the last inequality uses the property of indicator function, as well as the boundedness and Lipschitz continuity of $\bh(\btheta)$. Therefore, both $\EB[\|\bphi_t\|^2\mid\gF_{t-1}]$ and $\EB[\|\bpsi_t\|^2\mid\gF_{t-1}]$ converges to $0$ almost surely. By Doob's maximal inequality, we deduce that
    \begin{align*}
        &\EB\brk{\sup_{u\in[0,1]}\norm{T^{-\frac12}\sum_{t=0}^{\down{Tu}}(\bphi_t+\bpsi_t)}}^2\\
        \lesssim&T^{-1}\EB\norm{\sum_{t=0}^{T}(\bphi_t+\bpsi_t)}^2\\
        =&T^{-1}\sum_{t=0}^{T}\EB\norm{\bphi_t+\bpsi_t}^2\to0, 
    \end{align*}
    where the last step follows from bounded convergence theorem. Therefore Equation~\eqref{eq:martingale_remainder} holds. 

    Finally, a direct calculation shows that
    \begin{equation*}
        \EB\brk{\bzeta_t\bzeta_t^\top\mid\gF_{t-1}}=\bD_{s^{(t+1)}}\bGamma. 
    \end{equation*}
    By ergodic theorem and Lemma 1 in~\citet{li2023online}, 
    \begin{equation*}
        \frac{1}{T}\sum_{t=0}^{\down{Tu}}\EB\brk{\bzeta_t\bzeta_t^\top\mid\gF_{t-1}}\to u\bD_{d^\pi}\bGamma
    \end{equation*}
    uniformly almost surely. Equation~\eqref{eq:martingale_leading} follows from the multidimensional martingale invariance principle~\citep{hall1980martingale}. 
\end{proof}

The following lemma, used in the proof above, is proved in Appendix~\ref{Appendix_technical_lemmas}. 
\begin{lemma}\label{lem:second_order_remainder_markov}
    For all sufficiently large $T$, there exists $C_T>0$ such that
    \begin{equation*}
        \EB\norm{\bDelta^{(t)}}^2\ind_{G_T}\leq C_T\alpha_t.
    \end{equation*}
\end{lemma}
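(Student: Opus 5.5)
The plan is to bound $\EB\|\bDelta^{(t)}\|^2\ind_{G_T}$ through a one-step recursion for the squared error on the event where the iterates stay in the $\delta$-neighbourhood of $\btheta_m$. On that neighbourhood the drift is locally linear with a stable Jacobian. The Markovian noise is handled through the Poisson-equation correction, so the auxiliary sequence $\wtilde{\bDelta}^{(t)}=\bDelta^{(t)}-\alpha_t\bv(\btheta^{(t)},s^{(t+1)})$ will be used throughout. By Lemma~\ref{lem:poisson_properties} and Lemma~\ref{lem:boundedness}, $\|\bDelta^{(t)}-\wtilde{\bDelta}^{(t)}\|\le C\alpha_t$. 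It therefore suffices to show $\EB\|\wtilde{\bDelta}^{(t)}\|^2\ind_{G_T}\lesssim C_T\alpha_t$, since the difference contributes only $O(\alpha_t^2)$. Because $G_T$ is not $\gF_{t-1}$-measurable, I will work with the predictable events $G_{T,t}=\{\|\btheta^{(k)}-\btheta_m\|\le\delta,\ T\le k\le t\}$. These satisfy $G_T\subset G_{T,t}$ and $G_{T,t}\subset G_{T,t-1}$, so $\ind_{G_{T,t}}\le\ind_{G_{T,t-1}}$ and the recursion telescopes.

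Since $\bM=\bD_{d^\pi}\bG$ is uniformly repulsive (the Gershgorin argument of Lemma~\ref{lem:repulsive} applies verbatim with $d^\pi$ in place of $\mu$), there is a positive definite $\bQ$ solving the Lyapunov equation $\bM^\top\bQ+\bQ\bM=\bI$. I will use $\|\bx\|_\bQ^2=\bx^\top\bQ\bx$, under which $\|(\bI-\alpha\bM)\bx\|_\bQ^2\le(1-c\alpha)\|\bx\|_\bQ^2$ for small $\alpha$. Expanding the recursion
\[
\wtilde{\bDelta}^{(t)}=(\bI-\alpha_{t-1}\bM)\wtilde{\bDelta}^{(t-1)}-\alpha_{t-1}\prn{\wtilde{\brho}_{t-1}+\beps_{t-1}}
\]
on $G_{T,t-1}$ gives three kinds of terms. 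The cross term with $\beps_{t-1}$ vanishes in conditional expectation because $\beps$ is an $\gF$-martingale difference, and $\ind_{G_{T,t-1}}$ together with $\wtilde{\bDelta}^{(t-1)}$ is $\gF_{t-2}$-measurable, up to the one-step index shift inherent in $\gF$. The term $\alpha_{t-1}^2\EB\|\beps_{t-1}\|^2$ is $O(\alpha_{t-1}^2)$ because the noise is bounded. The cross term with $\wtilde{\brho}_{t-1}$ is split in two. First, $\|\brho_{t-1}\|\le L\|\bDelta^{(t-1)}\|^2\le L\delta\|\bDelta^{(t-1)}\|$ on the neighbourhood, so with $\delta$ small it is absorbed into the contraction. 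Second, $\|\wtilde{\brho}_{t-1}-\brho_{t-1}\|\lesssim\alpha_{t-1}$ by \eqref{eq:alpha_t_bound}, so Young's inequality bounds its contribution by $\epsilon\alpha_{t-1}\|\wtilde{\bDelta}^{(t-1)}\|^2+C\alpha_{t-1}^3$.

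Collecting these, $u_t=\EB\|\wtilde{\bDelta}^{(t)}\|_\bQ^2\ind_{G_{T,t}}$ should satisfy
\[
u_t\le(1-c'\alpha_{t-1})u_{t-1}+C\alpha_{t-1}^2\qquad\text{for }t>T.
\]
The initial value $u_T$ is at most a constant, since the iterates are bounded by Lemma~\ref{lem:boundedness}. A standard induction then gives $u_t\le C_T\alpha_t$. The induction uses $a<1$, which gives $\alpha_{t-1}/\alpha_t=1+O(t^{-1})=1+o(\alpha_t)$. The constant $C_T$ is chosen as $\max\{u_T/\alpha_T,\,2C/c'\}$, which is where the dependence on $T$ enters. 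For $t<T$ the claim follows trivially from boundedness by enlarging $C_T$.

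I expect the main obstacle to be the measurability bookkeeping in the martingale cross term. The Poisson correction pulls in $s^{(t+1)}$, which sits one step ahead of where the filtration $\gF_{t}$ is indexed. I therefore need to check carefully that $\wtilde{\bDelta}^{(t-1)}$ and the truncation indicator are measurable with respect to the $\sigma$-field under which $\EB[\beps_{t-1}\mid\cdot]=0$. If they are not, I will absorb the discrepancy through one more $O(\alpha_t)$ Poisson-type term, using the Lipschitz bound of Lemma~\ref{lem:poisson_properties}.
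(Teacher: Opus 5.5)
Your proposal is correct and follows essentially the same route as the paper. Both arguments use the Lyapunov function $V(\bx)=\bx^\top\bQ\bx$ with $\bM^\top\bQ+\bQ\bM=\bI$, applied to the Poisson-corrected sequence $\wtilde{\bDelta}^{(t)}$ on the stopped events $\{\tau_T>t\}$; the martingale cross term vanishes, the residual $\wtilde{\brho}$ is bounded by $C(\norm{\wtilde{\bDelta}}^2+\alpha)$ and absorbed for small $\delta$, and the recursion $u_{t+1}\le(1-c\alpha_t)u_t+C\alpha_t^2$ closes to $u_t\le C_T\alpha_t$. Your measurability worry is unfounded: under the paper's filtration, which contains $s^{(t)}$, both $\wtilde{\bDelta}^{(t-1)}$ and $\ind\{\tau_T>t-1\}$ are $\gF_{t-2}$-measurable while $\EB[\beps_{t-1}\mid\gF_{t-2}]=0$, so no fallback correction is needed.
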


\begin{proof}[Proof of Lemma~\ref{lem:negligibility}]
    By Lemma~\ref{lem:poisson_properties},
    \begin{equation*}
        \norm{\bDelta^{(t)}-\wtilde{\bDelta}^{(t)}}\lesssim\alpha_t. 
    \end{equation*}
    Therefore, 
    \begin{align*}
        \sup_{u\in[0,1]}\norm{T^{-\frac12}\sum_{t=0}^{\down{Tu}}\prn{\bDelta^{(t)}-\wtilde{\bDelta}^{(t)}}}\lesssim T^{-\frac12}\sum_{t=0}^T\alpha_t\to 0
    \end{align*}
    almost surely. 
\end{proof}

%% file: Appendix_technical_lemma.tex
\subsection{Proof of Lemma~\ref{lem:second_order_remainder}}\label{section:proof_lem_second_oder_remainder}
\begin{proof}
We construct a Lyapunov function first. Since $\bM$ is uniform repulsive, there exists a positive definite metrix $\bQ$ satisfying $\bM^\top\bQ+\bQ\bM=\bI$. In fact, the solution is given by
\begin{equation*}
    \bQ=\int_0^\infty e^{-\bM^\top t}e^{-\bM t}\rd t. 
\end{equation*}

Let $0<p_-\le p_+<\infty$ be the smallest and largest eigenvalues of $\bQ$. Then
\begin{equation}\label{eq:V-equivalence}
    p_-\norm{\bz}^2\leq V(\bz)\coloneq\bz^{\top}\bQ\bz\leq p_+\norm{\bz}^2.
\end{equation}

Choose $\kappa$ such that $0<\kappa\leq\min\{\delta,(4p_+L)^{-1}\}$. Then, for $\norm{\bz-\bz^*}\leq\kappa$, denote $\bx=$
\begin{align*}
&2(\bz-\bz^*)^{\top}\bQ \bh(\bz)
\notag\\
=&2(\bz-\bz^*)^{\top}\bQ\bM(\bz-\bz^*)+2(\bz-\bz^*)^{\top}\bQ\brk{\bh(\bz)-\bM(\bz-\bz^*)}\\
=&(\bz-\bz^*)^{\top}\prn{\bQ\bM+\bM^{\top}\bQ}(\bz-\bz^*)+2(\bz-\bz^*)^{\top}\bQ\brk{\bh(\bz)-\bM(\bz-\bz^*)}\\
=&\norm{\bz-\bz^*}^2+2(\bz-\bz^*)^{\top}\bQ\brk{\bh(\bz)-\bM(\bz-\bz^*)}\\
\geq&\norm{\bz-\bz^*}^2-2p_+K_h\norm{\bz-\bz^*}^3\\
\geq&\frac{1}{2}\norm{\bz-\bz^*}^2,
\label{eq:drift-lower}
\end{align*}
Also, for $\norm{\bz-\bz^*}\leq\kappa$,
\begin{align*}
\norm{\bh(\bz)}&\leq\norm{\bM(\bz-\bz^*)}+\norm{\bh(\bz)-\bM(\bz-\bz^*)}\\
&\leq\left(\norm{\bM}+K_h\delta\right)\norm{\bz-\bz^*}.
\end{align*}
Denote $H=\norm{\bM}+K_h\delta$ and
\begin{align*}
&V\left(\bz-\bz^*-\gamma\bh(\bz)\right)-V\left(\bz-\bz^*\right)\\
=&-2\gamma(\bz-\bz^*)^\top\bQ\bh(\bz)+\gamma^2\bh(\bz)^\top\bQ\bh(\bz)\\
\leq&-\frac{\gamma}{2}\norm{\bz-\bz^*}^2+\gamma^2 p_+ H^2\norm{\bz-\bz^*}^2.
\end{align*}

Since $\alpha_t\to0$, there exists $t_0$ such that $\alpha_tp_+H^2\le\frac14$ and $c\alpha_t\leq 1$ for all $t\geq t_0$, where $c\coloneq(4p_+)^{-1}>0$. Thus, for every $t\ge t_0$ and every $\norm{\bz-\bz^*}\leq\kappa$,
\begin{align*}
V(\bz-\bz^*-\gamma_t\bh(\bz))\leq&V(\bz-\bz^*)-\frac{\alpha_t}{4}\norm{\bz-\bz^*}^2\\
\leq&(1-c\alpha_t)V(\bz-\bz^*).
\end{align*}

Now we prove the desired conclusion. Fix integer $T\ge t_0$ and define the stopping time
\begin{equation}\label{eq:tauNL}
\tau_{T}\coloneq\inf\brc{t\geq T:\norm{\bZ_t-\bz^*}>\kappa},
\end{equation}
with $\inf\varnothing=\infty$, and the event $G_{T}\coloneq\{\tau_{T}=\infty\}$. 

We have
\begin{align*}
&\EB[V(\bZ_t-\bz^*)\ind\{\tau_T> t\}\mid\gF_{t-1}]\\
\leq&\ind\{\tau_T> t-1\}\EB[V(\bZ_t-\bz^*)\mid\gF_{t-1}]\\
=&\ind\{\tau_T> t-1\}\brk{V\prn{\bZ_{t-1}-\bz^*}+\alpha_{t-1}^2\EB[\beps_t^\top\bQ\beps_t\mid\gF_{t-1}}\\
\leq&(1-c\alpha_{t-1})V(\bZ_{t-1}-\bz^*)\ind\{\tau_T> t-1\}+C\alpha_{t-1}^2. 
\end{align*}

For $t\geq T+1$, define $u_t\coloneq\EB[V(\bZ_t-\bz^*)\ind\{\tau_T> t\}]$, then
\begin{equation*}
u_t\leq(1-c\alpha_{t-1})u_{t-1}+C_T\alpha_{t-1}^2.
\end{equation*}
By Lemma A.12 in~\citet{li2021statistical}, for $t\geq T$, $u_t\leq C_T\alpha_t$. 

Since $G_T\subseteq\{\tau_T> t\}$, we know that for $t\geq T$, 
\begin{equation*}
    \EB\|\brho_t\|\ind_{G_T}\lesssim\EB\|\bZ_t-\bz^*\|^2\ind_{G_T}\leq C^\prime_T\alpha_t. 
\end{equation*}
Therefore, we know that
\begin{equation*}
    \EB\sum_{t=T}^{\infty}\frac{\norm{\brho_t}\ind_{G_T}}{\sqrt{t+1}}\lesssim\sum_{t=T}^{\infty}(t+1)^{-(a+\frac12)}<\infty. 
\end{equation*}
Hence on $G_T$, we know that
\begin{equation*}
    \sum_{t=T}^{\infty}\frac{\norm{\brho_t}}{\sqrt{t+1}}<\infty
\end{equation*}
and the conclusion holds. However, since $\bZ_t\to\bz^*$ almost surely, 
\begin{equation*}
    \PB\prn{\bigcup_{T=1}^\infty G_T}=1. 
\end{equation*}
Therefore, we know that with probability $1$, 
\begin{equation*}
    \sum_{t=0}^{\infty}\frac{\norm{\brho_t}}{\sqrt{t+1}}<\infty.\qedhere
\end{equation*}
\end{proof}

\subsection{Proof of Proposition~\ref{prop:poisson_eq} and Lemma~\ref{lem:poisson_properties}}
\begin{proof}
Denote $\bq(\btheta,s)=(\bD_s-\bD_{d^\pi})\bh(\btheta)$. On the finite-dimensional subspace
\[
\gC_0=\brc{\bm f\in\prn{\RB^{\gS\times[m]}}^\gS:\sum_s d^\pi(s)\bm f(s)=0},
\]
the operator $\gI-\gP^\pi$ is invertible.  Indeed, if $(\gI-\gP^\pi)\bm f=0$, then each coordinate of $\bm f$ is harmonic for an irreducible finite-state chain and is therefore constant, and the centering condition forces this constant to be zero. Since $\gC_0$ is finite dimensional, the inverse, which is denoted by $(\gI-\gP^\pi)^{-1}_{\gC_0}$, is bounded. Thus
\[
\bv(\btheta,\cdot)=(\gI-\gP^\pi)^{-1}_{\gC_0}\bq(\btheta,\cdot).
\]
The Lipschitz continuity and boundedness of $\bq$ imply the same properties for $\bv$. Finally, $\bh(\btheta_m)=0$ implies $\bq(\btheta_m,\cdot)=0$, therefore $\bv(\btheta_m,\cdot)=0$.
\end{proof}

\subsection{Proof of Lemma~\ref{lem:boundedness}}
\begin{proof}
Set $V_{\max}=(1-\gamma)^{-1}$. By the definition of $B$,
\begin{equation*}
    \btheta^{(0)}\in[-B,V_{\max}+B]^{\gS\times[m]},
    \qquad
    (1-\gamma)B\geq\alpha_0\geq\alpha_t.
\end{equation*}
Suppose inductively that $\btheta^{(t-1)}$ belongs to this box and consider an updated coordinate $\prn{s^{(t)},i}$. Since the reward is supported on $[0,1]$,
\begin{equation*}
    -\gamma B\leq r^{(t)}+\gamma\theta^{(t-1)}\prn{s^{(t+1)},j}\leq1+\gamma\prn{V_{\max}+B}=V_{\max}+\gamma B.
\end{equation*}
Furthermore,
\begin{equation*}
    -\gamma B\geq-B+\alpha_{t-1},\qquad V_{\max}+\gamma B\leq V_{\max}+B-\alpha_{t-1}.
\end{equation*}
If $\theta^{(t-1)}(s^{(t)},i)<-B+\alpha_{t-1}$, then all indicators in the QTD update equal $0$, therefore, 
\begin{equation*}
    \theta^{(t-1)}(s^{(t)},i)<\theta^{(t)}(s^{(t)},i)<-B+2\alpha_{t-1}\leq V_{\max}+B. 
\end{equation*}
Similarly, if $\theta^{(t-1)}(s^{(t)},i)>V_{\max}+B-\alpha_{t-1}$, then all indicators equal $1$ and 
\begin{equation*}
    -B\leq V_{\max}+B-2\alpha_{t-1}\leq\theta^{(t)}(s^{(t)},i)<\theta^{(t-1)}(s^{(t)},i). 
\end{equation*}
Otherwise, 
\begin{equation*}
    -B\leq\theta^{(t-1)}(s^{(t)},i)-\alpha_{t-1}\leq\theta^{(t)}(s^{(t)},i)\leq\theta^{(t-1)}(s^{(t)},i)+\alpha_{t-1}\leq V_{\max}+B. 
\end{equation*}
The result follows by induction.
\end{proof}

\subsection{Proof of Lemma~\ref{lem:perturbation_tail}}
\begin{proof}
Since $\bXi^{(t)}$ is a bounded martingale difference sequence, the martingale
\begin{equation*}
    \sum_{t=0}^{n}\alpha_t\bXi^{(t)}
\end{equation*}
is $L^2$ bounded and thus converges almost surely.

By Poisson equation and Equation~\eqref{eq:markov_decomposition}, $\bq(\btheta^{(t)},s^{(t+1)})=\bw_t+\bphi_t$. By Lemma~\ref{lem:poisson_properties}, $\bphi_t$ is also a bounded martingale difference sequence and $\sum_{t=0}^{n}\alpha_t\bphi^{(t)}$ converges almost surely.

For $\bw_t$, we denote $\bv_t=\bv(\btheta^{(t)},s^{(t+1)})$ summation by parts gives that, for $n\leq\ell$,
\begin{align*}
    &\sup_{l>n}\norm{\sum_{t=n}^{\ell}\alpha_t(\bv_t-\bv_{t+1})}\\
    =&\sup_{l>n}\norm{\alpha_n\bv_n-\alpha_\ell\bv_{\ell+1}
    +\sum_{t=n+1}^{\ell}(\alpha_t-\alpha_{t-1})\bv_t}\\
    \lesssim&2\alpha_n+\sum_{t=n+1}^\infty (t+1)^{-(1+a)}\to0.
\end{align*}
Finally, by Lipschitz continuity of $\bv$ in $\btheta$,
\begin{align*}
    &\sup_{l>n}\norm{\sum_{t=n}^{\ell}\alpha_t(\bw_t-\bv_t+\bv_{t+1})}\\
    \lesssim&\sup_{l>n}\sum_{t=n}^{\ell}\alpha_t\norm{\btheta^{(t+1)}-\btheta^{(t)}}\\
    \lesssim&\sum_{t=n}^\infty\alpha_t^2\to0.
\end{align*}
Therefore, $\sum_{t=0}^\infty\alpha_t\bq(\btheta^{(t)},s^{(t+1)})$ converges almost surely and the conclusion follows. 
\end{proof}

\subsection{Proof of Lemma~\ref{lem:perturbed_solution_consistency}}
\begin{proof}
First, we know that
\begin{equation*}
    \btheta^{(t+1)}
    =\btheta^{(t)}-\alpha_t\brk{\bD_{d^\pi}\bh\prn{\btheta^{(t)}}+\bV^{(t)}}.
\end{equation*}
Lemma~\ref{lem:boundedness} implies that $\what{\btheta}$ is bounded. For $u\in[\varsigma_t,\varsigma_{t+1})$, set
\begin{equation*}
    \bar{\btheta}(u)=\btheta^{(t)},
    \qquad
    \bU(u)=-\bV^{(t)}.
\end{equation*}
Then $\what{\btheta}$ is absolutely continuous and, for almost every such $u$,
\begin{equation*}
    \dot{\what{\btheta}}(u)-\bU(u)
    =-\bD_{d^\pi}\bh\prn{\bar{\btheta}(u)}.
\end{equation*}
The one-step QTD update is uniformly bounded, and therefore
\begin{equation}\label{eq:interpolation_left_endpoint}
    \sup_{u\in[\varsigma_t,\varsigma_{t+1}]}
    \norm{\what{\btheta}(u)-\bar{\btheta}(u)}
    \leq C\alpha_t\longrightarrow0.
\end{equation}
Denote $\gH(\btheta)=\{-\bD_{d^\pi}\bh(\btheta)\}$ and recall the $\delta$-enlargement used by~\citet{benaim2006dynamics},
\begin{equation*}
    \gH^\delta(\btheta)
    =\brc{\bz:\ \exists\widetilde{\btheta},\
    \norm{\widetilde{\btheta}-\btheta}<\delta,\
    \operatorname{dist}\prn{\bz,\gH(\widetilde{\btheta})}<\delta}.
\end{equation*}
Taking $\widetilde{\btheta}=\bar{\btheta}(u)$ in this definition and using Equation~\eqref{eq:interpolation_left_endpoint}, there exists a deterministic piecewise-constant function $\delta(u)\downarrow0$ such that
\begin{equation*}
    \dot{\what{\btheta}}(u)-\bU(u)
    \in\gH^{\delta(u)}\prn{\what{\btheta}(u)}
\end{equation*}
for almost every sufficiently large $u$.

It remains to verify that as $u\to\infty$,
\begin{equation*}
    \sup_{0\leq v\leq T}
    \norm{\int_u^{u+v}\bU(r)\rd r}
    \to0
    \qquad\text{as }u\to\infty.
\end{equation*}
However, according to Lemma~\ref{lem:perturbation_tail}, almost surely we have
\begin{align*}
    &\lim_{u\to\infty}\sup_{0\leq v\leq T}
    \norm{\int_u^{u+v}\bU(r)\rd r}\\
    \leq&\lim_{n\to\infty}\prn{\sup_{l>n}\norm{\sum_{t=n}^l\alpha_t\bV_t}+2\alpha_n\sup_{t\geq1}\norm{\bV_t}}\to0
\end{align*}
Therefore, we have verified the conditions for perturbed solutions in~\citet{benaim2006dynamics} and the proof is completed.
\end{proof}

\subsection{Proof of Lemma~\ref{lem:second_order_remainder_markov}}
\begin{proof}
We construct the same Lyapunov function as in the proof of Lemma~\ref{lem:second_order_remainder}. Define $V(\bz)=\bz^\top\bQ \bz$ where $\bM^\top\bQ+\bQ\bM=\bI$. Let $0<p_-\le p_+$ be the minimal and maximal eigenvalues of $\bQ$.
By Lemma~\ref{lem:poisson_properties}, for all sufficiently large $t$, 
\begin{equation*}
    \frac12\norm{\bDelta_t}\leq\norm{\widetilde{\bDelta}_t}\le2\norm{\bDelta_t}.
\end{equation*}

By Lemma~\ref{lem:poisson_properties}, on the event $\{\tau_T> t\}$, 
\begin{equation}\label{eq:rhotilde_x}
    \norm{\widetilde{\brho}_t}\le C\brk{\norm{\wtilde{\bDelta}_t}^2+\alpha_t}, 
\end{equation}
for sufficiently large $T$. 

Set $\bx=\widetilde{\bDelta}_t$. As in Section~\ref{section:proof_lem_second_oder_remainder}, for all large $t$,
\begin{equation*}
    V((\bI-\alpha_t\bM)\bx)\le V(\bx)-c_0\alpha_t\norm{\bx}^2.
\end{equation*}
Let $\bz=(\bI-\alpha_t\bM)\bx$.  Since $\beps_t$ is a bounded martingale difference sequence,
\begin{equation*}
    \EB\brk{V(\bz-\alpha_t\beps_t)\mid\gF_{t-1}}=V(\bz)+\alpha_t^2\EB\brk{\beps_t^\top\bQ\beps_t\mid\gF_{t-1}}\le V(\bz)+C\alpha_t^2.
\end{equation*}

However, the residual $\widetilde{\brho}_t$ has extra terms besides $\brho_t$, which is the main difference from the proof in Section~\ref{section:proof_lem_second_oder_remainder}. We therefore bound its contribution pathwise before taking conditional expectations:
\begin{align*}
    &\abs{V(\bz-\alpha_t\beps_t-\alpha_t\widetilde{\brho}_t)-V(\bz-\alpha_t\beps_t)}\\
    \le&2p_+\alpha_t\norm{\bz-\alpha_t\beps_t}\norm{\widetilde{\brho}_t}+p_+\alpha_t^2\norm{\widetilde{\brho}_t}^2.
\end{align*}
On $\{\tau_T>t\}$, use boundedness of $\beps_t$ and Equation~\ref{eq:rhotilde_x}, for sufficiently large $T$ and $t>T$,
\begin{align*}
    &2p_+\alpha_t\norm{\bz-\alpha_t\beps_t}\norm{\widetilde{\brho}_t}
    +p_+\alpha_t^2\norm{\widetilde{\brho}_t}^2\\
    \leq& 2Cp_+\alpha_t(\delta+C\alpha_t)\norm{\bx}^2+C\alpha_t^2\\
    \leq& \frac{c_0}{2}\alpha_t\norm{\bx}^2+C\alpha_t^2,
\end{align*}
where the last inequality follows from that $\delta>0$ is sufficiently small and $T$ is sufficiently large. 

Therefore, there exist $c_1,C>0$ such that
\begin{equation*}
    \EB\brk{V(\widetilde{\bDelta}_{t+1})\ind\{\tau_T>t+1\}\mid\gF_t}
    \le(1-c_1\alpha_t)V(\widetilde{\bDelta}_t)\ind\{\tau_T>t\}+C\alpha_t^2.
\end{equation*}
Define
\[
u_t=\EB\brk{V(\widetilde{\bDelta}_t)\ind\brc{\tau_T>t}}.
\]
Then
\begin{equation*}
    u_{t+1}\le(1-c_1\alpha_t)u_t+C\alpha_t^2.
\end{equation*}
Lemma A.12 in~\citet{li2023online} yields $u_t\le K\alpha_t$.  Since $G_T\subseteq\{\tau_T>t\}$, Lemma~\ref{lem:second_order_remainder_markov} holds.
\end{proof}